\theoremstyle{definition}
\newtheorem{theorem}{Theorem}[section]
\newtheorem{lemma}[theorem]{Lemma}
\newtheorem{corollary}[theorem]{Corollary}
\newtheorem{definition}{Definition}[section]
\newcommand{\removed}[1]{}
\newcommand{\1}{{\mathds{1}}}
\newcommand{\x}{{\textrm x}}
\newcommand{\y}{{\textrm y}}
\newcommand{\z}{{\textrm z}}
\renewcommand{\v}{{\textrm v}}
\renewcommand{\u}{{\textrm u}}
\renewcommand{\a}{{\textrm a}}
\newcommand{\0}{{\textrm 0}}
\newcommand{\w}{{\textrm w}}
\newcommand{\I}{\textrm{I}}
\newcommand{\W}{\textrm{W}}
\newcommand{\U}{\textrm{U}}
\renewcommand{\P}{\textrm{P}}
\newcommand{\X}{\textrm{X}}
\newcommand{\sign}{{\textrm{sign}}}
\newcommand{\cB}{\mathcal{B}}
\newcommand{\cD}{\mathcal{D}}
\newcommand{\cP}{\mathcal{P}}
\newcommand{\cW}{\mathcal{W}}
\newcommand{\cX}{\mathcal{X}}
\newcommand{\cY}{\mathcal{Y}}
\newcommand{\cN}{\mathcal{N}}
\newcommand{\bb}{\mathbb}
\newcommand{\R}{\bb R}
\newcommand{\bS}{\bb S}
\newcommand{\E}{\bb E}
\title{Adversarial Robustness is at Odds with Lazy Training}
\author{%
  Yunjuan Wang \\
  Department of Computer Science\\
  Johns Hopkins University\\
  Baltimore, MD, 21218 \\
  \texttt{ywang509@jhu.edu} \\
   \And
   Enayat Ullah \\
   Department of Computer Science\\
   Johns Hopkins University\\
   Baltimore, MD, 21218 \\
   \texttt{enayat@jhu.edu} \\
   \AND
   Poorya Mianjy \\
   Department of Computer Science\\
   Johns Hopkins University\\
   Baltimore, MD, 21218 \\
   \texttt{mianjy@jhu.edu} \\
   \And
   Raman Arora \\
   Department of Computer Science\\
   Johns Hopkins University\\
   Baltimore, MD, 21218 \\
   \texttt{arora@cs.jhu.edu} \\
}
\begin{document}

\maketitle

\begin{abstract}

Recent works show that adversarial examples exist for random neural networks~\citep{daniely2020most} and that these examples can be found using a single step of gradient ascent~\citep{bubeck2021single}. In this work, we extend this line of work to ``\emph{lazy training}'' of neural networks -- a dominant model in deep learning theory in which neural networks are provably efficiently learnable. We show that over-parametrized neural networks that are guaranteed to generalize well and enjoy strong computational guarantees remain vulnerable to attacks generated using a single step of gradient ascent. 

\end{abstract}

\section{Introduction}
Despite the tremendous success of deep learning, recent works have demonstrated that neural networks are extremely susceptible to  attacks. One such vulnerability is due to arbitrary adversarial corruption of data at the time of prediction, commonly referred to as \emph{inference-time attacks}~\citep{biggio2013evasion,szegedy2013intriguing}. Such attacks are catastrophically powerful, especially in settings where adversaries can directly access the model parameters. By adding crafted imperceptible perturbations on the natural input, such attacks are easily realized across various domains, including  in computer vision where an  autonomous driving car may misclassify traffic signs~\citep{sitawarin2018rogue}, in natural language processing  where automatic speech recognition misinterprets the  meaning of a sentence~\citep{schonherr2018adversarial}, among many others. 

The potential threat of adversarial examples has led to serious concerns regarding the security and  trustworthiness of neural network-based models that are increasingly being deployed in real-world  systems. It is crucial, therefore, to understand why trained neural networks classify clean data with high accuracy yet remain extraordinarily fragile due to strategically induced perturbations.

One of the earliest works to describe adversarial examples was that of~\citet{szegedy2013intriguing}, motivated by the need for networks that not only generalize well but are also robust to small perturbations of its input. This was followed by a body of works that finds adversarial examples using various methods~\citep{goodfellow2014explaining,papernot2016limitations,madry2017towards,carlini2017adversarial,carlini2017towards}. Although lots of defense algorithms have been designed, most of them are quickly defeated by stronger attacks~\citep{carlini2017adversarial,carlini2019evaluating,tramer2020adaptive}. Much of the prior work has focused on this empirical ``arms race'' between adversarial defenses and attacks; yet,  a deep theoretical understanding of why  adversarial examples exist has been somewhat limited.

Recently, a line of work theoretically constructs adversarial attacks for random neural networks. Inspired by the work of \citet{shamir2019simple}, \citet{daniely2020most} prove that small $\ell_2$-norm adversarial perturbations can be found by multi-step gradient ascent method for random ReLU networks with small width -- each layer has vanishingly small width compared to the previous layer. \citet{bubeck2021single} generalize this result to two-layer randomly  initialized networks with relatively large network width and show that a single step of gradient ascent suffices to find adversarial examples. \citet{bartlett2021adversarial} further generalize this result to random multilayer ReLU  networks. However, all of the works above focus on random neural networks and fail to explain why adversarial examples exist for neural networks trained using stochastic gradient descent. In other words,  \emph{why do neural networks that are guaranteed to generalize well remain vulnerable to adversarial attacks?}

In this paper, we make progress toward addressing that question. In particular, we show that adversarial examples can be found easily using a single step of gradient ascent for neural networks trained using first-order methods. Specifically, we focus on  over-parametrized two layer ReLU networks which have been studied extensively in recent years due to their favorable computational aspects. Several works have shown that under mild over-parametrization (size of the network is polynomial in the size of the training sample), the weights of two-layer ReLU networks stay close to initialization throughout the training of the network using gradient descent-based methods~\citep{allen2019convergence,ji2019polylogarithmic,arora2019fine}. In such a ``lazy regime'' of training, the networks are locally linear which allows us to give computational guarantees for minimizing the training loss (aka,  optimization) as well as bound the test loss (aka, generalization). 
The purpose of our work is to show the deficiency of the lazy regime even though it is the dominant framework in the theory of deep learning.

Our key contributions are as follows. We investigate the robustness of over-parametrized neural networks, trained using lazy training, against inference-time attacks. Building on the ideas of~\citet{bubeck2021single}, we show that such trained neural networks still suffer from adversarial examples which can be found using a single step of gradient ascent. 

In particular, we show that an adversarial  perturbation of size  $O(\frac{\|\x\|}{\sqrt{d}})$ in the direction of the gradient suffices to flip the prediction of two-layer ReLU networks trained in the lazy regime -- these are networks with all weights at a distance of $O(\frac{1}{\sqrt{m}})$ from their initialization, where $m$ is the width of the network.
To the best of our knowledge, this is the first work which shows that networks with small generalization error are still vulnerable to adversarial attacks.

Further, we validate our theory empirically. We confirm that a perturbation of size $O(1/\sqrt{d})$ suffices to generate a strong adversarial example for a trained two-layer ReLU networks. 

The rest of the paper is organized as follows. In Section~\ref{sec:prelim}, we introduce the preliminaries, give the problem setup, and discuss the related work. We present our main result in Section~\ref{sec:main}, and give a proof sketch in Section~\ref{sec:proofsketch}. In Section~\ref{sec:exp} we provide empirical support for our theory, and conclude with a discussion in Section~\ref{sec:conclusion}.

\section{Preliminaries}\label{sec:prelim}

\paragraph{Notation} Throughout the paper, we denote scalars, vectors and matrices, respectively, with lowercase italics, lowercase bold and uppercase bold Roman letters, e.g. $u$, $\u$, and $\U$. We use $[m]$ to denote the set $\{1,2,\ldots, m\}$. We use $\|\cdot\|$ or $\|\cdot\|_2$ exchangeably for $\ell_2$-norm.
Given a matrix $\U=[\u_1, \ldots, \u_m] \in \R^{d\times m}$, we define $\|\U\|_{2,\infty}=\max_{s\in[m]}\|\u_s\|$.
We use $\cB_2(\u,R)$ to denote the $\ell_2$ ball centered at $\u$ of radius $R$. For any $\U \in \R^{d \times m}$, we use  $\cB_{2,\infty}(\U,R)= \{\U' \in \R^{d \times m} \mid \|\U'-\U\|_{2,\infty}\leq R \}$ to denote the $\ell_{2,\infty}$ ball centered at $\U$
of radius $R$. For any function $f:\R^d\rightarrow\R$, $\nabla f$ denotes the gradient vector. We define the standard normal distribution as $\cN(0,1)$, and the standard multivariate normal distribution as $\cN(0, \I_d)$. 
We use $\bS^{d-1}$ to denote the unit sphere in $d$ dimensions. We use the standard O-notation ($O$ and $\Omega$). 

\subsection{Problem Setup}
Let $\cX\subseteq\R^d$ and $\cY$ denote the input space and the label space, respectively. In this paper, we focus on the binary classification setting where $\cY=\{-1,+1\}$. We assume that the data $(\x,y)$ is drawn from an unknown joint distribution $\cD$ on $\cX \times \cY$. For a function $f_{\w}: \cX \to \cY$ parameterized by $\w$ in some parameter space $\cW$, the \emph{generalization error} captures the probability that $f_\w$ makes a mistake on a sample drawn from $\cD$:
\begin{align*}
\text{generalization error}:=P_{(\x,y)\sim\cD}(yf_\w(\x)\leq 0). 
\end{align*}
For a fixed $\x \in \cX$, we consider norm-bounded adversarial perturbations given by $\cB_2(\x, R)$, for some fixed perturbation budget $R$. The robust loss on $\x$ is defined as
\begin{align*}
\ell_R(\w; \x,y) = \max_{\x'\in\cB_2(\x,R)}\1\left(yf_\w(\x')\leq 0\right). 
\end{align*}
The \emph{robust error} then captures the probability that there exists an adversarial perturbation on samples drawn from $\cD$ such that $f_\w$ makes a mistake on the perturbed point
\begin{align*}
L_R(\w; \cD) := \E_{(\x,y) \sim \cD} [\ell_R(\w;\x,y)]. 
\end{align*}

In this work, we focus on two-layer neural networks with ReLU activation, parameterized by a pair of weight matrices $(\a,\W)$, computing the following function:
\begin{equation*}
f(\x; \a,\W):=\frac{1}{\sqrt{m}}\sum_{s=1}^m a_s \sigma(\w_s^\top \x).
\end{equation*}
Here, $m$ corresponds to the number of hidden nodes, i.e., the width of the network;  $\sigma(z)=\max\{0,z\}$ is the ReLU activation function, and $\W= [\w_1,\ldots,\w_m] \in \R^{d \times m}$ and $\a= [a_1,\ldots,a_m] \in \R^m$ denote the top and the bottom layer weight matrices, respectively. 

In this work, we study the robustness of neural networks which are close to their initialization. In particular, we are interested in the \emph{lazy regime}, defined as follows. 
\begin{definition}[Lazy Regime]\label{def:lazyregime}
Initialize the top and bottom layer weights as $a_{s}\sim \operatorname{unif}(\{-1,+1\})$ and $\w_{s,0}\sim \cN(\0,\I_d), \forall s\in[m]$.
Let $\W_0= [\w_{s,1},\ldots,\w_{s,m}] \in \R^{d \times m}$ The \emph{lazy regime} is the set of all networks parametrized by $(\a, \W)$, such that $\W \in \cB_{2,\infty}(\W_0, C_0/\sqrt{m})$, for some constant $C_0$.
\end{definition}

\subsection{Related Work}
\paragraph{Adversarial Examples.}
The field of adversarially robust machine learning has received significant attention starting with the seminal work of \citet{szegedy2013intriguing}.
Most existing works focus on proposing methods to generate adversarial examples, for example, the fast gradient sign method (FGSM)~\citep{goodfellow2014explaining}, the projected gradient descent (PGD) method~\citep{madry2017towards}, the Carlini \& Wagner attack \citep{carlini2017towards}, to name a few.

Most related to our work are those on proving the existence of adversarial examples on random neural networks. This line of work initiated  with the work of \citet{shamir2019simple}, where authors propose an algorithm to generate bounded $\ell_0$-norm adversarial perturbation with guarantees for arbitrary deep networks.
Shortly afterwards, \citet{daniely2020most} showed that multi-step gradient descent can find adversarial examples for random ReLU networks with small width (i.e. $m=o(d)$).
More recently, work of  \citet{bubeck2021single} shows that a single gradient ascent update finds adversarial examples for sufficiently wide randomly initialized ReLU networks.
This was later extended to multi-layer networks with the work of \citet{bartlett2021adversarial}. 
A very recent work of  \cite{vardi2022gradient} shows that gradient flow induces a bias towards non-robust networks. Our focus and techniques here are different -- we show that  independent of the choice of the training algorithm, all neural networks trained in the lazy regime remain vulnerable to adversarial attacks.

\paragraph{Over-parametrized Neural Networks.}
Our investigation into the existence of adversarial examples in the lazy regime is motivated by recent advances in deep learning theory.
In particular, a series of recent works establish  generalization error bounds for first-order optimization methods under the assumption that the weights of the network do not move much from their initialization~\citep{jacot2018neural, allen2019convergence, arora2019fine, chen2019much, ji2019polylogarithmic, cao2020generalization}.
In particular, \citet{chizat2018lazy} recognize  that ``lazy training'' phenomenon is due to a choice of scaling that makes the model behave as its linearization around the initialization.
Interestingly, linearity has been hypothesized as key reason for existence of adversarial examples in neural networks \citep{goodfellow2014explaining}.
Furthermore, this was used to prove existence of adversarial attacks for random neural networks \citep{bubeck2021single,bartlett2021adversarial} and also provide the basis of our analysis in this work.

\section{Main Results}
\label{sec:main}

In this section, we present our main result.
We assume that the data is normalized so that
$\|\x\|_2=1$.
We show that under the lazy regime assumption, a single gradient step on $f$ suffices to find an adversarial example to flip the prediction, given the network width is sufficiently wide but not excessively wide.

\begin{theorem}[Main Result]\label{thm:main}
Let $\w_{s,0}\sim N(0,\I_d)$,  $\a\sim\textrm{unif}(\{\-1,+1\})$ and $\gamma\in(0,1)$.
For any given $\x\in\bS^{d-1}$, with probability at least $1-\gamma$, the following holds simultaneously for all $\W \in \mathcal{B}_{2,\infty}(\W_0, C_0/\sqrt{m})$ 
\begin{align*}
\sign(f(\x;\a,\W))\neq\sign(f(\x+\delta; \a,\W))
\end{align*}
where
$\delta=\eta \nabla_{\x} f(\x;\a,\W)$, provided that the following conditions are satisfied:
\begin{enumerate}[label*=\arabic*.,leftmargin=8pt]
\item {\bf Step Size:} $\displaystyle  |\eta|=\frac{C_2}{\|\nabla f(x;\a,\W)\|^2}$,
\item
{\bf Width requirement:} 
$\displaystyle \max\left\{d^{2.4},C_3\log\Big(\frac{1}{\gamma}\Big)\right\} \leq m\leq C_4\exp(d^{0.24}),$
\end{enumerate}
where $C_0, C_2, C_3, C_4$ are constants independent of width $m$ and dimension $d$. 
\end{theorem}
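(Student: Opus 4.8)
The plan is to control the first-order behaviour of $f$ under the gradient step and show that the linear term dominates both the clean margin and the curvature error. Writing the exact decomposition
\[
f(\x+\delta;\a,\W) = f(\x;\a,\W) + \langle \nabla_\x f(\x;\a,\W),\, \delta\rangle + E,
\]
and substituting $\delta = \eta\,\nabla_\x f(\x;\a,\W)$ with $|\eta| = C_2/\|\nabla_\x f(\x;\a,\W)\|^2$, the middle term equals $\sign(\eta)\,C_2$. Choosing $\sign(\eta) = -\sign(f(\x;\a,\W))$ makes this term push the output across zero by the fixed constant $C_2$. Thus it suffices to prove three facts, each holding with high probability and \emph{uniformly} over the lazy ball $\cB_{2,\infty}(\W_0, C_0/\sqrt m)$: (i) the clean output is bounded, $|f(\x;\a,\W)| = O(1)$; (ii) the gradient norm concentrates, $\|\nabla_\x f(\x;\a,\W)\|^2 = \Theta(d)$, so that $\eta$ is well defined and the step $\|\delta\| = \Theta(1/\sqrt d)$ is small; and (iii) the linearization error satisfies $|E| < C_2/2$. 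Given these, picking $C_2$ a large enough constant forces $\sign(f(\x+\delta)) \neq \sign(f(\x))$.

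For (i) and (ii) I would first establish the estimates at the initialization $\W_0$. Since $\|\x\| = 1$, each $\w_{s,0}^\top\x \sim \cN(0,1)$, so $f(\x;\a,\W_0) = \frac{1}{\sqrt m}\sum_s a_s\sigma(\w_{s,0}^\top\x)$ is an average of $m$ independent, mean-zero, bounded-variance terms and concentrates at $O(1)$; likewise $\|\nabla_\x f(\x;\a,\W_0)\|^2 = \frac1m\|\sum_{s:\w_{s,0}^\top\x>0} a_s\w_{s,0}\|^2$ concentrates around $d/2$ by splitting into diagonal terms $\sum\|\w_{s,0}\|^2$ and mean-zero cross terms. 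I would then transfer both estimates to an arbitrary $\W$ in the lazy ball using $1$-Lipschitzness of $\sigma$: the output moves by at most $\frac1{\sqrt m}\sum_s\|\w_s-\w_{s,0}\| \le C_0$, while the gradient moves only by a lower-order term, coming from the $O(\sqrt m)$ neurons whose activation switches between $\W_0$ and $\W$ (each within the $\ell_{2,\infty}$ radius $C_0/\sqrt m$) together with the weight displacement itself. This reduction avoids a costly covering argument over the ball.

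The crux — and the main obstacle — is (iii). The key observation is that the per-neuron ReLU linearization error is nonnegative and supported only on neurons whose pre-activation sign flips between $\x$ and $\x+\delta$, with magnitude at most $|\w_s^\top\delta|$. So I would (a) show the number of flipping neurons concentrates near $\Theta(m/\sqrt d)$, since a flip requires $|\w_s^\top\x| \lesssim |\w_s^\top\delta| = \Theta(1/\sqrt d)$ and $\w_s^\top\x$ is roughly standard normal; (b) bound $|\w_s^\top\delta|$ for the flipping neurons; and (c) exploit the randomness of the signs $a_s$ to get cancellation in $E = \frac1{\sqrt m}\sum_{s\ \text{flips}} a_s b_s$ with $b_s \ge 0$. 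The delicate point is that $\delta$ itself depends on the $a_s$ through $\nabla_\x f$, so the signs are not independent of the flip pattern; I would isolate this dependence (the diagonal contribution $a_s\|\w_s\|^2/\sqrt m$ to $\w_s^\top\delta$, which is of order $1/\sqrt m$ and hence dominated by the order-$1/\sqrt d$ fluctuation once $m \gg d$) and bound the resulting bias separately from the cancelling fluctuation. This argument must again be made uniform over the lazy ball, which is where the quantitative width constraints enter: the lower bound $m \ge d^{2.4}$ is what makes the error term and the sign-correlation bias lower order than the constant $C_2$, while the upper bound $m \le C_4\exp(d^{0.24})$ keeps the union bound over the $m$ neurons' tail events from overwhelming the target failure probability $\gamma$.

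Finally I would assemble the pieces: on the intersection of the concentration events for (i)--(iii), which has probability at least $1-\gamma$ under the stated width conditions (the requirement $C_3\log(1/\gamma)\le m$ absorbing the dependence on $\gamma$), the identity $f(\x+\delta) = f(\x) - \sign(f(\x))\,C_2 + E$ together with $|f(\x)| + |E| < C_2$ yields the claimed sign flip for every $\W$ in the ball simultaneously.
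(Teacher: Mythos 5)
Your skeleton matches the paper's: both reduce the claim to (i) $|f(\x;\a,\W)|=O(1)$, (ii) $\|\nabla_\x f(\x;\a,\W)\|=\Omega(\sqrt d)$, and (iii) a remainder that is negligible after multiplying by $|\eta|$, all uniformly over the lazy ball. The paper's Lemmas~\ref{lemma:f_bd} and~\ref{lemma:grad_bd} are exactly your (i) and (ii), proved essentially as you describe (estimate at $\W_0$, transfer via $1$-Lipschitzness of $\sigma$ and the $O(\sqrt m)$ bound on sign-switching neurons from Lemma~\ref{lemma:ntk_diffsgn_wxchange}). Where you genuinely differ is (iii): the paper bounds the \emph{vector} increment $\sup_{\|\delta\|\le R}\|\nabla f(\x+\delta)-\nabla f(\x)\|=o(\sqrt d)$ (Lemma~\ref{lemma:unif_grad_diff_bd}) and feeds it into the integral form of the remainder, whereas you bound the \emph{scalar} Taylor error $E=\frac1{\sqrt m}\sum_s a_s b_s$ directly, using that $b_s\ge0$ is supported on sign-flipping neurons with $b_s\le|\w_s^\top\delta|$. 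Your target is a priori weaker (one linear functional of the gradient increment rather than its norm), and your observation that the triangle inequality alone gives only $O(\sqrt m/d)$ --- useless for $m\ge d^{2.4}$ --- correctly identifies why cancellation in the signs $a_s$ is indispensable; the paper obtains the analogous cancellation through the distributional identity $\frac1{\sqrt m}\sum_s a_s\P\w_{s,0}(\cdots)\overset{(d)}{=}\sqrt{|S|/m}\,\z$ with $\z\sim\cN(0,\I_{d-1})$.

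The step you have not supplied, and where essentially all of the paper's technical work goes, is making the cancellation bound on $E$ hold \emph{simultaneously} for all $\W\in\cB_{2,\infty}(\W_0,C_0/\sqrt m)$: once you take $\sup_\W$, you cannot invoke pointwise concentration of $\sum_s a_s b_s(\W)$, because the flip pattern and the $b_s$ are selected adversarially after the randomness is drawn. The paper resolves this with (a) an $\varepsilon$-net of size $(10/\varepsilon)^{2d}$ over the pairs $(r,\delta)$ with $\varepsilon=m^{-4/7}d^{-4/7}$, (b) a Chernoff bound showing that at most $O(\sqrt m)$ neurons can change activation within an $\varepsilon$-ball of any net point for \emph{some} admissible $\W$ (the $X_s(\delta)$ argument in Lemma~\ref{lemma:diff_sign} and Lemma~\ref{lemma:unif_grad_diff_bd}), and (c) Gaussian concentration of $\|\frac1{\sqrt m}\sum_{s\in S}a_s\w_{s,0}\|$ uniformly over all subsets $S$ of size $O(\sqrt m)$; the constraints $m\ge d^{2.4}$ and $m\le C_4\exp(d^{0.24})$ arise precisely from balancing the net cardinality against these tail bounds, not merely from absorbing the dependence on $\gamma$. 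Your plan gestures at this (``must be made uniform over the lazy ball'') but gives no mechanism, and without one the cancellation step does not go through; filling it in would reconstruct the paper's net-plus-neuron-counting machinery. Your isolation of the diagonal bias $a_s\eta\|\w_s\|^2/\sqrt m=O(1/\sqrt m)$ arising from the dependence of $\delta$ on $\a$ is correct, and is handled implicitly in the paper by bounding the remainder uniformly over all $\delta$ with $\|\delta\|\le R$, which removes that dependence issue entirely.
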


Several remarks are in order.

We note that the setting in Theorem~\ref{thm:main} is motivated by the popular gradient ascent based attack model which has been shown to be fairly effective in practice. While in practice, we perform multiple steps of projected gradient ascent to generate an adversarial perturbation, here we show that a single step of gradient suffices with a small step size of $O(1/d)$. This attests to the claim that over-parameterized neural networks trained in the lazy regime remain quite vulnerable to adversarial examples. We confirm our findings empirically in Section~\ref{sec:exp}.

Our analysis suggests that an attack size of $O(1/\sqrt{d})$ suffices, i.e., for any $\x$ a perturbation $\delta$ of size $\|\delta\|=O(1/\sqrt{d})$ can flip the sign of $f(\x)$. This is a relatively small budget for adversarial perturbations, especially for for high-dimensional data. Indeed, in practice, we find that a noise budget of this size allows for adversarial attacks on neural networks trained on real datasets such as MNIST and CIFAR-10~\citep{madry2017towards}. Interestingly, we find in our experiments, that $O(1/\sqrt{d})$ is the ``right'' perturbation budget in the sense that there is a sharp drop in robust accuracy. More details are provided in Section~\ref{sec:exp}. 

Our investigation focuses on the settings wherein the weights of the trained neural network stay close to the initialization.  
Further, given the initialization scheme we consider, the $O(1/\sqrt{m})$ deviation of the incoming weight vector at any neuron of the trained network from its initialization precisely captures the neural tangent kernel (NTK) regime studied in several prior works~\citep{ji2019polylogarithmic,du2018gradient}. This $O(1/\sqrt{m})$ deviation bound is also the largest radius that we can allow in our analysis; see Section~\ref{sec:proofsketch} for further details. In fact, our results would also hold for adversarial training given the same initialization and the bound on the deviation of weights is met. Curiously, our empirical results exhibit a phase transition around the perturbation budget of $O(1/\sqrt{m})$ -- we see in Section~\ref{sec:exp} that robust accuracy increases as we allow the network weights to deviate more.

There is an interesting interplay between the network width and the input dimensionality. Our results require that the network is sufficiently wide (i.e., polynomially large in $d$), but not excessively wide (is sub-exponential in $d$). The upper bound is large enough to allow for the over-parameterization required in the NTK setting, as long as the input dimensionality is sufficiently large.

To get an intuition about how the result follows, we first paraphrase the arguments from \citet{bubeck2021single, bartlett2021adversarial} for random neural networks with weights $\W_0$. We consider a linear model $f(\w;\x) = \frac{1}{\sqrt{d}}\sum_{s=1}^d \w_{s,0}\x_{s}$. Since the initial weights are independent (sampled i.i.d. from $\cN(0,1)$), using standard concentration arguments, we have that for large $d$, with high probability $\vert f(\w_0;\x)\vert \leq O(1)$. In contrast, $\nabla_{\x}f(\w_0;\x) = \w$, so $\Vert \nabla_{\x}f(\w_0;\x) \Vert = \Vert \w\Vert$, which is roughly $\Theta(\sqrt{d})$. Hence, for large enough $d$, a step of gradient ascent suffices to change the sign of $f$.

The above reasoning can be extended to weights that are \emph{close} to random weights. In particular, for arbitrary weights $\W$ close to the initialization, consider the linear model:
$f(\w;\x) = \frac{1}{\sqrt{d}}\sum_{s=1}^d \w_{s}\x_{s}$. Since $\| \w_s - \w_{s,0}\| = O\left(1\right)$, 
thus even in the worst case, the function still behaves as a constant translation of the initial prediction model $f(\w_0;\x)$, and thus is constant. Similarly, $\Vert \nabla f(\w,\x) \Vert = \Omega(\sqrt{d})$.

The work of \citet{bubeck2021single} extends the above observation to ReLU activation exploiting the fact that the function, even though non-linear, is still locally linear.
Note that, in a different context, this local linearity property is central to the Neural Tangent Kernel regime. In particular, the key underlying idea there is based on the linear approximation to the prediction function.
Furthermore, with appropriate initialization, a linear approximation is not only good enough model close to the initialization, but also enables tractable analysis.
We collapse this ``good'' linearization property under the lazy training assumption since that suffices for our purposes.
\begin{wrapfigure}{R}{0.36\textwidth}
\vspace{-30pt}
\centering
\includegraphics[width=0.21\textwidth]{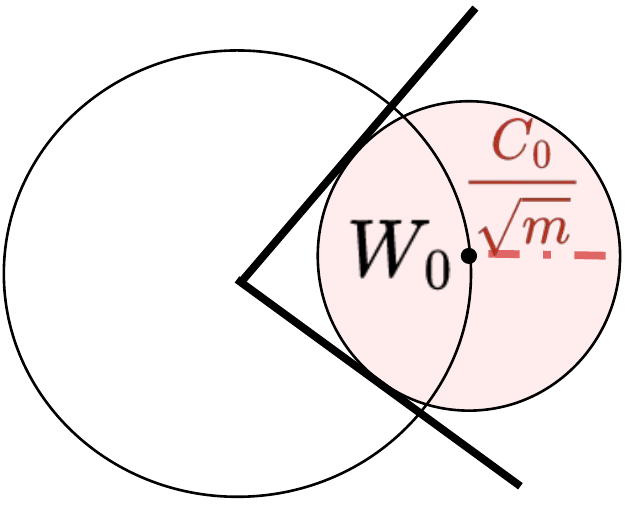}
\vspace{-10pt}
\caption{Cone containing the lazy-regime ball around initialization~$\W_0$.} \label{fig:cone}
\vspace{-20pt}
\end{wrapfigure}
\paragraph{Beyond Lazy regime.}
Our result can be easily extended to a broader class of neural networks. 
In particular, consider the cone $C=\{\W:\exists r>0, r\W\in\cB_{2,\infty}(\W_0,\frac{C_0}{\sqrt{m}})\}$; see Figure~\ref{fig:cone}.
Note that for binary classification, scaling the network weights with a positive factor does not change the prediction. As a result, Theorem~\ref{thm:main} still holds for network weights which belong to the cone $C$ but may not be in the lazy~regime.

\paragraph{Robust Test Error.} As a corollary to Theorem~\ref{thm:main}, we have that none of the networks in the lazy regime are robust, i.e., for all networks in the lazy regime, the robust test error is bounded from below by a constant with high probability.

\begin{corollary}\label{coro:roberrlbd}
Let $\w_{s,0}\sim N(0,\I_d)$,  $\a\sim\textrm{unif}(\{-1,+1\}), R=\tilde O({1}/{\sqrt{d}})$ and $\gamma \in (0,1)$.
In the parameter settings of Theorem \ref{thm:main}, for any data distribution $\cD$ supported on the sphere $\bS^{d-1}$, with probability at least $1-10\gamma$ (over the draw of $\W_{0}$ and $\a$), for all $\W$ s.t. $\W \in \mathcal{B}_{2,\infty}(\W_0, C_0/\sqrt{m})$, the robust error $L_R(\W;\cD)\geq 0.9$.
\end{corollary}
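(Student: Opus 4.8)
The plan is to lift the pointwise guarantee of Theorem~\ref{thm:main} (``for each fixed $\x$, the single gradient step flips the sign with probability at least $1-\gamma$ over the initialization'') to a statement holding simultaneously over the whole lazy-regime ball and for at least a $0.9$-fraction of the mass of $\cD$. The key is that this requires only Fubini's theorem and Markov's inequality; crucially, no union bound over $\x$ is taken, which matters because $\cD$ is arbitrary and may be continuous.

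First I would record the pointwise reduction from a sign flip to robust loss $1$, which is label-independent. Fix $\x$ and a weight matrix $\W$, and let $\delta=\eta\nabla_\x f(\x;\a,\W)$ be the corresponding step, with $\|\delta\|=|\eta|\,\|\nabla_\x f\|=C_2/\|\nabla_\x f\|$. Suppose $\sign f(\x;\a,\W)\neq\sign f(\x+\delta;\a,\W)$ and $\|\delta\|\le R$. Then $\x$ and $\x+\delta$ both lie in $\cB_2(\x,R)$ and receive opposite predictions, so for \emph{either} label $y\in\{-1,+1\}$ one of them is misclassified, forcing $\ell_R(\W;\x,y)=1$. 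The budget condition $\|\delta\|\le R$ holds because the proof of Theorem~\ref{thm:main} supplies a uniform lower bound $\|\nabla_\x f\|=\Omega(\sqrt d)$ over the lazy-regime ball, whence $\|\delta\|=O(1/\sqrt d)\le R$ once $R=\tilde O(1/\sqrt d)$ is taken with a suitable constant.

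Next, for a fixed initialization $(\W_0,\a)$, let $\cS_\x$ denote the event that the step flips the sign \emph{and} stays within budget for \emph{every} $\W\in\cB_{2,\infty}(\W_0, C_0/\sqrt m)$, and set $g(\x)=\1(\cS_\x^c)$. By Theorem~\ref{thm:main}, $\E_{\W_0,\a}[g(\x)]=\P(\cS_\x^c)\le\gamma$ for every fixed $\x$. Exchanging the order of integration by Fubini and writing $G:=\E_{\x\sim\cD}[g(\x)]$,
\[
\E_{\W_0,\a}[G] \;=\; \E_{\x\sim\cD}\,\E_{\W_0,\a}[g(\x)] \;\le\; \gamma .
\]
Markov's inequality then gives $\P_{\W_0,\a}(G\ge 0.1)\le \gamma/0.1 = 10\gamma$. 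On the complementary event $\{G<0.1\}$, which occurs with probability at least $1-10\gamma$, the pointwise reduction yields, for \emph{every} $\W$ in the lazy regime,
\[
L_R(\W;\cD) \;=\; \E_{(\x,y)\sim\cD}[\ell_R(\W;\x,y)] \;\ge\; \E_{\x\sim\cD}[\1(g(\x)=0)] \;=\; 1-G \;>\; 0.9 ,
\]
since $g(\x)=0$ forces $\ell_R(\W;\x,y)=1$ for that $\W$ and any $y$. This is exactly the claim.

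The main obstacle is the quantifier exchange. Theorem~\ref{thm:main} bounds the failure probability for each $\x$ in isolation, and since $\cD$ is unrestricted a direct union bound over its support is unavailable; the Fubini--Markov argument is what sidesteps this, and the factor $10$ in $1-10\gamma$ is precisely the cost of invoking Markov at threshold $0.1$. A secondary technical point is verifying the budget constraint $\|\delta\|\le R$ \emph{uniformly} over the lazy-regime ball, which I fold into $\cS_\x$ above; this is controlled by the gradient lower bound $\|\nabla_\x f\|=\Omega(\sqrt d)$ underlying Theorem~\ref{thm:main} together with the choice $R=\tilde O(1/\sqrt d)$.
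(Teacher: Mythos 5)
Your proposal is correct and follows essentially the same route as the paper: a pointwise application of Theorem~\ref{thm:main}, Fubini to swap the expectation over $\x$ with that over the initialization, and Markov's inequality at threshold $0.1$ to obtain the $1-10\gamma$ guarantee. Your write-up is in fact slightly more careful than the paper's about the quantifier over $\W$ (working with the ``for all $\W$'' event, i.e., the infimum of $\ell_R$ over the lazy-regime ball) and about verifying the perturbation budget $\|\delta\|\le R$, both of which the paper handles only implicitly.
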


We note that our result above is not a contradiction to the prior result of~\cite{gao2019convergence}, which claims that adversarial training finds robust classifiers in overparameterized models. Firstly, our result shows that there is no robust classifier in the lazy regime (Definition~\ref{def:lazyregime}). In contrast, \cite{gao2019convergence} simply assume the existence of robust classifier in an NTK setting; see~\cite[Assumption 5.2]{gao2019convergence}. Indeed, ignoring some of the (potentially unimportant) differences between our setting and that of ~\cite{gao2019convergence}, (e.g., smooth vs. non-smooth activations and tied vs. non-tied weights in the last layer), then our result may suggest that the assumption in \cite{gao2019convergence} is incorrect. 
Secondly, the main result of \cite{gao2019convergence} is to show that the excess robust training error of the output of adversarial training is small. However, this only implies a small robust training error if the best-in-class robust training error is also small, which, again, as we discussed above, does not hold.
Finally, despite the validity of the key assumption in \cite{gao2019convergence}, the authors only provide a bound on robust training error. In contrast, our result shows that robustness may not hold in the sense of robust test error.

\section{Proof Sketch}\label{sec:proofsketch}
In this section, we provide an outline of the proof of Theorem \ref{thm:main}. 
Recall that our goal is to show that for any given $\x\in\bS^{d-1}$, the vector {{
$\delta=\eta \nabla_{\x} f(\x;\a,\W)$
is 
}}
such that $\|\delta\|\ll\|\x\|$, and $\sign(f(\x;\a,\W))\neq \sign(f(\x+\delta;\a,\W))$. 
Assume without loss of generality that $f(\x;\a,\W)>0$, then $\sign(\eta)=-1$. From the fundamental theorem of calculus, we have
$f(\x+\delta)=f(\x)+\int_0^1f'(\x+t\delta)dt.$ 
Using simple algebraic manipulations (see full proof in Appendix \ref{sec:appendix_proof}), we have the following,
\begin{eqnarray}
f(\x\!+\!\eta\nabla f(\x;\a,\W);\a,\W)
&\leq& \overbrace{f(\x;\a,\W)}^{O(1)} - |\eta|\overbrace{\|\nabla f(\x;\a,\W)\|}^{\Omega(\sqrt{d})}\nonumber \\
&&\qquad+|\eta| \overbrace{\sup_{\substack{\delta\in\R^d:\\ \|\delta\|\leq \eta\|\nabla f(\x;\a,\W)\|}} \|\nabla f(\x+\delta;\a,\W)-\nabla f(\x;\a,\W)\|}^{o(\sqrt{d})} \nonumber 
\end{eqnarray}
The goal now is to control each of the three terms on the right hand side.  Note that the  suggested bounds on each of the terms will suffice to flip the label, thereby establishing the main result.

Given any $\x$ and $\W \in \mathcal{B}_{2,\infty}\left(\W_0, \frac{C_0}{\sqrt{m}}\right)$, we need to control:
\begin{enumerate}[label*=\arabic*.,leftmargin=*]
\item The function value of neural network $f(\x;\a,\W)$
\item The size of the gradient of neural network $\|\nabla f(\x;\a, \W)\|$ with respect to the input $\x$
\item The size of the difference of gradients $\|\nabla f(\x;\a,\W)-\nabla f(\x+\delta;\a,\W)\|$ for $\|\delta\|\leq O(1/\sqrt{d})$
\end{enumerate}

\subsection{Bounding the function value}
\begin{lemma}(Informal)
For any $\x$, with probability at least $1-\gamma$, $|f(\x;\a,\W)|\leq 2\sqrt{\log(2/\gamma)}+C_0$ holds for all $\W \in \mathcal{B}_{2,\infty}\bigg(\W_0,\frac{C_0}{\sqrt{m}}\bigg)$, 
provided that $m\geq \Omega(\log(2/\gamma))$.
\end{lemma}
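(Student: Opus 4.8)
The plan is to split the network value into its value at initialization and a perturbation term, writing
$$f(\x;\a,\W) = f(\x;\a,\W_0) + \big(f(\x;\a,\W) - f(\x;\a,\W_0)\big),$$
and to bound the two pieces separately. The crucial observation is that only the first term is random, while the second can be controlled \emph{deterministically} and \emph{uniformly} over the entire ball $\cB_{2,\infty}(\W_0, C_0/\sqrt{m})$. This is what lets us assert the bound simultaneously for all $\W$ in the lazy regime without paying for a union bound or a covering argument over the ball.

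For the perturbation term, I would use that ReLU is $1$-Lipschitz together with the ball constraint. For each neuron, $|\sigma(\w_s^\top\x) - \sigma(\w_{s,0}^\top\x)| \le |(\w_s - \w_{s,0})^\top\x| \le \|\w_s - \w_{s,0}\|\,\|\x\| \le C_0/\sqrt{m}$, using $\|\x\|=1$ and the ball constraint $\|\w_s - \w_{s,0}\| \le C_0/\sqrt{m}$. Summing over the $m$ neurons and accounting for the $1/\sqrt{m}$ prefactor together with $|a_s|=1$ gives $|f(\x;\a,\W) - f(\x;\a,\W_0)| \le \frac{1}{\sqrt m}\cdot m \cdot \frac{C_0}{\sqrt m} = C_0$, a worst-case bound that holds for every $\W$ in the ball at once. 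This accounts for the $C_0$ term in the statement.

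For the initialization term, I would condition on the draw of $\{\w_{s,0}\}$ and treat $f(\x;\a,\W_0) = \frac{1}{\sqrt m}\sum_{s} a_s \sigma(\w_{s,0}^\top\x)$ as a Rademacher sum in the signs $a_s$ (which are independent of the $\w_{s,0}$). Hoeffding's inequality then gives sub-Gaussian tails with variance proxy $\frac{1}{m}\sum_s \sigma(\w_{s,0}^\top\x)^2$. Since $\w_{s,0}^\top\x \sim \cN(0,1)$ when $\|\x\|=1$, this empirical second moment has mean $\E[\sigma(Z)^2]=1/2$, and its summands $\sigma(\w_{s,0}^\top\x)^2$ are sub-exponential, so a Bernstein bound shows it concentrates below a constant once $m \ge \Omega(\log(2/\gamma))$ — this is precisely where the width requirement enters. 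On this good event, the conditional sub-Gaussian tail yields $|f(\x;\a,\W_0)| \le 2\sqrt{\log(2/\gamma)}$ with probability at least $1-\gamma$.

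Combining the two bounds by the triangle inequality gives $|f(\x;\a,\W)| \le 2\sqrt{\log(2/\gamma)} + C_0$ simultaneously for all $\W \in \cB_{2,\infty}(\W_0, C_0/\sqrt m)$. The main subtlety is not any single concentration estimate but rather securing \emph{uniformity} over the ball; the argument sidesteps a covering argument entirely by observing that the $\W$-dependence is confined to the deterministically-bounded perturbation term, so the only randomness that must be tamed — the Rademacher sum at initialization — does not depend on $\W$ at all.
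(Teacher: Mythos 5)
Your proof is correct and follows essentially the same route as the paper: the identical decomposition into the value at initialization plus a perturbation term, the same deterministic Lipschitz bound of $C_0$ holding uniformly over the ball, and a concentration bound of order $\sqrt{\log(2/\gamma)}$ on $f(\x;\a,\W_0)$ for $m\geq\Omega(\log(2/\gamma))$. The only difference is that you re-derive the initialization bound from scratch (conditional Hoeffding plus Bernstein on the empirical second moment) where the paper imports it directly from \citet{bubeck2021single}; both yield the stated constants.
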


\paragraph{Proof Sketch:}
From triangle inequality,
\begin{align*}
    |f(\x; \a, \W)| \leq |f(\x;\a, \W_0)| +|f(\x; \a, \W)- f(\x; \a, \W_0)|. 
\end{align*}

For the first term, we have from \citet{bubeck2021single} that $|f(\x;\a, \W_0)|=O(1)$. 
For the second term $|f(\x; \a, \W)- f(\x; \a, \W_0)| \leq O\left(1\right)$ owing to the Lipschitzness of ReLU.
Note that $O(1/\sqrt{m})$ is the largest deviation on $\Vert \w_{s} - \w_{s,0}\Vert$ that we can allow to get the constant bound here.

We next move to the second and third terms. We note that while bounding the first term simply follows from the Lipschitzness of the network, as we show in the following sections, the second and third terms require a finer analysis, due to the non-smoothness of ReLU. 

\subsection{Bounding the gradient} 
\begin{lemma}(Informal)
\label{lem:bndgrad}
For any $\x$, with probability at least $1-\gamma$, $\|\nabla f(\x;\a,\W)\|\geq \frac14\sqrt{d}$ holds for all $\W \in \mathcal{B}_{2,\infty}(\W_0, \frac{C_0}{\sqrt{m}})$,
provided that $m\geq \Omega(\log(1/\gamma))$ and $d\geq \Omega(\frac{\log(m/\gamma)}{m})$.
\end{lemma}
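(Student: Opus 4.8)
The plan is to write the input-gradient in closed form and then split the analysis into a lower bound at initialization and a uniform-over-the-ball stability estimate. Differentiating, $\nabla_\x f(\x;\a,\W)=\frac{1}{\sqrt m}\sum_{s=1}^m a_s\1(\w_s^\top\x>0)\,\w_s$, so it suffices to establish two facts: (i) $\|\nabla_\x f(\x;\a,\W_0)\|\geq\frac12\sqrt d$ with high probability, and (ii) $\sup_{\W\in\cB_{2,\infty}(\W_0,C_0/\sqrt m)}\|\nabla_\x f(\x;\a,\W)-\nabla_\x f(\x;\a,\W_0)\|=o(\sqrt d)$ with high probability. Combining (i) and (ii) through the triangle inequality yields $\|\nabla_\x f(\x;\a,\W)\|\geq\frac14\sqrt d$ simultaneously for every $\W$ in the ball (for $d$ large), which is the claim; a union bound over the two events controls the failure probability, and the mild conditions $m\gtrsim\log(1/\gamma)$ enter there.

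For step (i) I would first compute the mean. Since $\E[a_sa_t]=\delta_{st}$ and $\E_{\w}[\1(\w^\top\x>0)\|\w\|^2]=\frac12\E\|\w\|^2=\frac d2$ (by the $\w\mapsto-\w$ symmetry, using $\|\x\|=1$), we get $\E\|\nabla_\x f(\x;\a,\W_0)\|^2=\frac d2$. To upgrade this to a high-probability lower bound I would split the squared norm into its diagonal part $\frac1m\sum_s\1(\w_{s,0}^\top\x>0)\|\w_{s,0}\|^2$ and its off-diagonal Rademacher chaos $\frac1m\sum_{s\neq t}a_sa_t\1_s\1_t\,\w_{s,0}^\top\w_{t,0}$. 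The diagonal is an average of $m$ i.i.d.\ terms of variance $\Theta(d^2)$ and so concentrates at $\frac d2$ with relative fluctuations $O(1/\sqrt m)$ by Bernstein; the off-diagonal term has conditional variance $\Theta(d)$ and concentrates at scale $O(\sqrt d)$ by a Hanson--Wright / decoupling estimate. Both corrections are $o(d)$ for large $d$, so the squared norm is at least $d/4$, i.e.\ $\|\nabla_\x f(\x;\a,\W_0)\|\geq\frac12\sqrt d$.

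The heart of the argument, and the main obstacle, is the uniform stability (ii), where the non-smoothness of ReLU is felt. Writing the difference as $\frac1{\sqrt m}\sum_s a_s\big[\1(\w_s^\top\x>0)\w_s-\1(\w_{s,0}^\top\x>0)\w_{s,0}\big]$, I would separate the weight-change term $\frac1{\sqrt m}\sum_s a_s\1(\w_s^\top\x>0)(\w_s-\w_{s,0})$, bounded deterministically by $\frac1{\sqrt m}\sum_s\|\w_s-\w_{s,0}\|\leq C_0=O(1)$, from the activation-flip term, whose summand is nonzero only for neurons whose sign can change. Because $\|\w_s-\w_{s,0}\|\leq C_0/\sqrt m$ and $\|\x\|=1$, a flip is possible only on the boundary set $S=\{s:|\w_{s,0}^\top\x|\leq C_0/\sqrt m\}$, and on $S$ the adversarial $\W$ may flip any subset. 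Hence the flip term is at most $\frac1{\sqrt m}\max_{S'\subseteq S}\big\|\sum_{s\in S'}\pm\,\w_{s,0}\big\|\leq\frac1{\sqrt m}\,\|V\|_{\mathrm{op}}\sqrt{|S|}$, where $V$ collects the columns $\{\w_{s,0}\}_{s\in S}$ and Cauchy--Schwarz is applied through the operator norm. The two remaining quantities I would control separately: $|S|$ is $\mathrm{Binomial}(m,\Theta(1/\sqrt m))$ and concentrates at $\Theta(\sqrt m)$; and for the operator norm I would project off the $\x$-direction, noting that membership in $S$ depends only on $\w_{s,0}^\top\x$ and is independent of the orthogonal components, so conditionally on $S$ the matrix $V$ is, up to a rank-one correction of norm $\frac{C_0}{\sqrt m}\sqrt{|S|}$, a $d\times|S|$ Gaussian matrix with $\|V\|_{\mathrm{op}}=O(\sqrt{|S|}+\sqrt d)$. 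This is exactly where the width requirement enters: $m\geq d^{2.4}$ forces $|S|\approx\sqrt m\geq d^{1.2}\geq d$, so $\|V\|_{\mathrm{op}}=O(\sqrt{|S|})=O(m^{1/4})$ and the flip term is $O\!\big(\frac1{\sqrt m}\cdot m^{1/4}\cdot m^{1/4}\big)=O(1)=o(\sqrt d)$. The subtlety, and the reason this is the hard step, is that a crude bound $\|V\|_{\mathrm{op}}\leq\|\W_0\|_{\mathrm{op}}=\Theta(\sqrt m)$ would give a useless $O(m^{1/4})$ estimate; it is essential to exploit that $V$ has only $\Theta(\sqrt m)$ columns rather than $m$, and to handle the conditioning-on-$S$ wrinkle, which is where the real work lies.
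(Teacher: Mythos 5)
Your proposal is correct and reaches the same quantitative conclusion, but by a genuinely different route in both halves. For the lower bound at initialization, the paper first passes to the projection $\P=\I_d-\x\x^\top$ (using $\|\nabla f\|\ge\|\P\nabla f\|$) precisely so that $a_s\P\w_{s,0}$ becomes a standard Gaussian independent of the activation indicators; the sum is then equal in distribution to $\bigl(\tfrac1m\sum_s\sigma'(\w_{s,0}^\top\x)^2\bigr)^{1/2}\z$ with $\z\sim\cN(0,\I_{d-1})$, and the bound follows from $\chi^2$ concentration plus Bernstein. You instead compute $\E\|\nabla f(\x;\a,\W_0)\|^2=d/2$ and concentrate via a diagonal/off-diagonal (Hanson--Wright) split; this works and avoids the projection, at the cost of a slightly heavier chaos estimate. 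For the uniform deviation over the ball, both arguments isolate the same two pieces -- the weight-drift term bounded deterministically by $C_0$, and the activation-flip term supported on the boundary set $S=\{s:|\w_{s,0}^\top\x|\le C_0/\sqrt m\}$ with $|S|=O(\sqrt m)$ (the paper's Lemma~\ref{lemma:ntk_diffsgn_wxchange}). The paper again exploits the projection to write the flip term as $\sqrt{|S_v|/m}\,\|\z\|$ in distribution; you bound the worst signed sub-sum by $\|V\|_{\mathrm{op}}\sqrt{|S|}/\sqrt m$ with $V$ the (conditionally Gaussian) submatrix of columns in $S$. Your operator-norm route makes the uniformity over the $2^{|S|}$ sign patterns and subsets completely explicit, which is arguably cleaner than pushing the supremum inside a distributional identity; both yield the same $O(1+\sqrt d\,m^{-1/4})=o(\sqrt d)$ deviation. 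One small remark: your appeal to $m\ge d^{2.4}$ is not needed here -- the term $\sqrt d\,m^{-1/4}$ is already $o(\sqrt d)$ under this lemma's much weaker width condition; that requirement only enters later, in bounding the difference of gradients.
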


\paragraph{Proof Sketch:}
Let  $\P:=\I_d-\x\x^\top$ be the projection matrix onto the 
orthogonal complement of $\x$. 
Using triangle inequality,
\begin{align*}
\|\nabla f(\x;\a, \W)\|
\geq \|\P \nabla f(\x;\a, \W)\| 
\geq \|\P \nabla f(\x;\a, \W_0)\|-\|\P \nabla f(\x;\a, \W) - \P \nabla f(\x;\a, \W_0)\|.
\end{align*}
The first term $\|\P\nabla f(\x;\a, \W_0)\|\geq\Omega(\sqrt{d})$, which follows from \citet{bubeck2021single} with minor changes arising from different scaling of the initialization.

We further decompose the second term,
\begin{eqnarray}
\|\P\nabla f(\x;\a, \W) - \P\nabla f(\x;\a, \W_0)\| 
&\!\!\!\! \leq\!\!\!\! & \bigg\|\frac{1}{\sqrt{m}}\sum_{s=1}^ma_s \P(\w_s\sigma'(\w_s^\top\x)-\w_{s,0}\sigma'(\w_s^\top\x))\bigg\|\qquad\qquad\qquad\qquad\qquad\qquad\qquad\qquad\qquad\qquad\qquad \nonumber \\
&&\hspace*{-15pt} +\ \  \bigg\|\sup_{\W\in\mathcal{B}_{2,\infty}(\W_0,\frac{C_0}{\sqrt{m}})}\frac{1}{\sqrt{m}}\sum_{s=1}^m a_s \P \w_{s,0}(\sigma'(\w_s^\top\x)-\sigma'(\w_{s,0}^\top\x))\bigg\|. \nonumber
\end{eqnarray}

The first term is $O(1)$ simply from the fact that  $\Vert \w_{s} - \w_{0}\Vert \leq \frac{C_0}{\sqrt{m}}$. 
To bound the second term, the key arguments are as follows: by definition of $\P$, the term $\P\w_{s,0}$ is independent of $\sigma'(\w_s^\top\x)-\sigma'(\w_{s,0}^\top\x)$, therefore, as in \citet{bubeck2021single}, we can bound both of these terms independently.
Note that $a_s\P \w_{s,0}\sim \cN(0,\I_{d-1})$.
Therefore, the second term is equal in distribution to the following random variable:
\begin{align*}
\scalebox{0.98}{
$
\sup_{\W\in\mathcal{B}_2(\W_0,\frac{C_0}{\sqrt{m}})}\frac{\|\z\|}{\sqrt{m}}\sqrt{\sum_{s=1}^m (\sigma'(\w_s^\top\x)-\sigma'(\w_{s,0}^\top\x))^2}
$}
\end{align*}
where $\z\sim\cN(0,\I_{d-1})$.

The term $\vert\sigma'(\w_s^\top\x)-\sigma'(\w_{s,0}^\top\x)\vert = 1$ only when the $s$-th neuron has different signs for incoming weights $\w_{s,0}$ and $\w_{s}$. 
So, at a high-level, we want to bound the number of neurons changing sign between weights $\w_{0}$ and $\w$.
To do this, we draw insights from results  in the lazy training regime that bound the number of such neurons in terms of deviations in weights. 
In particular, the following lemma shows that with high probability, for all $\W \in \cB_{2,\infty}(\W_0,\frac{C_0}{\sqrt{m}})$, at most $O(\sqrt{m})$ neurons can have a sign different from their initialization.

\begin{lemma}\label{lemma:neuron_change_size}
Let
$
S_v:=\big\{ s\big| \exists \W,\W \in \mathcal{B}_{2,\infty}\left(\W_0, \frac{C_0}{\sqrt{m}}\right),\1[\langle \w_s,\x \rangle>0] \neq \1[\langle \w_{s,0},\x \rangle>0]\big\}.
$
Then, with probability at least $1-\gamma$, we have that 
$
|S_v|\leq C_0\sqrt{m} + \sqrt{\frac{m\log(1/\gamma)}{2}} .
$
\end{lemma}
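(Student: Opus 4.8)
The plan is to reduce the event $\{s \in S_v\}$, which quantifies over the entire continuum of admissible $\W$, to a simple deterministic condition on the single initial weight vector $\w_{s,0}$, and then treat $|S_v|$ as a sum of independent indicators and apply a concentration inequality.

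First I would exploit the fact that the $\ell_{2,\infty}$ constraint decouples across neurons: $\W \in \cB_{2,\infty}(\W_0, C_0/\sqrt{m})$ holds if and only if $\|\w_s - \w_{s,0}\| \leq C_0/\sqrt{m}$ for each $s$ independently. Fixing a neuron $s$, for any admissible $\w_s$ we have $\langle \w_s, \x\rangle = \langle \w_{s,0}, \x\rangle + \langle \w_s - \w_{s,0}, \x\rangle$, and since $\|\x\|=1$, Cauchy--Schwarz gives $|\langle \w_s - \w_{s,0}, \x\rangle| \leq C_0/\sqrt{m}$, with the extreme values attained by taking $\w_s - \w_{s,0}$ parallel to $\pm\x$. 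Hence the attainable values of $\langle \w_s, \x\rangle$ fill exactly the interval $[\langle \w_{s,0}, \x\rangle - C_0/\sqrt{m},\ \langle \w_{s,0}, \x\rangle + C_0/\sqrt{m}]$, which contains a point of the sign opposite to $\langle \w_{s,0}, \x\rangle$ precisely when $|\langle \w_{s,0}, \x\rangle| \leq C_0/\sqrt{m}$. Up to the measure-zero boundary this yields the clean characterization
\[
\{s \in S_v\} \;=\; \Big\{\, |\langle \w_{s,0}, \x\rangle| \leq \tfrac{C_0}{\sqrt{m}} \,\Big\},
\]
an event depending only on $\w_{s,0}$.

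Next I would compute the per-neuron probability. Since $\w_{s,0}\sim\cN(\0,\I_d)$ and $\|\x\|=1$, the scalar $\langle \w_{s,0}, \x\rangle$ is distributed as $Z\sim\cN(0,1)$. Bounding the Gaussian density by its peak value $1/\sqrt{2\pi}$,
\[
p \;:=\; \Pr\!\Big[|Z|\leq \tfrac{C_0}{\sqrt{m}}\Big] \;\leq\; \frac{1}{\sqrt{2\pi}}\cdot\frac{2C_0}{\sqrt{m}} \;=\; \sqrt{\tfrac{2}{\pi}}\,\frac{C_0}{\sqrt{m}} \;\leq\; \frac{C_0}{\sqrt{m}},
\]
so that $\E[|S_v|] = mp \leq C_0\sqrt{m}$. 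Finally, because the $\w_{s,0}$ are independent, the indicators $\1[s\in S_v]$ are independent $\{0,1\}$ variables and $|S_v| = \sum_{s=1}^m \1[s\in S_v]$. Applying Hoeffding's inequality gives $\Pr[|S_v| \geq \E[|S_v|] + t] \leq \exp(-2t^2/m)$; choosing $t = \sqrt{m\log(1/\gamma)/2}$ makes the right-hand side equal to $\gamma$, so with probability at least $1-\gamma$ we obtain $|S_v| \leq C_0\sqrt{m} + \sqrt{m\log(1/\gamma)/2}$, exactly as stated.

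I expect no serious obstacle here: the argument is a routine reduction followed by a standard concentration bound. The only point requiring genuine care is the geometric step, namely converting the existential quantifier over the continuum of admissible $\W$ into the scalar threshold event on $\w_{s,0}$; the decoupling of the $\ell_{2,\infty}$ ball across neurons together with Cauchy--Schwarz handles this cleanly, and the strict-versus-nonstrict sign conventions only affect a probability-zero boundary and so do not change the bound.
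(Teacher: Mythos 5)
Your proof is correct and follows essentially the same route as the paper's: reduce the event $\{s\in S_v\}$ to the scalar threshold condition $|\langle\w_{s,0},\x\rangle|\leq C_0/\sqrt{m}$ via the per-neuron decoupling of the $\ell_{2,\infty}$ ball, bound the per-neuron probability by the peak of the Gaussian density, and apply Hoeffding's inequality. The only cosmetic difference is that you establish the full equivalence of the two events whereas the paper only uses the one-sided implication needed for the upper bound.
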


The key idea is that we can absorb the union bound in the definition of the set above without compromising any increase in the size of the set.
Putting all of these pieces together yields the bound claimed in Lemma~\ref{lem:bndgrad}.

\subsection{Bounding the difference of gradients}
\begin{lemma}
For any $\x$, with probability at least $1-\gamma$, for all $\W \in \mathcal{B}_{2,\infty}\left(\W_0,\frac{C_0}{\sqrt{m}}\right)$, we have that 
$
\sup_{r\in\bS^{d-1},\delta\in\R^d}\|\nabla f(\x;\a,\W)-\nabla f(\x+\delta;\a,\W)\|\leq o(\sqrt{d}),
$
provided that $\|\delta\|\leq O(1/\sqrt{d})$, and 
$\displaystyle d^{2.4}\leq m\leq O(\exp(d^{0.24}))$.
\end{lemma}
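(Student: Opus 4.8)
The plan is to reduce the bound on the difference of gradients to a counting problem on the number of neurons whose activation flips, and then to control the resulting random sum by a concentration argument made uniform over the perturbation. Writing the gradient explicitly,
\[
\nabla_\x f(\x;\a,\W) - \nabla_\x f(\x+\delta;\a,\W) = \frac{1}{\sqrt m}\sum_{s=1}^m a_s \w_s\left(\sigma'(\w_s^\top\x) - \sigma'(\w_s^\top(\x+\delta))\right),
\]
so only neurons with $\sigma'(\w_s^\top\x)\neq\sigma'(\w_s^\top(\x+\delta))$ contribute, i.e. those whose pre-activation changes sign when $\x$ is perturbed by $\delta$. First I would peel off the dependence on $\W$: since $\|\w_s-\w_{s,0}\|\leq C_0/\sqrt m$ on the lazy ball, I would replace each amplitude $\w_s$ by $\w_{s,0}$ and relate the flip events under $\W$ to those under $\W_0$, the discrepancy being carried by the $O(\sqrt m)$ weight-induced flips of Lemma~\ref{lemma:neuron_change_size} (applied at both $\x$ and $\x+\delta$), whose contribution is $o(\sqrt d)$ by the random-sign concentration used below. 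This leaves the clean quantity
\[
g(\delta) := \frac{1}{\sqrt m}\sum_{s=1}^m a_s\w_{s,0}\left(\sigma'(\w_{s,0}^\top\x)-\sigma'(\w_{s,0}^\top(\x+\delta))\right),
\]
and, writing the norm in dual form $\|g(\delta)\|=\sup_{r\in\bS^{d-1}} r^\top g(\delta)$, the goal becomes $\sup_{\|\delta\|\leq\rho,\,r\in\bS^{d-1}} r^\top g(\delta)=o(\sqrt d)$ with $\rho=O(1/\sqrt d)$, a quantity depending only on the initialization.

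The crucial observation is a contrast between worst-case and fixed $\delta$: although $\Theta(m)$ neurons can flip for \emph{some} $\delta$ in the ball (those with $|\w_{s,0}^\top\x|\leq\|\w_{s,0}\|\rho=O(1)$, an $\Omega(1)$-probability event), for any \emph{fixed} $\delta$ a neuron flips only if $|\w_{s,0}^\top\x|\leq|\w_{s,0}^\top\delta|$, and since $\w_{s,0}^\top\delta\sim\cN(0,\rho^2)$ is typically of order $\rho$ while $\w_{s,0}^\top\x\sim\cN(0,1)$ has bounded density near the origin, this occurs with probability $O(\rho)=O(1/\sqrt d)$. Hence the flip count $N(\delta)$ has $\E N(\delta)=O(m/\sqrt d)$. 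Conditioning on the Gaussians, $g(\delta)=\frac{1}{\sqrt m}\sum_{s:\,\text{flip}}\pm a_s\w_{s,0}$ with i.i.d. signs, so $\E\|g(\delta)\|^2\approx \frac{N(\delta)\,d}{m}=O(\sqrt d)$, giving $\|g(\delta)\|=O(d^{1/4})=o(\sqrt d)$ with high probability via a Khintchine/bounded-difference inequality, after controlling $N(\delta)$ and $\max_s\|\w_{s,0}\|$ by standard Gaussian concentration.

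The hard part is upgrading this \emph{pointwise} estimate to the supremum over all $\|\delta\|\leq\rho$ and all $r\in\bS^{d-1}$. The map $\delta\mapsto g(\delta)$ is discontinuous, jumping exactly at the neuron-flip hyperplanes, so a Lipschitz-plus-net estimate fails; moreover a single $\varepsilon$-net of the sphere has cardinality $\exp(\Theta(d\log(1/\varepsilon)))$, which the per-point sign-concentration tail $\exp(-\Omega(u^2/\sqrt d))$ cannot beat at the target scale $u=o(\sqrt d)$. I would therefore replace the single-scale net by a multiscale covering (chaining) argument on $(r,\delta)$, in which the increments $g(\delta)-g(\delta')$ involve only the $O(m\|\delta-\delta'\|)$ neurons whose pre-activation lies in the thin slab swept between $\delta$ and $\delta'$; bounding this slab count uniformly yields sub-Gaussian increments with variance proxy $\sim d\|\delta-\delta'\|$, whose entropy integral is $o(\sqrt d)$. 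Balancing the covering against the sharpened tails is exactly what forces the two-sided width window: the lower bound $m\geq d^{2.4}$ makes $N(\delta)$ and the slab counts concentrate tightly around their $O(m/\sqrt d)$ means, while the upper bound $m\leq\exp(d^{0.24})$ keeps the Gaussian-amplitude and entropy tails strong enough to survive the union over the exponentially many scales and directions. I expect this balancing of the chaining argument to be the main technical obstacle; the reduction to $\W_0$ and the per-$\delta$ estimate are comparatively routine.
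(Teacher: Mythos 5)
Your decomposition matches the paper's: write the norm in dual form, peel off the $\W$-dependence using the $O(\sqrt m)$ weight-induced sign flips (Lemma~\ref{lemma:ntk_diffsgn_wxchange}), reduce to a sum over the neurons flipped by $\delta$ at initialization, and get a pointwise bound of order $d^{1/4}$ (up to logs) from the $O(m R)$ expected flip count via a Bernstein-type inequality. The divergence, and the gap, is in the uniformization step. Your stated reason for abandoning a single-scale net rests on a miscalculated tail: with $N(\delta)=O(m/\sqrt d)$ flipped neurons the conditional variance of $r^\top g(\delta)$ is $N/m=O(1/\sqrt d)$, so the pointwise tail is $\exp(-\Omega(u^2\sqrt d))$, not $\exp(-\Omega(u^2/\sqrt d))$, and it beats a net of cardinality $(10/\varepsilon)^{2d}$ already at $u\asymp d^{1/4}\sqrt{\log(1/\varepsilon)}=o(\sqrt d)$. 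This is exactly what the paper does (Lemma~\ref{lemma:grad_diff_bd} plus a single net at scale $\varepsilon=m^{-4/7}d^{-4/7}$); no chaining is needed, and the lower bound $m\geq d^{2.4}$ enters to make the union-bounded Bernstein correction $d\log(md)/\sqrt m$ and the Chernoff exponent $-\Omega(\sqrt m)+O(d\log(md))$ work out, not to concentrate $N(\delta)$ per se.

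More importantly, the chaining argument you propose in its place does not close as described. Because the supremum over $\W\in\cB_{2,\infty}(\W_0,C_0/\sqrt m)$ sits inside the increment, the set of neurons that can flip between $\delta$ and $\delta'$ for \emph{some} admissible $\W$ has expected size at least $\Omega(V m)=\Omega(\sqrt m)$ \emph{independently of} $\|\delta-\delta'\|$ (this is the $(1+R+\varepsilon)V$ term in Lemma~\ref{lemma:diff_sign}). So your claimed sub-Gaussian increments with variance proxy $\sim d\|\delta-\delta'\|$ are wrong at small scales: the increments bottom out at a floor of order $\sqrt{\log m}$ (controlled in the paper by a union bound over all subsets of size $O(\sqrt m)$ together with Gaussian concentration, Theorem~\ref{thm:concentration_lip}), and a geometric entropy integral over ever-finer scales would not converge to something that accounts for this floor. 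One must stop the multiscale argument at a fixed resolution and handle the residual $\Omega(\sqrt m)$-neuron contribution separately---which is precisely the single-scale construction you set aside. As written, the ``main technical obstacle'' you flag is a genuine missing piece rather than a balancing issue, and the quantitative steps (the Bernstein/Chernoff computations and the subset union bound that produce the explicit $d^{2.4}\leq m\leq \exp(d^{0.24})$ window) are asserted but not carried out.
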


\paragraph{Proof Sketch:}
It follows from the definition of norm that
\begin{align*}
\|\nabla f(\x;\a, \W)-\nabla f(\x+\delta;\a, \W)\|
&=\sup_{\substack{r\in\bS^{d-1},\delta\in\R^d}} \frac{1}{\sqrt{m}}\sum_{s=1}^m a_s\w_s^\top r\left(\sigma'(\w_s^\top\x)-\sigma'(\w_s^\top(\x+\delta))\right).  
\end{align*}
The strategy, then, is to bound the right-hand side for fixed $r$ and $\delta$, and then use an $\varepsilon$-net argument to bound the supremum. To that end, define the following quantities:
$$
X_{s,0}=a_s\w_{s,0}^\top r(\sigma'(\w_{s,0}^\top\x)-\sigma'(\w_{s,0}^\top(\x+\delta))), X_{s}=a_s\w_s^\top r(\sigma'(\w_s^\top\x)-\sigma'(\w_s^\top(\x+\delta))).
$$

Then, the norm of the difference of the gradients is upper bounded by 
\begin{align*}
\scalebox{0.95}{
$\frac{1}{\sqrt{m}}\sum_{s=1}^mX_{s,0} +  \frac{1}{\sqrt{m}}\sum_{s=1}^m(X_{s}-X_{s,0}).$}
\end{align*}

The first term $\frac{1}{\sqrt{m}}\sum_{s=1}^mX_{s,0} = O((R\sqrt{\log(d)})^{1/4})$ which follows from \citet{bubeck2021single} with minor changes due to the different scale of the initialization. However, the second term requires a more careful analysis. In particular, the difference $X_s - X_{s,0}$ includes deviations of both variables $\x$ and $\W$. Therefore, we need to expand the difference in terms of each variable and carefully bound the resulting terms. Similar to the ideas presented in the previous section, here we need to bound the size of the set of neurons that can change sign under perturbations on both $\x$ and $\W$. It turns out that the size of this set is in the same order as the one in Lemma~\ref{lemma:neuron_change_size}, as long as the perturbation in $\x$ is $O(1)$.
\section{Experiments}\label{sec:exp}
The primary goal of this section is to provide empirical support for our theoretical findings. In particular, we want to verify how tight our theoretical bounds are in practice, in terms of the size of the perturbation needed and the interplay of robustness with training in the lazy regime. For instance,  are the  trained models indeed vulnerable to small perturbations of size $O(1/\sqrt{d})$ as suggested in Theorem~\ref{thm:main}? Is the assumption of training in the lazy regime merely for analytical convenience or do we see a markedly different behavior if we allow the iterates to deviate more than $O(1/\sqrt{m})$ from initialization?

\paragraph{Datasets \& Model specification. }
We utilize the MNIST dataset for our empirical study. MNIST is a dataset of $28 \times 28$ greyscale handwritten digits~\citep{lecun1998gradient}. We extract examples corresponding to images of the digits `0' and `1', resulting in $12665$ training examples and $2115$ test examples. To study the behaviour of different quantities as a function of the input feature dimensionality, we downsample the images by different factors to simulate data with $d$ ranging between $25$ and $784$ (the original dimension). We use two-layer ReLU networks with a fixed top layer to match our theory and use the initialization  described in Section~\ref{sec:prelim}.

\subsection{Lazy regime for standard SGD}\label{sec:lazy_sgd}
\paragraph{Setting.} 
For each $d \in \{5^2, 7^2, 10^2, 14^2, 20^2, 25^2, 28^2\}$, we set the width of the network $m \in \{1e3, 1e4, 1e5\}$. We train the network using standard SGD on the  logistic loss using a learning rate of $0.1$. We denote the weight matrix at the $t^{\text{th}}$ iterate of SGD as $\W_t$ and the incoming weight vector into the $s^{\text{th}}$ hidden node at iteration $t$ as $\w_{s,t}$. The training terminates if the iterates exit the lazy training regime, i.e., if 
 $\sup_{1\leq s\leq m}\|\w_{s,t}-\w_{s,0}\|>\frac{C_0}{\sqrt{m}}$. We experiment with  $C_0 \in \{10, 20, 30, 40\}$. 
 We note that regardless of the above stopping criterion, each of the trained models achieves a test error of less than $1\%$. In other words, the models that our training algorithm returns are guaranteed to satisfy the lazy regime assumption and generalize well.  

Given a trained neural network with parameters $(\W,\a)$, for each test example $\x$, we find the smallest step size $\eta$ such that the perturbation $\delta = \eta\nabla_{\W} f(\x)$ flips the sign of the prediction, i.e., $f_{\W}(\x) f_{\W}(\x+\delta) < 0.$ For each experimental setting (i.e, for each value of $d$, $C_0$, and $m$), we report results averaged over $5$ independent random runs.

\paragraph{Results.}

\begin{figure*}[t]
\centering
\begin{tabular}{ccc}
\includegraphics[width=0.30\textwidth]{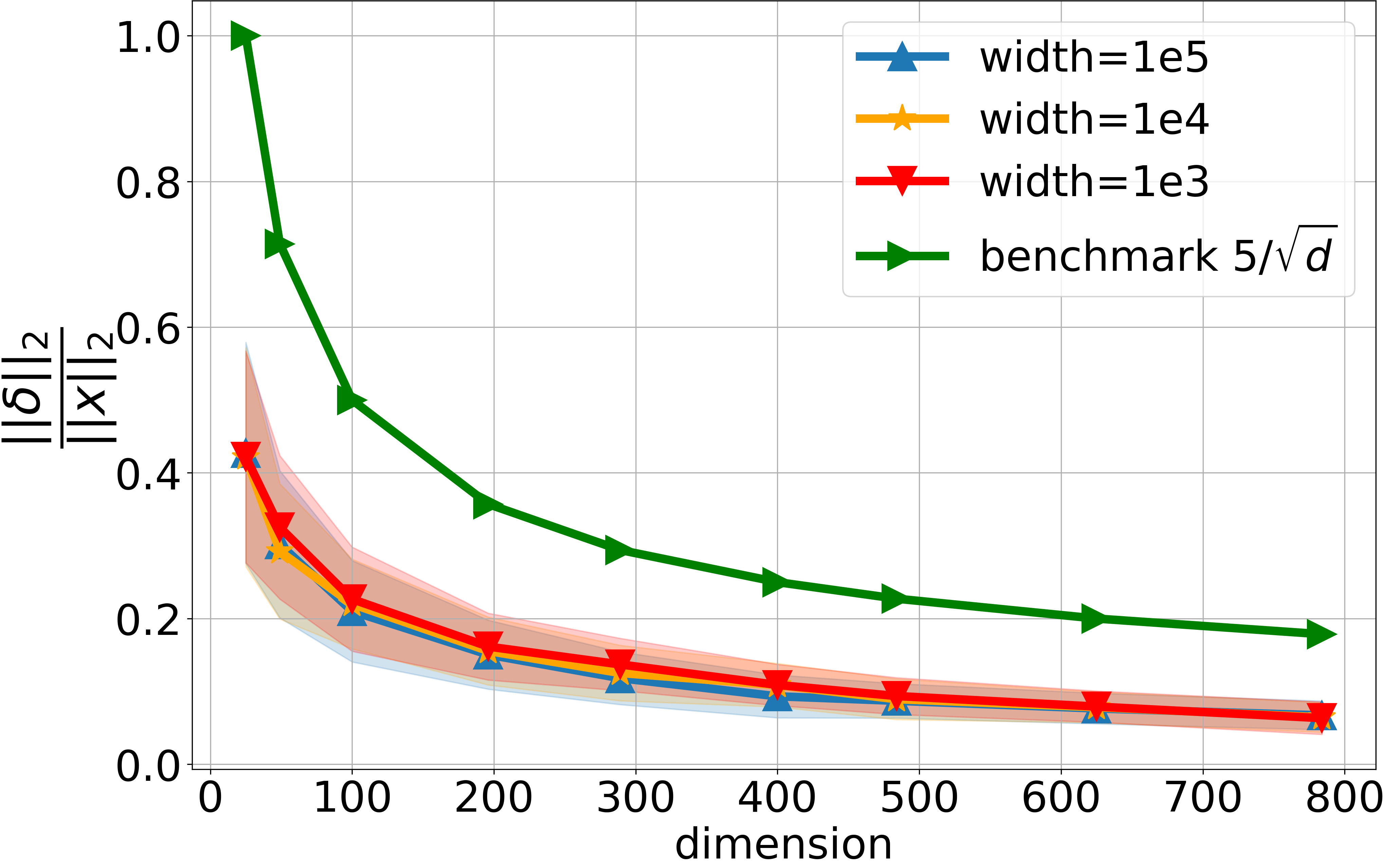}
&
\includegraphics[width=0.30\textwidth]{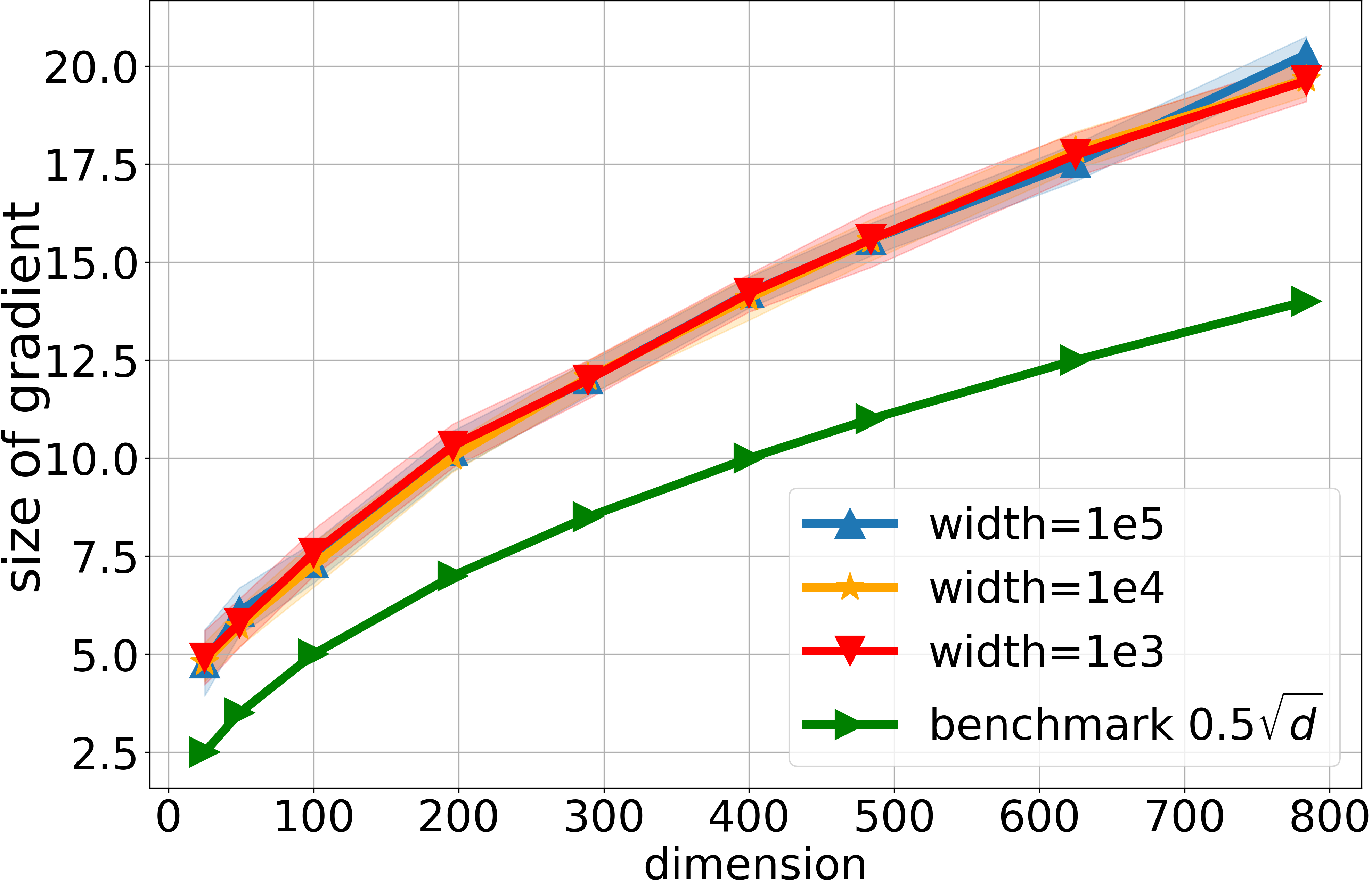}
&
\includegraphics[width=0.30\textwidth]{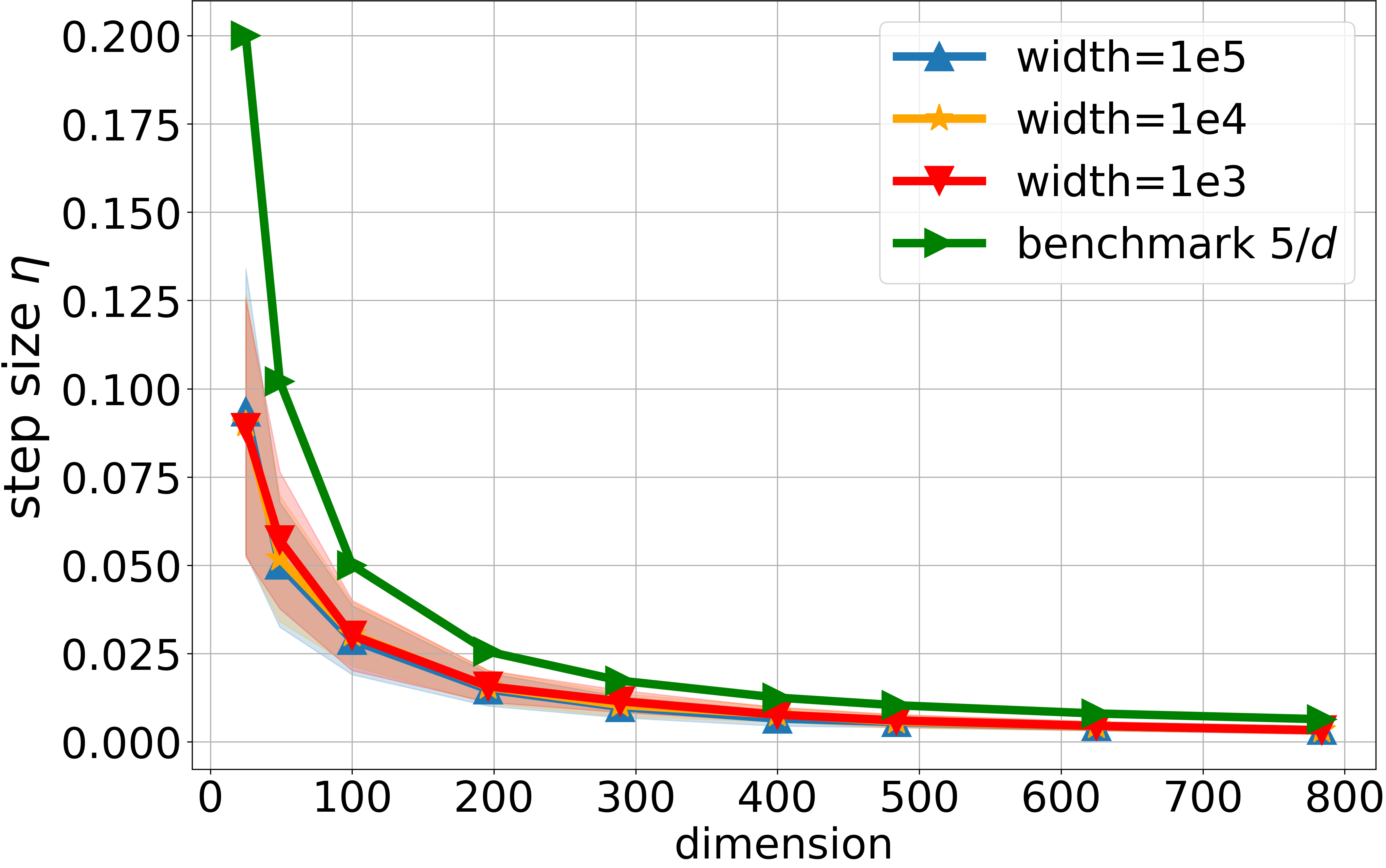}
\end{tabular}
\caption{\label{fig:diffm_normal_bias} Minimal size of the  perturbation vector needed to flip the predicted label (\textbf{left}), the norm of the gradient $\nabla f(\x; \a, \W)$
for $\W$ trained in the lazy regime (\textbf{middle}), and the corresponding step size (\textbf{right}), as a function of the input dimension $d$ for a fixed value of $C_0=10$ and different values of network widths ($m$).}
\end{figure*}
\begin{figure*}[t]
\centering
\begin{tabular}{ccc}
\includegraphics[width=0.30\textwidth]{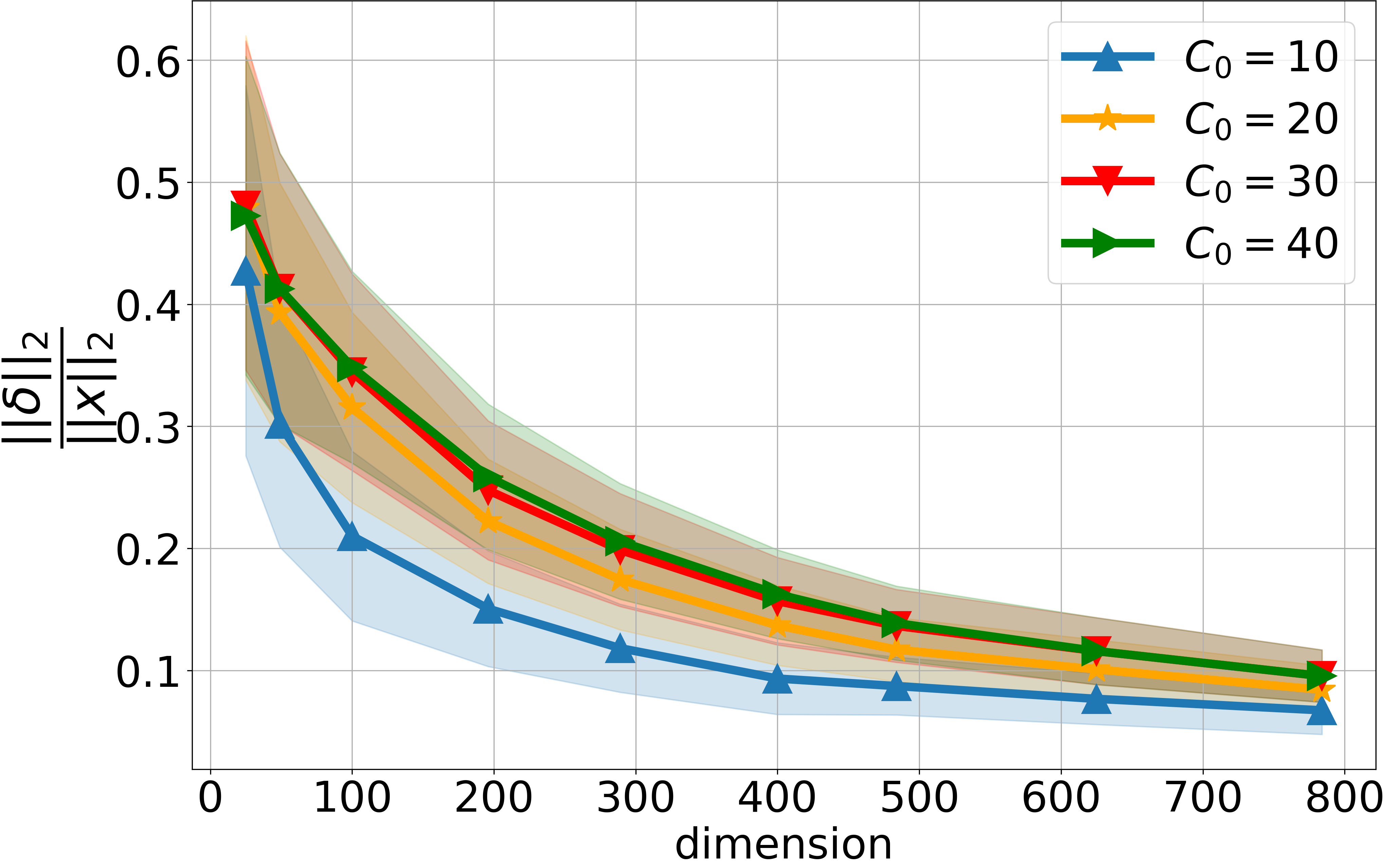}
&
\includegraphics[width=0.30\textwidth]{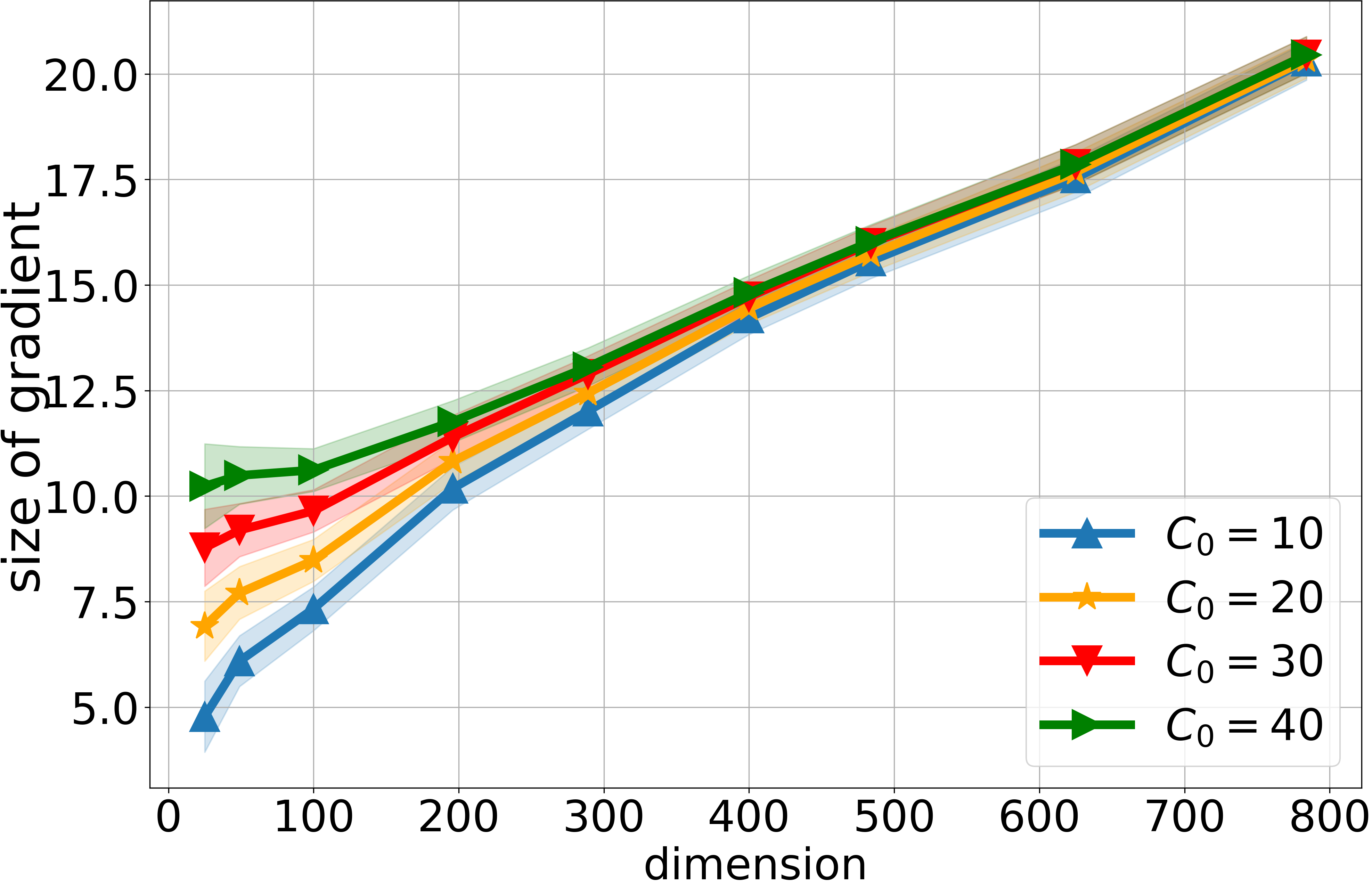}
&
\includegraphics[width=0.30\textwidth]{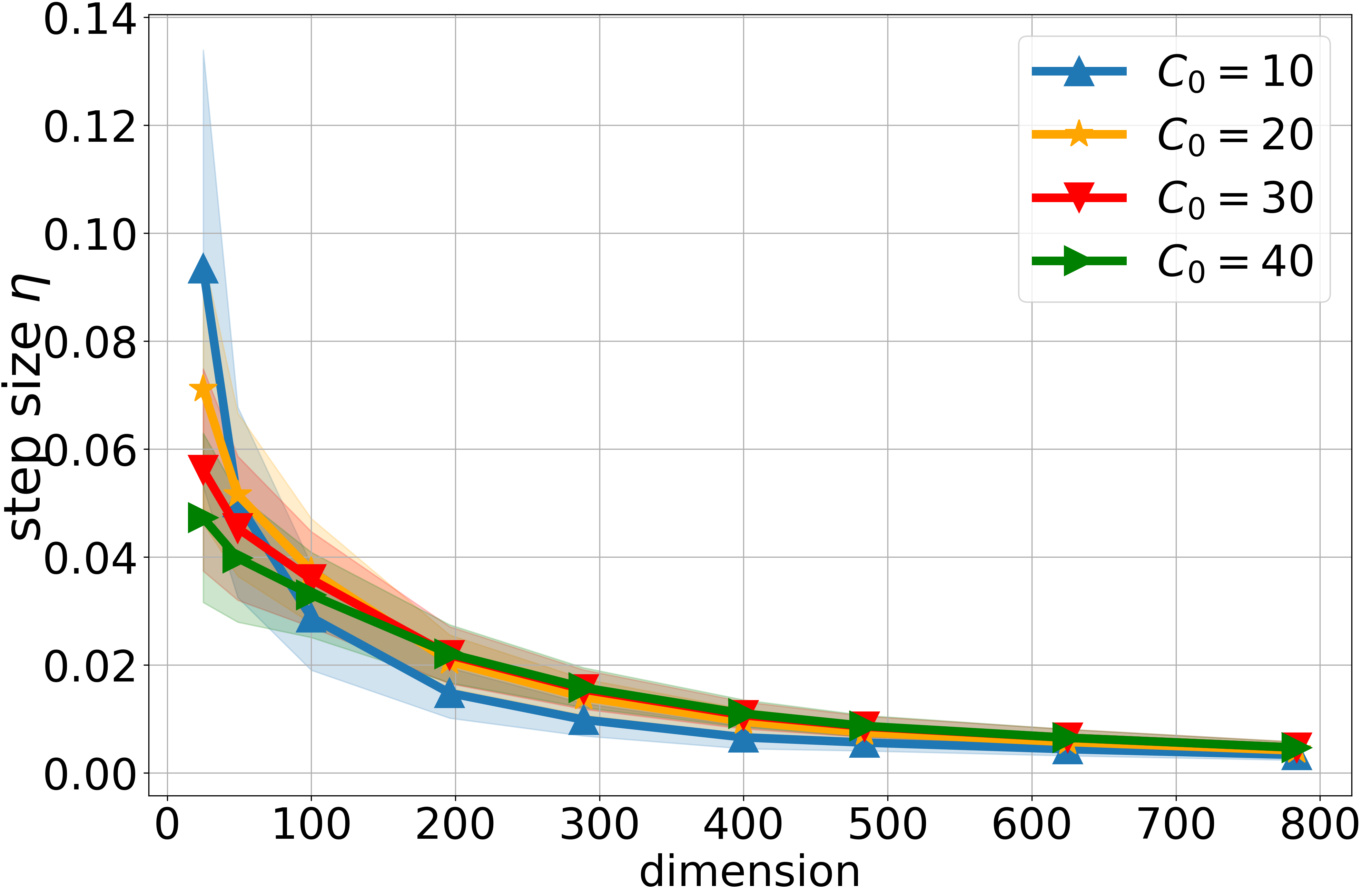}
\end{tabular}
\caption{\label{fig:diffC0_normal_bias}
Minimal size of the  perturbation vector needed to flip the predicted label (\textbf{left}), the norm of the gradient $\nabla f(\x; \a, \W)$
for $\W$ trained in the lazy regime (\textbf{middle}), and the corresponding step size (\textbf{right}), as a function of the input dimension $d$ for a fixed value of network width of $m=10^5$ and different values of $C_0$.}
\end{figure*}

The key quantities in our analysis are the step size $\eta$, the norm of the gradient $\|\nabla f(\x;\a,\W)\|$, and the size of the perturbation vector which is given as the product of the step size and the norm of the gradient. In Figure~\ref{fig:diffm_normal_bias}, we plot these  quantities as a function of input dimension $d$, for a fixed $C_0$, for various values of width $m \in \{1e3, 1e4, 1e5\}$ (corresponding to the curves in red, orange, and blue, respectively). The green curves represent the corresponding theoretical bound. As predicted by Theorem~\ref{thm:main}, the gradient norm behaves roughly as $\Omega(\sqrt{d})$.  Furthermore, the minimum step-size required to flip the label, and the minimum size of the perturbation vector (normalized by the input norm) behave as $O(1/d)$ and $O(1/\sqrt{d})$, respectively.

In Figure~\ref{fig:diffC0_normal_bias}, we explore the behaviour of the key quantities as we allow for larger deviations in the lazy regime, by allowing for larger values of $C_0,$ for a fixed network width of $m = 10^5$. We see that as $C_0$ increases, both the size of the gradient and the minimum step size required to flip the label increase, thereby increasing the size of the perturbation vector needed to change the sign of the prediction. This suggests that allowing the trained networks to deviate more from the initialization potentially leads to models that are more robust (against the attack model based on a single step of gradient ascent).

Remarkably, the minimum perturbation size, closely follows the $O(1/\sqrt{d})$ behaviour predicted by the theorem. This begets further investigation into the required perturbation size, which is the focus of the next subsection.

\subsection{Lazy regime for adversarial training}\label{sec:lazy_advtrain}

We begin by noting that our theoretical results are not limited to any particular learning algorithm. In fact, the results hold simultaneously for all networks in the lazy regime. In this section, we further verify our theoretical results for networks that are adversarially trained in the lazy regime.

\paragraph{Setting.}
We downsample images as in the previous section to simulate data with $d \in \{5^2, 6^2, 7^2, \ldots, 28^2\}.$ For this set of experiments, we also normalize the data, so that $\|\x\|=1$, to closely match the setting of our theorem. We use projected gradient descent (PGD)~\citep{madry2017towards} for  adversarial training. To ensure that the updates remain in the lazy regime, after each gradient step, we project the weights onto $\cB_{2,\infty}(\W_0, V)$; see Algo. \ref{algo:projadvtrain} in the appendix for details. We experiment with values of $V$ and the size of $\|\delta\|$ in the intervals $0.05 \leq V \leq 1$ and $0.05\leq \|\delta\|\leq 1$, respectively. These ranges are wide enough to include the $O(1/\sqrt{d})$ and $O(1/\sqrt{m})$ theoretical bounds that we aim to verify here. We note that the projection step is not commonly used in adversarial training; however, in our experiments, it does not hurt the test accuracy of the trained models, perhaps due to the simplicity of the classification task. At test time, we generate the adversarial examples using the PGD attack instead of a single gradient attack as in Section \ref{sec:lazy_sgd}. We report the robust test accuracy averaged over $5$ independent random runs of the experiment.

\paragraph{Results.}
\begin{figure}[t]
\centering
\begin{tabular}{cc}
\includegraphics[width=0.45\textwidth]{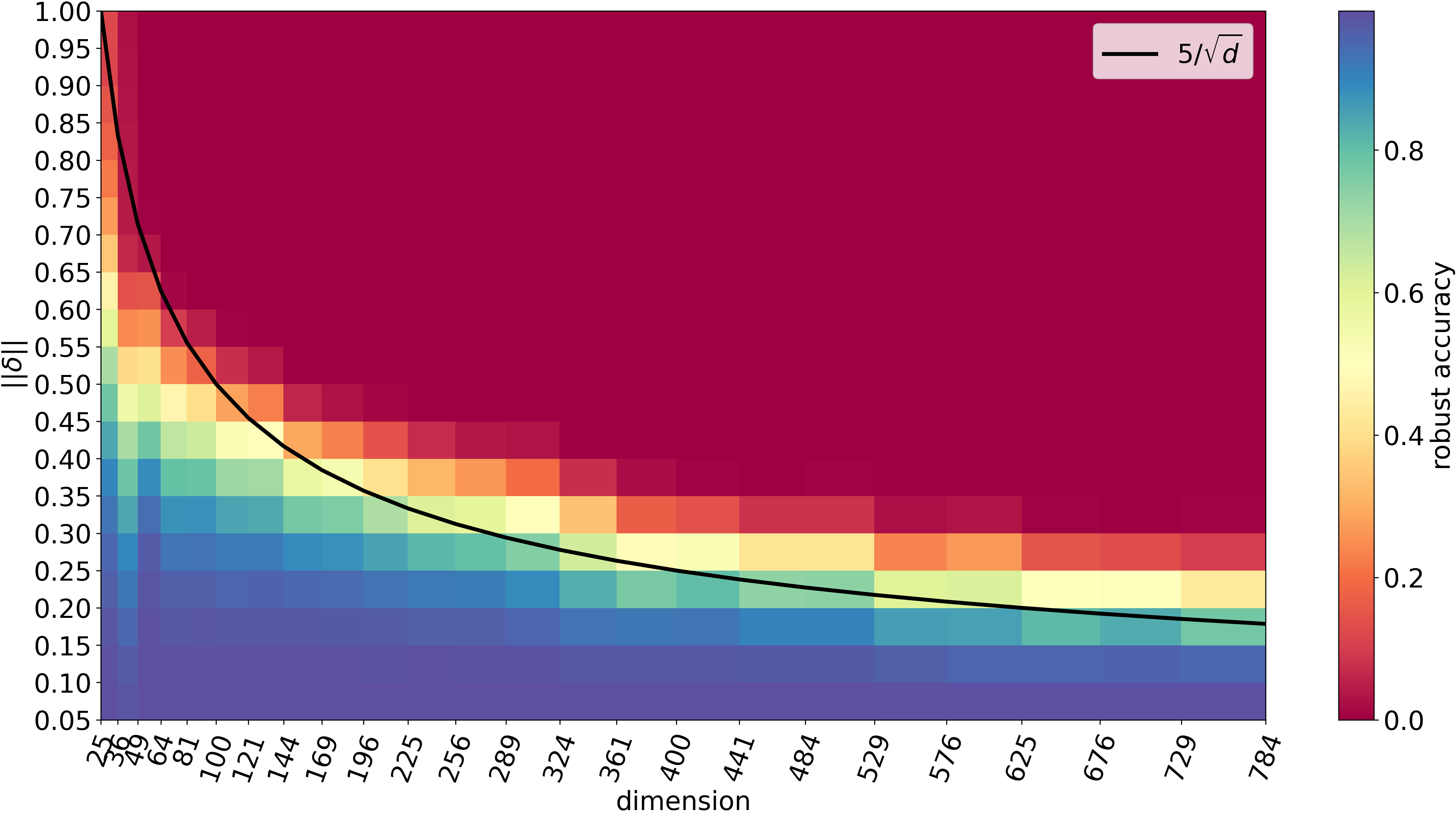}
&
\includegraphics[width=0.45\textwidth]{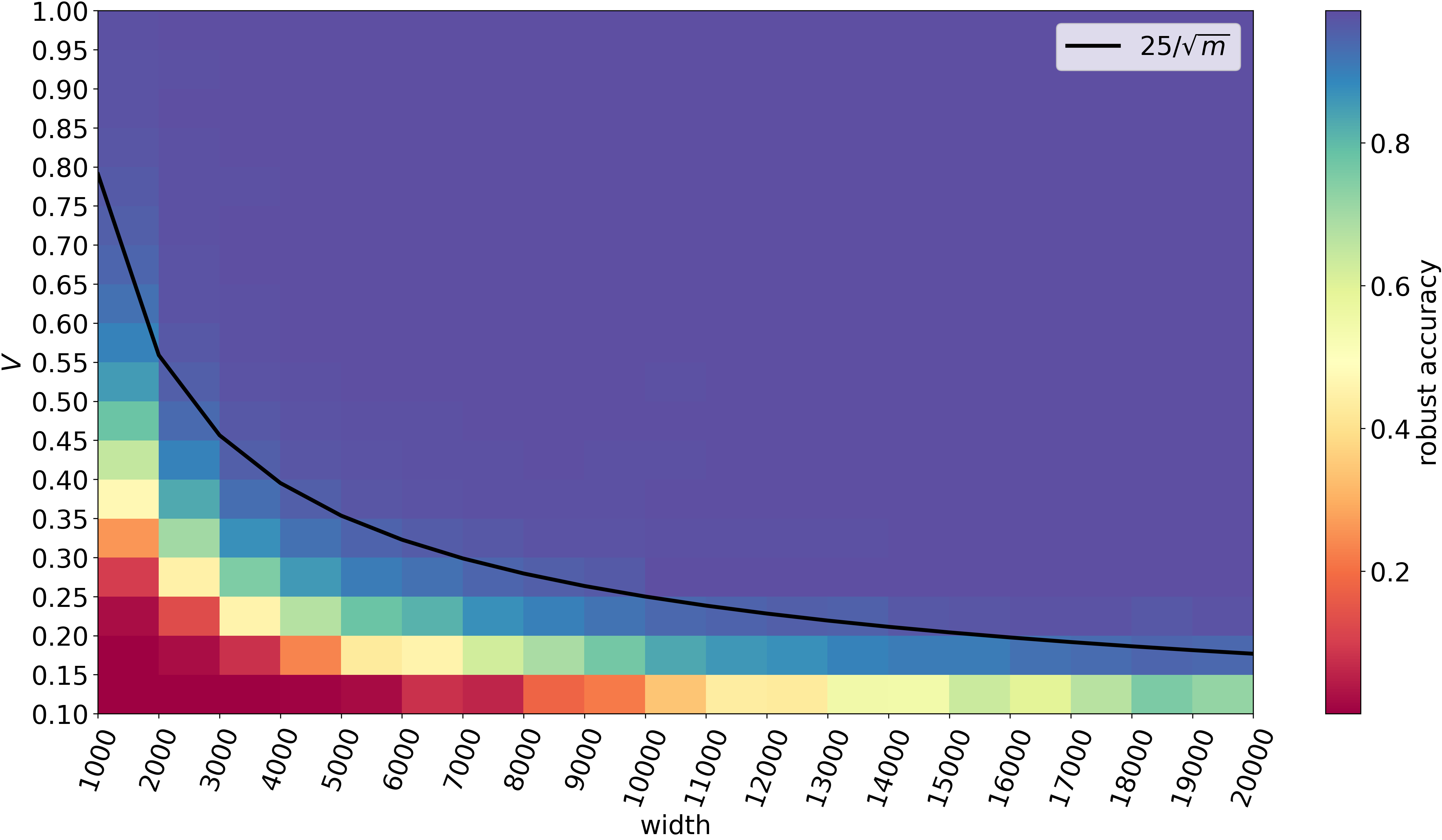}
\end{tabular}
\caption{\label{fig:advtrain_diffdm}\textbf{(Left):}Robust test accuracy as a function of perturbation size $\|\delta\|$ and dimension $d$, for a fixed network width of $m=1000$ and a fixed value of maximal deviation of weight vectors of $V=\frac{C_0}{\sqrt{m}}$ with $C_0=10$.
\textbf{(Right):}Robust test accuracy as a function of width $m$ and maximal deviation of weight vectors, $V$, for a fixed dimension of $d=784$ and fixed perturbation size $\|\delta\|=0.2$.}
\end{figure}

For the experiments reported in Figure~\ref{fig:advtrain_diffdm},  left panel, we fix the network width to $m=1000$ and set $V=\frac{C_0}{\sqrt{m}}=\frac{10}{\sqrt{1000}}=0.316$, to check how the dimension $d$ and the perturbation budget affect the robust accuracy.
We observe a sharp drop in robust accuracy about the $5/\sqrt{d}$ threshold for the  perturbation budget $\|\delta\|$ as  predicted by Theorem~\ref{thm:main}.
We can see that for perturbation sizes slightly larger than $5/\sqrt{d}$, the robust accuracy drops to $0$, while the trained networks are robust against smaller perturbations.

For the experiments reported in the right panel of  Figure~\ref{fig:advtrain_diffdm}, we fix the input dimensionality to $d = 28^2$, and the perturbation size to $\|\delta\|=0.2$, to check how the network width $m$ and the maximal deviation in weights, given by $V$, affect the robust test accuracy. We observe a phase transition in the robust test accuracy at value of $V$ around $O(1/\sqrt{m})$, as required by Theorem~\ref{thm:main}. In particular, for any fixed width $m$, even a slight deviation from the $25/\sqrt{m}$ radius allows for training networks that are completely robust against the PGD attacks. With a smaller radius, however, the robust test accuracy quickly drops to 0. This experiment suggests that adversarial robustness may require going beyond the lazy regime.

\paragraph{Why MNIST?}
We remark that our empirical demonstration on a simple dataset makes a stronger case for our theoretical result.
This is because the perturbation size of $O(1/\sqrt{d})$ prescribed by our theoretical result is \textbf{worst-case}, regardless of the dataset. 
So, in an easy setting like binary-MNIST, we would expect our bound to be pessimistic and a smaller perturbation to be sufficient. On the contrary, we see that even MNIST exhibits a perturbation-dimension relationship that closely follows our theory. This  can be regarded as an empirical evidence of the tightness of our bounds. 

\section{Conclusion}
\label{sec:conclusion}
In this paper, we study the robustness of two-layer neural networks trained in the lazy regime against inference-time attacks. In particular, we show that the class of networks for which the weights are close to the initialization after training, are still vulnerable to adversarial attacks based on a single step of gradient ascent with a perturbation of size $O(\frac{\|\x\|}{\sqrt{d}})$.

Our works suggests several research directions for future work.
First, the focus here is on two-layer networks, so a natural question is to ask whether the same holds for multi-layer networks. Second, while we show that a single step of  gradient ascent-based attack is sufficient for our purposes, we expect to get stronger results if we consider stronger attacks, for instance, with gradient ascent based attack that is run to convergence. Third, our experiments highlight an intriguing relationship between the width, the input dimension, and the robust accuracy. So, a natural avenue is to understand these dependencies more carefully. These include, figuring out the dependence on the input dimensionality of the minimal perturbation required to generate an adversarial example. Similarly, how does the robust error behave as a function of the maximal distance of the weight vectors from the initialization.

\begin{ack}
This research was supported, in part, by the DARPA GARD award HR00112020004, NSF CAREER award IIS-1943251, and the Spring 2022 workshop on ``Learning and Games'' at the Simons Institute for the Theory of Computing.
\end{ack}

\bibliographystyle{plainnat}
\bibliography{main}

\begin{thebibliography}{27}
\providecommand{\natexlab}[1]{#1}
\providecommand{\url}[1]{\texttt{#1}}
\expandafter\ifx\csname urlstyle\endcsname\relax
  \providecommand{\doi}[1]{doi: #1}\else
  \providecommand{\doi}{doi: \begingroup \urlstyle{rm}\Url}\fi

\bibitem[Allen-Zhu et~al.(2019)Allen-Zhu, Li, and Song]{allen2019convergence}
Zeyuan Allen-Zhu, Yuanzhi Li, and Zhao Song.
\newblock A convergence theory for deep learning via over-parameterization.
\newblock In \emph{International Conference on Machine Learning}, pages
  242--252. PMLR, 2019.

\bibitem[Arora et~al.(2019)Arora, Du, Hu, Li, and Wang]{arora2019fine}
Sanjeev Arora, Simon Du, Wei Hu, Zhiyuan Li, and Ruosong Wang.
\newblock Fine-grained analysis of optimization and generalization for
  overparameterized two-layer neural networks.
\newblock In \emph{International Conference on Machine Learning}, pages
  322--332. PMLR, 2019.

\bibitem[Bartlett et~al.(2021)Bartlett, Bubeck, and
  Cherapanamjeri]{bartlett2021adversarial}
Peter Bartlett, S{\'e}bastien Bubeck, and Yeshwanth Cherapanamjeri.
\newblock Adversarial examples in multi-layer random {ReLU} networks.
\newblock \emph{Advances in Neural Information Processing Systems}, 34, 2021.

\bibitem[Biggio et~al.(2013)Biggio, Corona, Maiorca, Nelson,
  {$\check{\textrm{S}}$}rndi{\'c}, Laskov, Giacinto, and
  Roli]{biggio2013evasion}
Battista Biggio, Igino Corona, Davide Maiorca, Blaine Nelson, Nedim
  {$\check{\textrm{S}}$}rndi{\'c}, Pavel Laskov, Giorgio Giacinto, and Fabio
  Roli.
\newblock Evasion attacks against machine learning at test time.
\newblock In \emph{Joint European conference on machine learning and knowledge
  discovery in databases}, pages 387--402. Springer, 2013.

\bibitem[Bubeck et~al.(2021)Bubeck, Cherapanamjeri, Gidel, and
  Combes]{bubeck2021single}
S{\'e}bastien Bubeck, Yeshwanth Cherapanamjeri, Gauthier Gidel, and R{\'e}mi
  Tachet~des Combes.
\newblock A single gradient step finds adversarial examples on random
  two-layers neural networks.
\newblock \emph{Advances in Neural Information Processing Systems}, 2021.

\bibitem[Cao and Gu(2020)]{cao2020generalization}
Yuan Cao and Quanquan Gu.
\newblock Generalization error bounds of gradient descent for learning
  over-parameterized deep {ReLU} networks.
\newblock In \emph{Proceedings of the AAAI Conference on Artificial
  Intelligence}, volume~34, pages 3349--3356, 2020.

\bibitem[Carlini and Wagner(2017{\natexlab{a}})]{carlini2017adversarial}
Nicholas Carlini and David Wagner.
\newblock Adversarial examples are not easily detected: Bypassing ten detection
  methods.
\newblock In \emph{Proceedings of the 10th ACM workshop on artificial
  intelligence and security}, pages 3--14, 2017{\natexlab{a}}.

\bibitem[Carlini and Wagner(2017{\natexlab{b}})]{carlini2017towards}
Nicholas Carlini and David Wagner.
\newblock Towards evaluating the robustness of neural networks.
\newblock In \emph{2017 IEEE symposium on security and privacy ({SP})}, pages
  39--57. IEEE, 2017{\natexlab{b}}.

\bibitem[Carlini et~al.(2019)Carlini, Athalye, Papernot, Brendel, Rauber,
  Tsipras, Goodfellow, Madry, and Kurakin]{carlini2019evaluating}
Nicholas Carlini, Anish Athalye, Nicolas Papernot, Wieland Brendel, Jonas
  Rauber, Dimitris Tsipras, Ian Goodfellow, Aleksander Madry, and Alexey
  Kurakin.
\newblock On evaluating adversarial robustness.
\newblock \emph{arXiv preprint arXiv:1902.06705}, 2019.

\bibitem[Chen et~al.(2021)Chen, Cao, Zou, and Gu]{chen2019much}
Zixiang Chen, Yuan Cao, Difan Zou, and Quanquan Gu.
\newblock How much over-parameterization is sufficient to learn deep relu
  networks?
\newblock In \emph{9th International Conference on Learning Representations},
  2021.

\bibitem[Chizat et~al.(2019)Chizat, Oyallon, and Bach]{chizat2018lazy}
Lenaic Chizat, Edouard Oyallon, and Francis Bach.
\newblock On lazy training in differentiable programming.
\newblock \emph{Advances in Neural Information Processing Systems}, 2019.

\bibitem[Daniely and Shacham(2020)]{daniely2020most}
Amit Daniely and Hadas Shacham.
\newblock Most {ReLU} networks suffer from $\ell^{2}$ adversarial
  perturbations.
\newblock \emph{Advances in Neural Information Processing Systems},
  33:\penalty0 6629--6636, 2020.

\bibitem[Du et~al.(2018)Du, Zhai, Poczos, and Singh]{du2018gradient}
Simon~S Du, Xiyu Zhai, Barnabas Poczos, and Aarti Singh.
\newblock Gradient descent provably optimizes over-parameterized neural
  networks.
\newblock In \emph{International Conference on Learning Representations}, 2018.

\bibitem[Gao et~al.(2019)Gao, Cai, Li, Hsieh, Wang, and
  Lee]{gao2019convergence}
Ruiqi Gao, Tianle Cai, Haochuan Li, Cho-Jui Hsieh, Liwei Wang, and Jason~D Lee.
\newblock Convergence of adversarial training in overparametrized neural
  networks.
\newblock \emph{Advances in Neural Information Processing Systems},
  32:\penalty0 13029--13040, 2019.

\bibitem[Goodfellow et~al.(2015)Goodfellow, Shlens, and
  Szegedy]{goodfellow2014explaining}
Ian~J. Goodfellow, Jonathon Shlens, and Christian Szegedy.
\newblock Explaining and harnessing adversarial examples.
\newblock In \emph{3rd International Conference on Learning Representations},
  2015.

\bibitem[Jacot et~al.(2018)Jacot, Gabriel, and Hongler]{jacot2018neural}
Arthur Jacot, Franck Gabriel, and Cl{\'e}ment Hongler.
\newblock Neural tangent kernel: Convergence and generalization in neural
  networks.
\newblock \emph{Advances in neural information processing systems}, 31, 2018.

\bibitem[Ji and Telgarsky(2020)]{ji2019polylogarithmic}
Ziwei Ji and Matus Telgarsky.
\newblock Polylogarithmic width suffices for gradient descent to achieve
  arbitrarily small test error with shallow {ReLU} networks.
\newblock In \emph{8th International Conference on Learning Representations},
  2020.

\bibitem[LeCun et~al.(1998)LeCun, Bottou, Bengio, and
  Haffner]{lecun1998gradient}
Yann LeCun, L{\'e}on Bottou, Yoshua Bengio, and Patrick Haffner.
\newblock Gradient-based learning applied to document recognition.
\newblock \emph{Proceedings of the IEEE}, 86\penalty0 (11):\penalty0
  2278--2324, 1998.

\bibitem[Madry et~al.(2018)Madry, Makelov, Schmidt, Tsipras, and
  Vladu]{madry2017towards}
Aleksander Madry, Aleksandar Makelov, Ludwig Schmidt, Dimitris Tsipras, and
  Adrian Vladu.
\newblock Towards deep learning models resistant to adversarial attacks.
\newblock In \emph{International Conference on Learning Representations}, 2018.

\bibitem[Papernot et~al.(2016)Papernot, McDaniel, Jha, Fredrikson, Celik, and
  Swami]{papernot2016limitations}
Nicolas Papernot, Patrick McDaniel, Somesh Jha, Matt Fredrikson, Z~Berkay
  Celik, and Ananthram Swami.
\newblock The limitations of deep learning in adversarial settings.
\newblock In \emph{2016 IEEE European symposium on security and privacy
  (EuroS\&P)}, pages 372--387. IEEE, 2016.

\bibitem[Sch{\"{o}}nherr et~al.(2019)Sch{\"{o}}nherr, Kohls, Zeiler, Holz, and
  Kolossa]{schonherr2018adversarial}
Lea Sch{\"{o}}nherr, Katharina Kohls, Steffen Zeiler, Thorsten Holz, and
  Dorothea Kolossa.
\newblock Adversarial attacks against automatic speech recognition systems via
  psychoacoustic hiding.
\newblock In \emph{26th Annual Network and Distributed System Security
  Symposium}, 2019.

\bibitem[Shamir et~al.(2019)Shamir, Safran, Ronen, and
  Dunkelman]{shamir2019simple}
Adi Shamir, Itay Safran, Eyal Ronen, and Orr Dunkelman.
\newblock A simple explanation for the existence of adversarial examples with
  small hamming distance.
\newblock \emph{arXiv preprint arXiv:1901.10861}, 2019.

\bibitem[Sitawarin et~al.(2018)Sitawarin, Bhagoji, Mosenia, Mittal, and
  Chiang]{sitawarin2018rogue}
Chawin Sitawarin, Arjun~Nitin Bhagoji, Arsalan Mosenia, Prateek Mittal, and
  Mung Chiang.
\newblock Rogue signs: Deceiving traffic sign recognition with malicious ads
  and logos.
\newblock \emph{arXiv preprint arXiv:1801.02780}, 2018.

\bibitem[Szegedy et~al.(2014)Szegedy, Zaremba, Sutskever, Bruna, Erhan,
  Goodfellow, and Fergus]{szegedy2013intriguing}
Christian Szegedy, Wojciech Zaremba, Ilya Sutskever, Joan Bruna, Dumitru Erhan,
  Ian~J. Goodfellow, and Rob Fergus.
\newblock Intriguing properties of neural networks.
\newblock In \emph{2nd International Conference on Learning Representations},
  2014.

\bibitem[Tram{\`{e}}r et~al.(2020)Tram{\`{e}}r, Carlini, Brendel, and
  Madry]{tramer2020adaptive}
Florian Tram{\`{e}}r, Nicholas Carlini, Wieland Brendel, and Aleksander Madry.
\newblock On adaptive attacks to adversarial example defenses.
\newblock In \emph{Advances in Neural Information Processing Systems}, 2020.

\bibitem[Vardi et~al.(2022)Vardi, Yehudai, and Shamir]{vardi2022gradient}
Gal Vardi, Gilad Yehudai, and Ohad Shamir.
\newblock Gradient methods provably converge to non-robust networks.
\newblock \emph{arXiv preprint arXiv:2202.04347}, 2022.

\bibitem[Wainwright(2019)]{wainwright2019high}
Martin~J Wainwright.
\newblock \emph{High-dimensional statistics: A non-asymptotic viewpoint},
  volume~48.
\newblock Cambridge University Press, 2019.

\end{thebibliography}

\newpage
\appendix
\section{Additional experiments}
Figure \ref{fig:diffm_norm1_nobias} and \ref{fig:diffC0_norm1_nobias} plot the same phenomena as Figure \ref{fig:diffm_normal_bias} and \ref{fig:diffC0_normal_bias}. The only difference is that they strictly follows the setting described in Section \ref{sec:prelim} where the training data and test data has been normalized to $\|\x\|=1$, and there's no bias term when initializing the neural network linear layer to consist of our problem setup. For each experimental setting (i.e, for each value of $d$, $C_0$, and $m$), we report results averaged over $5$ independent random runs.

\begin{figure*}[!htbp]
\centering
\begin{tabular}{ccc}
\includegraphics[width=0.30\textwidth]{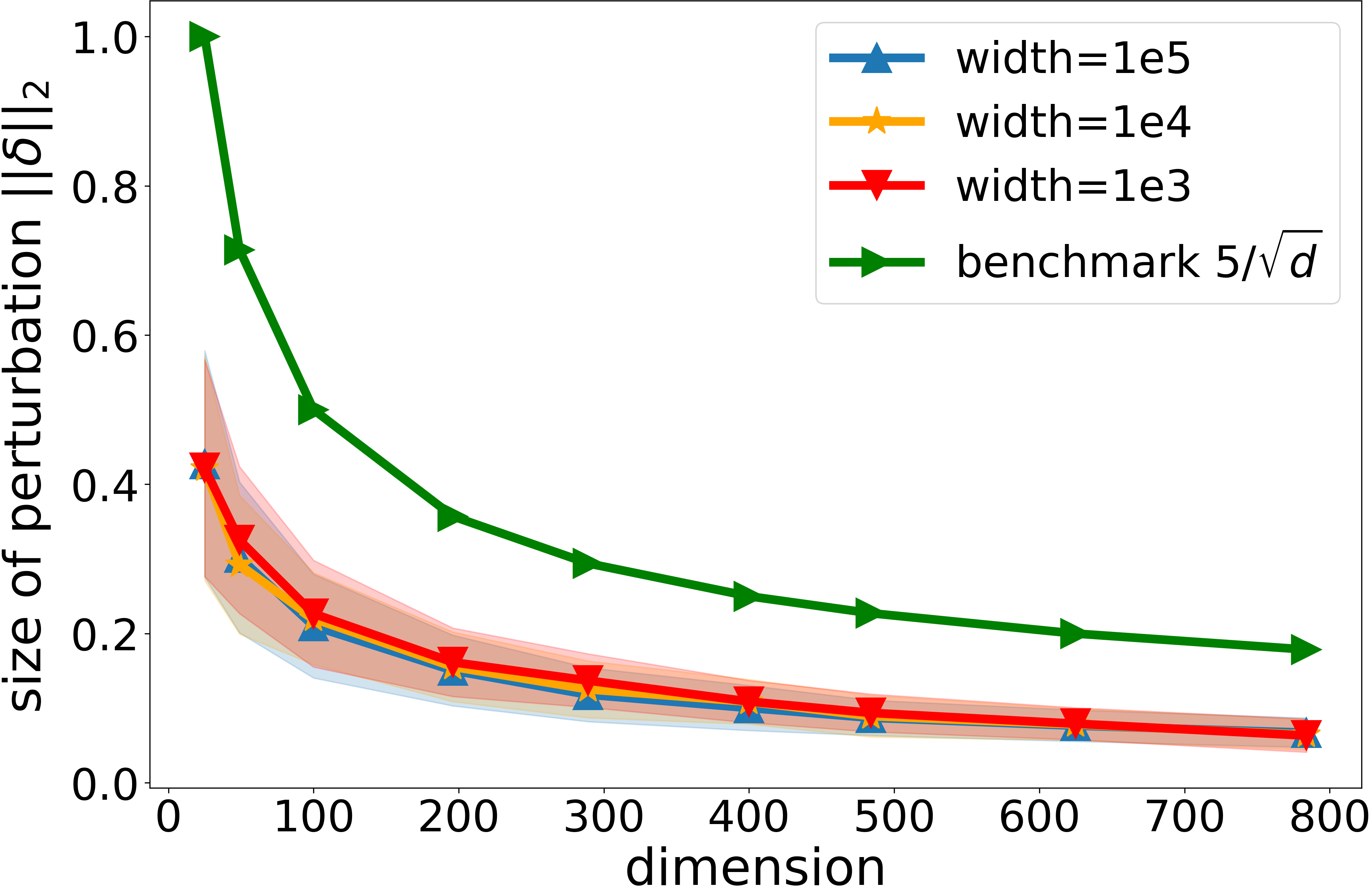}
&
\includegraphics[width=0.30\textwidth]{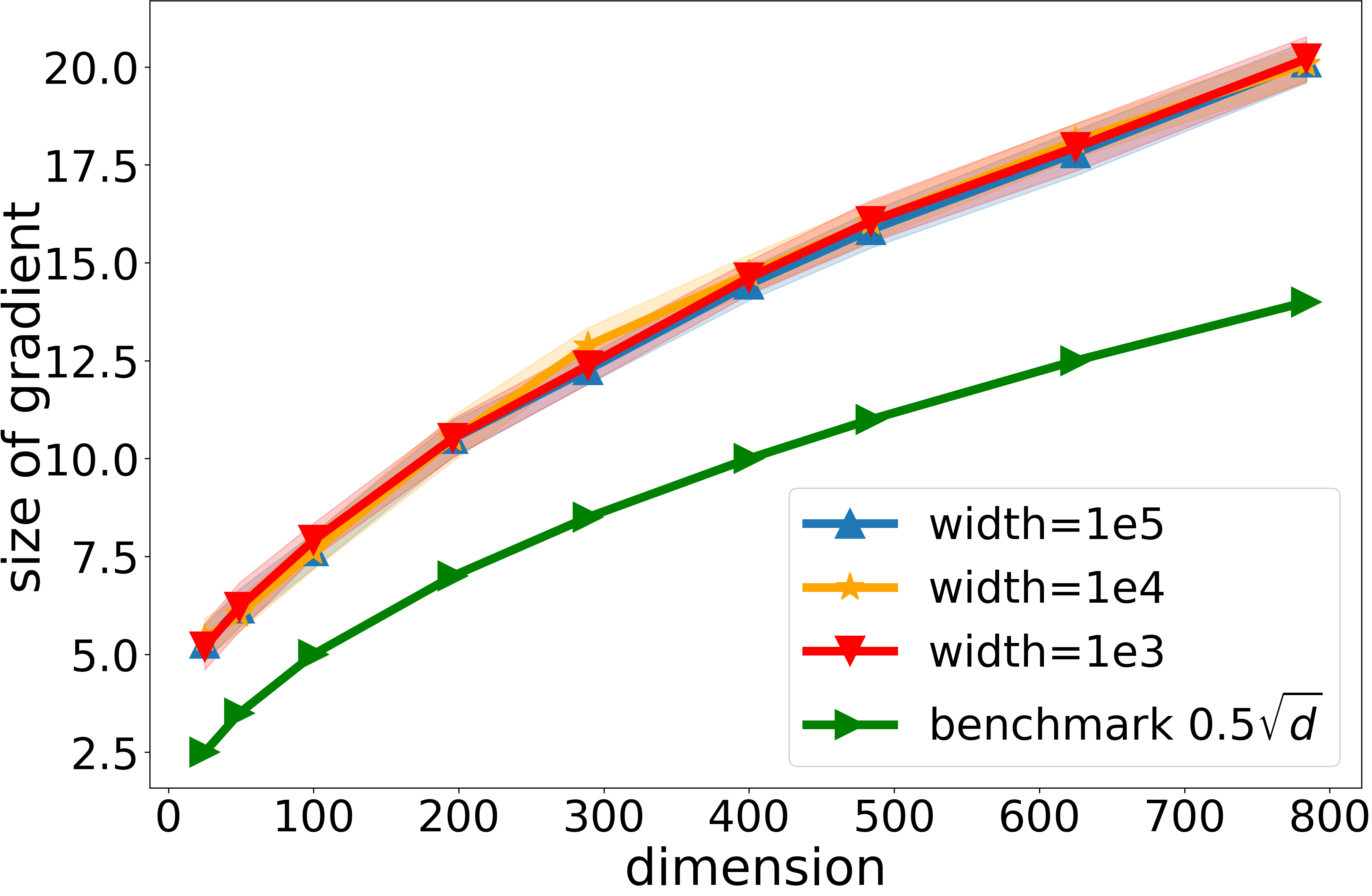}
&
\includegraphics[width=0.30\textwidth]{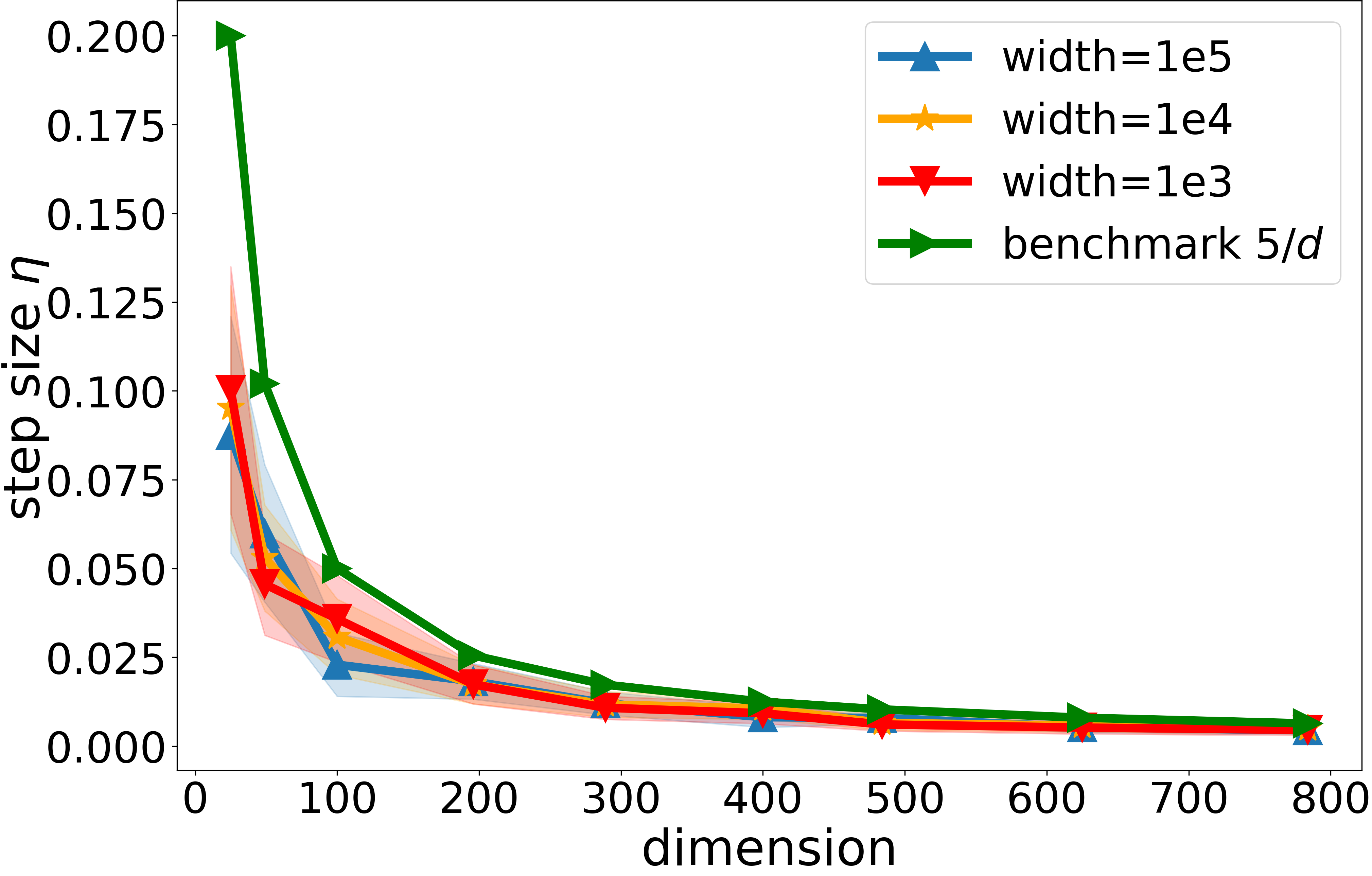}
\end{tabular}
\caption{\label{fig:diffm_norm1_nobias} Normalize data $\|\x\|=1$. Smallest size of perturbation to switch the prediction (\textbf{left}),  norm of the gradient after training the neural networks (\textbf{middle}), smallest step size (\textbf{right}), as a function of input dimension $d$ for fix $C_0=10$ with different width $m$.}
\end{figure*}

\begin{figure*}[!htbp]
\centering
\begin{tabular}{ccc}
\includegraphics[width=0.30\textwidth]{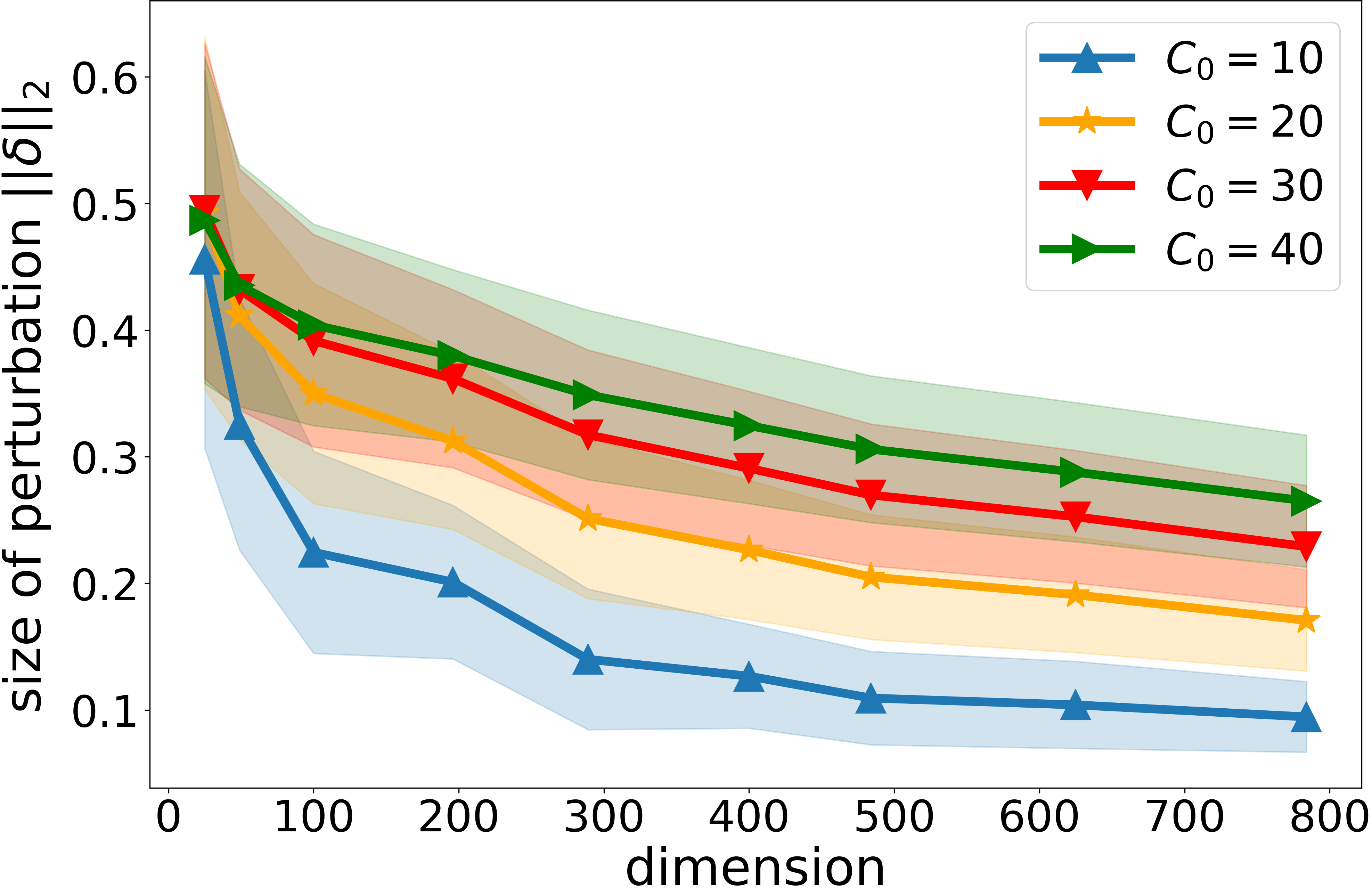}
&
\includegraphics[width=0.30\textwidth]{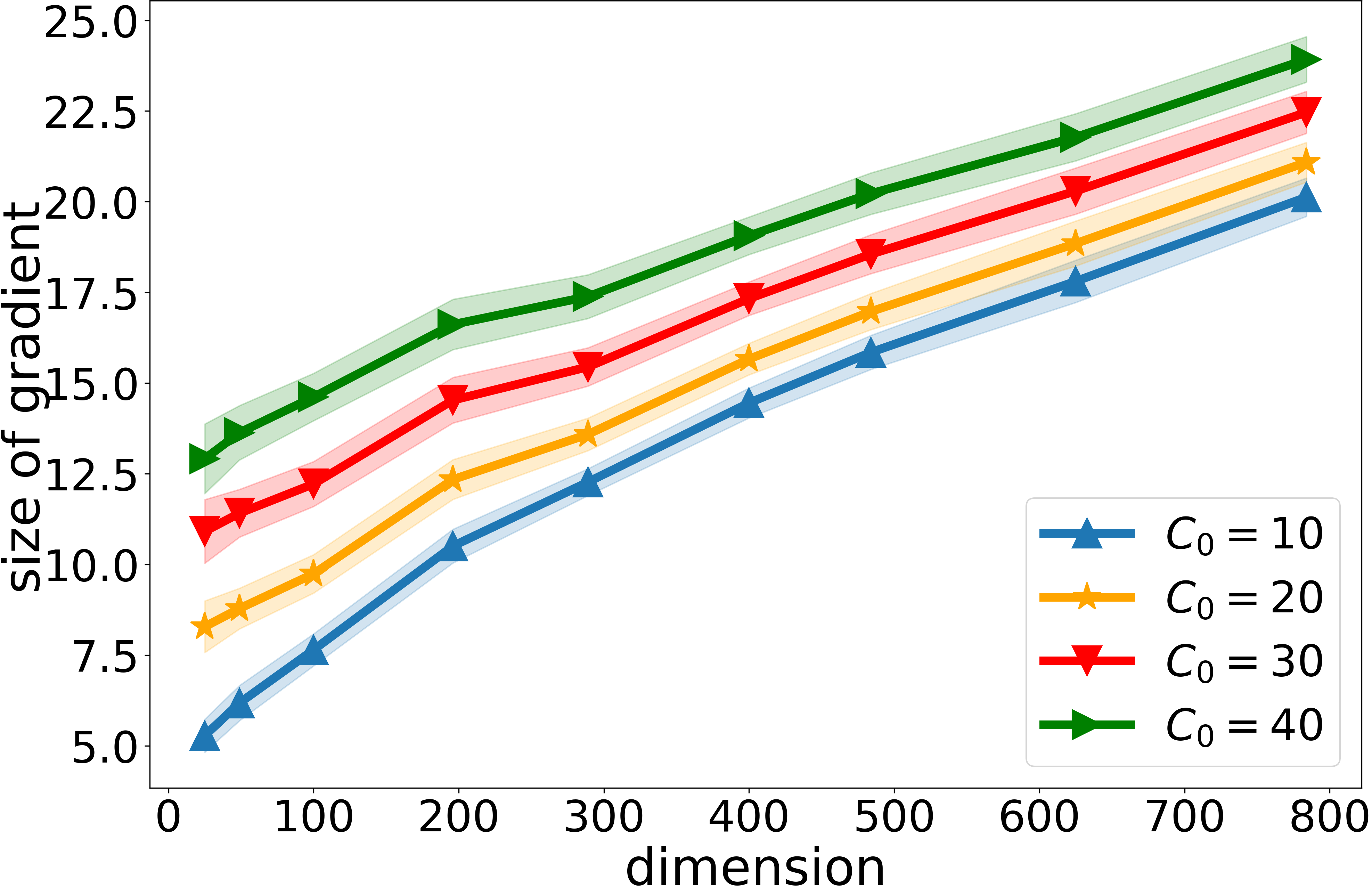}
&
\includegraphics[width=0.30\textwidth]{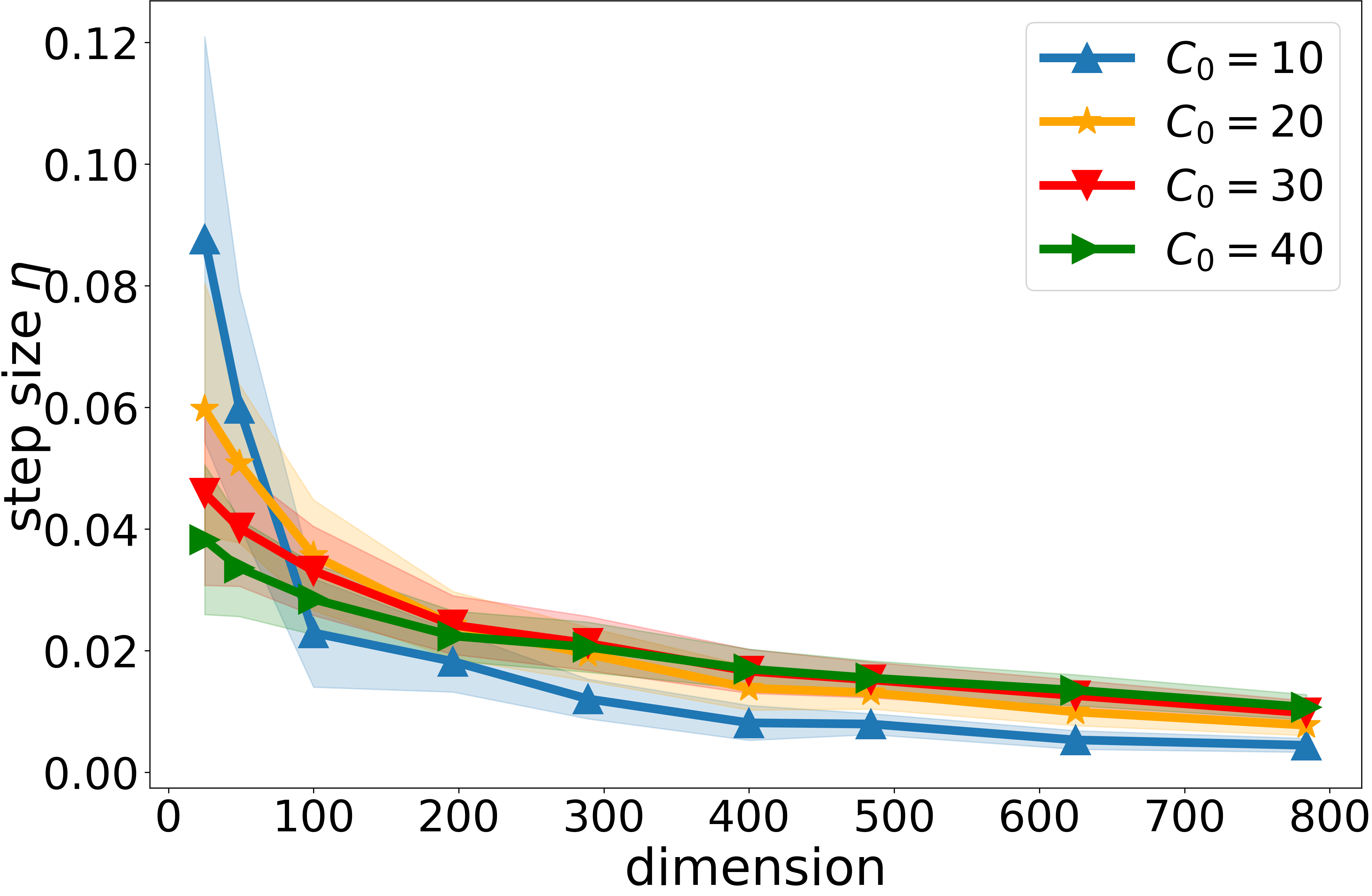}
\end{tabular}
\caption{\label{fig:diffC0_norm1_nobias} Normalize data $\|\x\|=1$. Smallest size of perturbation to switch the prediction (\textbf{left}),  norm of the gradient after training the neural networks (\textbf{middle}), smallest step size (\textbf{right}), as a function of input dimension $d$ for fix $m=10^5$ with different $C_0$. }
\end{figure*}

Figure \ref{fig:hist} is the histogram of smallest size of perturbation to switch the prediction, the norm of the gradient after training the neural networks, and the step size when $d=784, m=10^5, C_0=10$. The histogram exhibits a Gaussian distribution, and the step size to flip the prediction is small. ($\eta$ concentrates around $0.045$.)

\begin{figure*}[!htbp]
\centering
\begin{tabular}{ccc}
\includegraphics[width=0.30\textwidth]{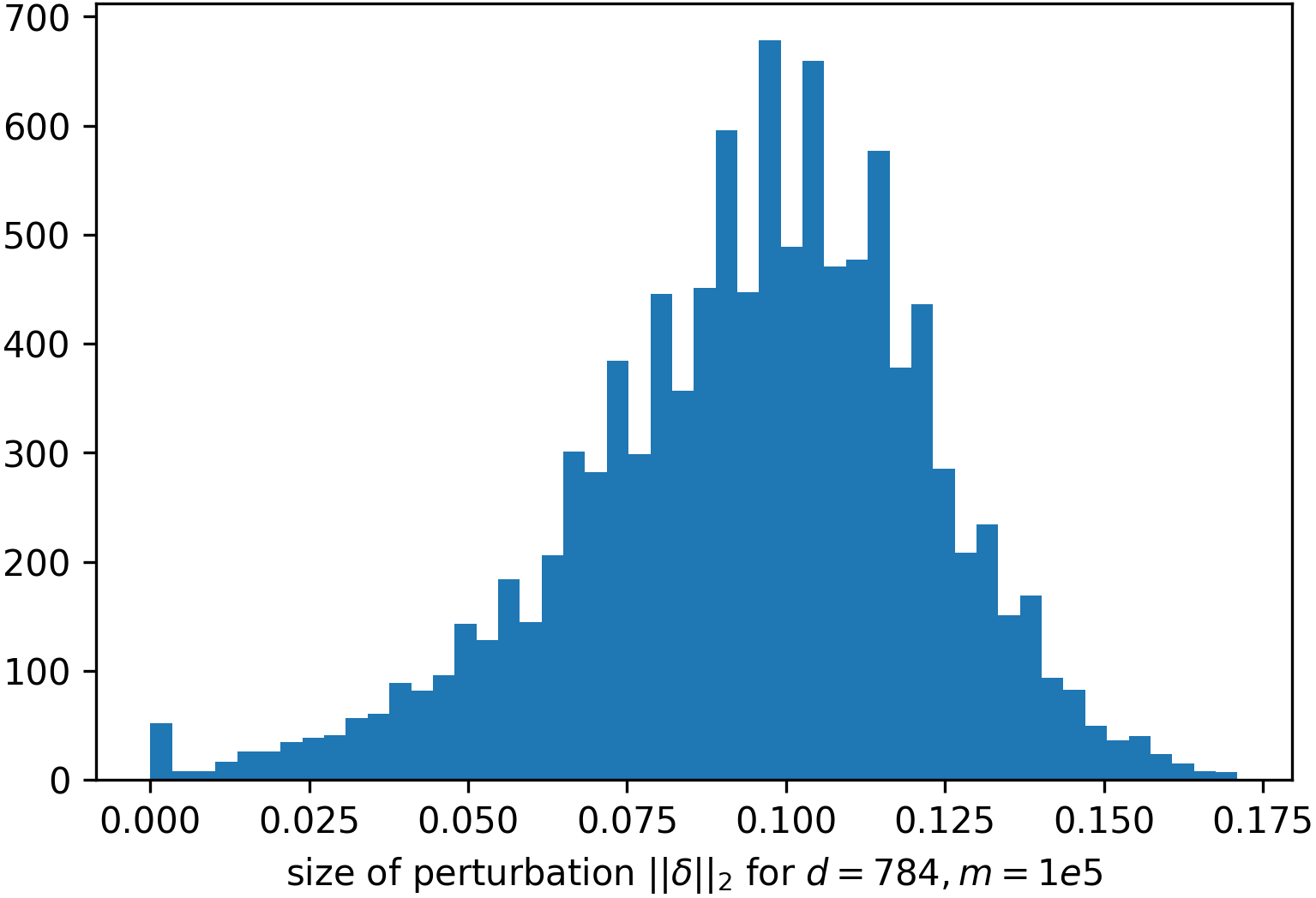}
&
\includegraphics[width=0.30\textwidth]{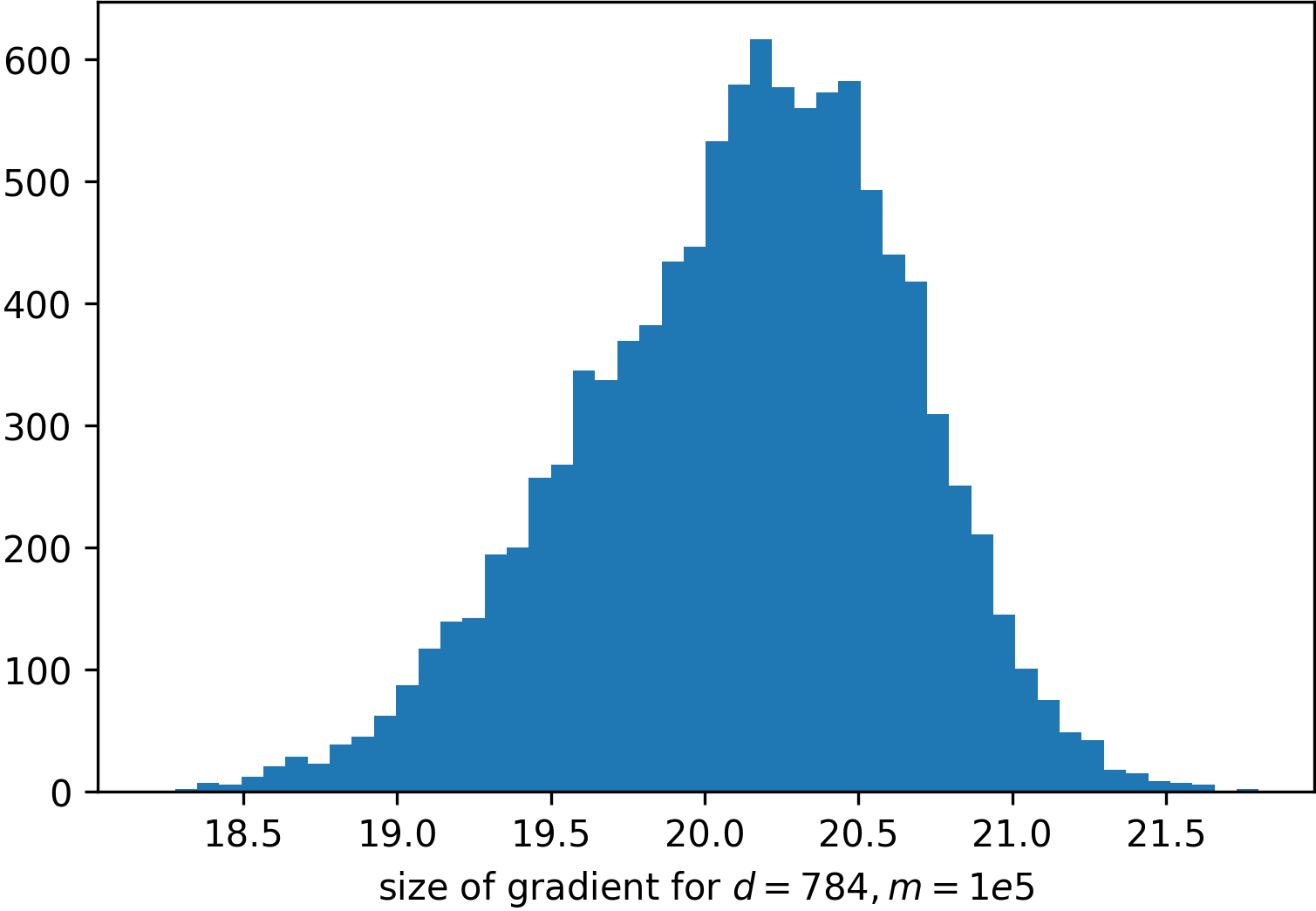}
&
\includegraphics[width=0.30\textwidth]{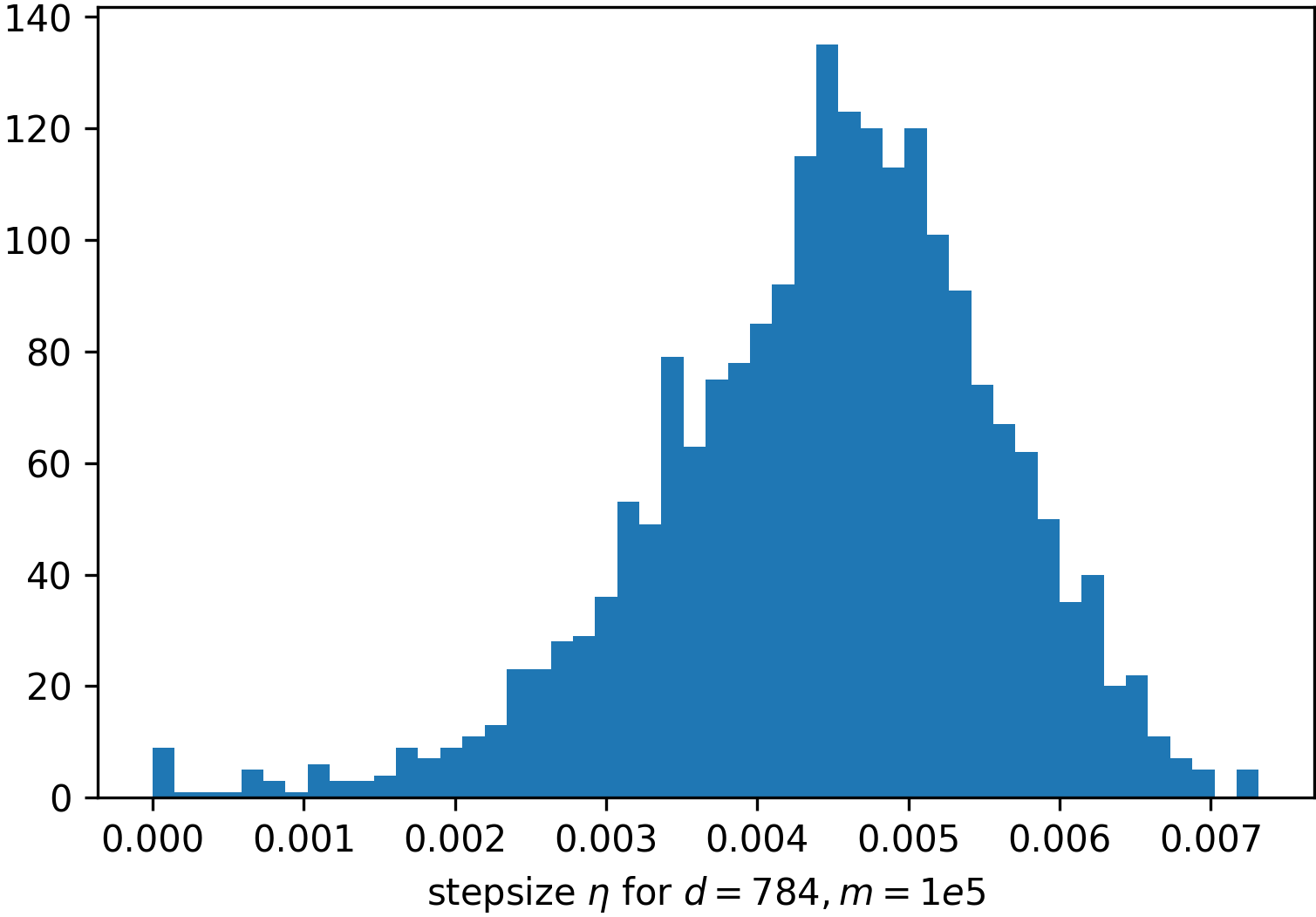}
\end{tabular}
\caption{\label{fig:hist}Histogram of smallest size of perturbation to switch the prediction, the norm of the gradient after training the neural networks, and the step size when $d=784, m=10^5, C_0=10$.}
\end{figure*}

In Algorithm \ref{algo:projadvtrain}, we describe our projected adversarial training algorithm in details. For generating adversarial example with budget $R$ at each round, we choose learning rate $\alpha=2.5\times R/100$ with $T_2=100$. This is a common choice, which is first introduced in~\citep{madry2017towards}. For updating the weight matrix, we choose learning rate $\beta=0.01$. We stop when the robust training accuracy is not increasing. 
\begin{algorithm}[!htbp]
\caption{Projected Adversarial Training}\label{algo:projadvtrain}
\begin{algorithmic}[1]
\STATE Training samples $(\X,\y)=\{\x_i, y_i\}_{i=1}^n$. Initialize $\w_{s,0}\sim N(0,\I_d), a_s\sim \operatorname{unif}\{\pm 1\}, \forall s\in[m]$ with fixed $\a$. 
Epochs $T_1, T_2$.
Learning rate $\alpha, \beta$, perturbation budget per sample $R$. Logistic loss $\ell$. Batch size $bs$. 
$\cP_\Delta$ is the projection operator onto the set $\Delta$.
\FOR{$t=1,\ldots,T_1$}
\FOR{$k=1,\ldots,T_2$} 
\STATE $\tilde\X=\X+\alpha\sum_{i=1}^n\nabla_{\x_i}\ell(y_if(\x_i;\a,\W_{t-1}))$
\STATE $\tilde\X = \cP_{\cB_{2,\infty}(\X,R)}(\tilde\X)$
\COMMENT{generate adversarial examples}
\ENDFOR
\FOR{$\lfloor\frac{n}{bs}\rfloor$ rounds}
\STATE Sample a mini-batch of size $bs$ from $(\tilde\X, \y)$ as $(\tilde\x_{i_j},y_{i_j})_{j=1}^{bs}$.
\STATE $\W_t = \W_{t-1} -\beta \nabla_{\W_{t-1}} \sum_{j=1}^{bs}\ell(y_{i_j} f(\tilde \x_{i_j};\a,\W_{t-1}))$
\ENDFOR
\STATE $\W_t=\cP_{\cB_{2,\infty}(\W_0,\frac{C_0}{\sqrt{m}})}(\W_t)$ \COMMENT{Project the weight matrix to satisfy lazy regime.}
\ENDFOR
\STATE return: $\W_{T_1}$
\end{algorithmic}
\end{algorithm}

\section{Proof of theorems}\label{sec:appendix_proof}
\begin{proof}[Proof of Theorem \ref{thm:main}]
From Lemma \ref{lemma:f_bd}, \ref{lemma:grad_bd}, \ref{lemma:unif_grad_diff_bd}, we have that 
\begin{align*}
\|\nabla f(\x;\a,\W)\|&\geq C'_1\sqrt{d} ,\\ |f(\x;\a,\W)|&\leq C'_2 , \\
\|\nabla f(\x;\a,\W)-\nabla f(\x+\delta;\a,\W)\|&\leq C'_3 \sqrt{d}
\textrm{ for }\|\delta\|\leq 
o(\frac{1}{\sqrt{d}})
\end{align*}

We simplify the notation as $f(\x)=f(\x;\a,\W)$. Define $\tilde\eta = \eta\|\nabla f(\x)\|^2$. Without loss of generality, assume $f(\x)>0$, let
$$\tilde\eta=-\frac{2f(\x)}{1-\sup_{\|\delta\|\leq\frac{\tilde\eta}{\|\nabla f(\x)\|}}\frac{\|\nabla f(\x+\delta)\|-\|f(\x)\|}{\|\nabla f(\x)\|}},$$
we have for the point $\x+\tilde\eta \frac{\nabla f(\x)}{\|\nabla f(\x)\|^2}$, \begin{align*}
&f(\x+\tilde\eta\frac{\nabla f(\x)}{\|\nabla f(\x)\|^2})\\
&=f(\x)+\int_0^1 f(\x+t\tilde\eta\frac{\nabla f(\x)}{\|\nabla f(\x)\|^2})'dt \tag{Fundamental theorem of calculus}\\
&= f(\x)+ \int_0^1 \tilde\eta\frac{\nabla f(\x)^\top}{\|\nabla f(\x)\|^2} \nabla f(\x+t\tilde\eta \frac{\nabla f(\x)}{\|\nabla f(\x)\|^2})dt\\
&= f(\x) + \tilde\eta + \int_0^1 \tilde\eta\frac{\nabla f(\x)^\top}{\|\nabla f(\x)\|}\frac{(\nabla f(\x+t\tilde\eta\frac{\nabla f(\x)}{\|\nabla f(\x)\|^2}) - \nabla f(\x))}{\|\nabla f(\x)\|} dt\\
&\leq f(\x)+\tilde\eta+|\tilde\eta|\int_0^1\frac{\|\nabla f(\x+t\tilde\eta\frac{\nabla f(\x)}{\|\nabla f(\x)\|^2})-\nabla f(\x)\|}{\|\nabla f(\x)\|}dt \tag{Cauchy–Schwarz}\\
&\leq f(\x)+\tilde\eta+|\tilde\eta|\sup_{\delta\in\R^d:\|\delta\|\leq \frac{\tilde\eta}{\|\nabla f(\x)\|}} \frac{\|\nabla f(\x+\delta)-\nabla f(\x)\|}{\|\nabla f(\x)\|}\\
&=-f(\x)<0
\end{align*}

Note $|\tilde\eta|=\Theta(1)$ and thus $\eta\leq O(\frac1d)$, $\|\eta\nabla f(\x)\|=\|\tilde\eta\frac{\nabla f(\x)}{\|\nabla f(x)\|^2}\|\leq O(\frac{1}{\sqrt{d}})$.
\end{proof}

In order to proof the main theorem, we start by giving some standard theorems, which will be used in later proof.
\begin{theorem}[Berstein's inequality]\label{thm:berstein}
Let $z_1,\ldots,z_n$ be independent real-valued random variables. Assume that there exist positive numbers $v$ and $c$ such that
\begin{align*}
\sum_{i=1}^n\E[z_i^2]\leq v \textrm{ and }\sum_{i=1}^n\E[|z_i|^q]\leq \frac{q!}{2}vc^{q-2} \textrm{ for all integers }q\geq 3.
\end{align*}
If $S=\sum_{i=1}^n(z_i-\E z_i)$, then for all $t>0$,
\begin{align*}
\P(S\geq\sqrt{2vt}+ct)\leq e^{-t}.
\end{align*}
\end{theorem}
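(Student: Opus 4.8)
The plan is to prove this via the classical Cram\'er--Chernoff (exponential moment) method, controlling the moment generating function of each centered summand through the prescribed moment-growth condition. Write $Y_i = z_i - \E z_i$ so that $S = \sum_i Y_i$ with $\E Y_i = 0$, and fix a parameter $\lambda \in (0, 1/c)$. Applying Markov's inequality to $e^{\lambda S}$ together with independence gives
\[
\P(S \ge s) \le e^{-\lambda s}\,\E[e^{\lambda S}] = e^{-\lambda s}\prod_{i=1}^n \E[e^{\lambda Y_i}],
\]
so the first task is to bound each factor $\E[e^{\lambda Y_i}]$.

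Next I would pass to the cumulant and use $\log u \le u-1$ together with $\E Y_i = 0$ to write
\[
\sum_{i=1}^n \log \E[e^{\lambda Y_i}] \le \sum_{i=1}^n \big(\E[e^{\lambda Y_i}] - 1\big) = \sum_{i=1}^n \E\big[e^{\lambda Y_i} - 1 - \lambda Y_i\big].
\]
Expanding $e^u - 1 - u = \sum_{q\ge 2} u^q/q!$ and invoking the hypotheses $\sum_i \E[z_i^2] \le v$ for the $q=2$ term and $\sum_i \E[|z_i|^q] \le \tfrac{q!}{2}\, v c^{q-2}$ for $q \ge 3$, I would sum the resulting geometric series in $c\lambda$ to obtain the Bernstein MGF bound
\[
\sum_{i=1}^n \log \E[e^{\lambda Y_i}] \le \frac{v \lambda^2 / 2}{1 - c\lambda}, \qquad 0 < \lambda < 1/c.
\]
The one genuine subtlety is that the moment hypotheses are stated for the raw variables $z_i$ while the exponent involves the centered $Y_i$; I would handle this by the standard device of controlling $\E[Y_i^q]$ through the absolute moments of $z_i$ and absorbing the $q=2$ centering into $v \ge \sum_i \E[z_i^2] \ge \sum_i \var(z_i)$, which costs nothing in the final constants.

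Finally I would optimize the Chernoff exponent. Combining the two displays yields $\P(S \ge s) \le \exp\!\big(-\lambda s + \tfrac{v\lambda^2/2}{1-c\lambda}\big)$; taking the Fenchel--Legendre transform over $\lambda \in (0,1/c)$ gives the closed form $\P(S \ge s) \le \exp\!\big(-\tfrac{v}{c^2}\, h(cs/v)\big)$ with $h(u) = 1 + u - \sqrt{1+2u}$. The proof then concludes via the algebraic fact that the generalized inverse of this dual at level $t$ is exactly $\sqrt{2vt}+ct$, i.e. the choice $s = \sqrt{2vt} + ct$ makes the exponent equal to $-t$, giving $\P(S \ge \sqrt{2vt}+ct) \le e^{-t}$. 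I expect the only real obstacle to be the bookkeeping: justifying the geometric-series bound on the MGF despite the centered-versus-raw moment mismatch, and then verifying the final substitution, both routine but requiring care to land the clean constant multiplying $ct$.
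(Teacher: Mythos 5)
The paper does not actually prove this statement: it is quoted verbatim as a standard imported result (it is Theorem~2.10 in Boucheron--Lugosi--Massart's \emph{Concentration Inequalities}), so there is no in-paper argument to compare against. Your outline is the standard Cram\'er--Chernoff proof of exactly that theorem, and its architecture --- Markov on $e^{\lambda S}$, the cumulant bound via $\log u \le u-1$, the geometric series giving $\frac{v\lambda^2}{2(1-c\lambda)}$, and the Fenchel--Legendre inversion $h_1(u)=1+u-\sqrt{1+2u}$, $h_1^{-1}(t)=t+\sqrt{2t}$ yielding $\sqrt{2vt}+ct$ --- is correct.

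The one genuine flaw is in how you propose to resolve the centered-versus-raw mismatch. Your plan is to bound $\E[|Y_i|^q]$ by the absolute moments of $z_i$ and you assert this ``costs nothing in the final constants.'' That is false for $q\ge 3$: the generic route $\|Y_i\|_q\le\|z_i\|_q+|\E z_i|\le 2\|z_i\|_q$ gives $\E[|Y_i|^q]\le 2^q\E[|z_i|^q]$, which inflates the hypotheses to $\frac{q!}{2}(4v)(2c)^{q-2}$ and lands you at $\P(S\ge\sqrt{8vt}+2ct)\le e^{-t}$, not the stated bound. The standard fix is to apply $\log u\le u-1$ \emph{before} centering, i.e.
\begin{align*}
\log\E\big[e^{\lambda(z_i-\E z_i)}\big]=\log\E\big[e^{\lambda z_i}\big]-\lambda\E z_i\le\E\big[e^{\lambda z_i}-1-\lambda z_i\big],
\end{align*}
so that the raw variable $z_i$ sits inside $\phi(u)=e^u-u-1\le\sum_{q\ge2}|u|^q/q!$ and the hypotheses apply directly with no loss. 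With that substitution the rest of your argument goes through and produces the clean constants.
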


We will also use repeatly that 
\begin{align}\label{eq:exp_normal_q}
\E_{z\sim\cN(0,1)}[|z|^q]\leq(q-1)!!\leq\frac{q!}{2},
\end{align}
as well as the following concentration of $\chi^2$ random variables: let $z_1, \ldots, z_m$ be i.i.d. standard Gaussians, then with probability at least $1-\gamma$, one has
\begin{align}\label{eq:chiineq}
\bigg|\sum_{s=1}^m z_s^2-m\bigg|\leq 4\sqrt{m\log(2/\gamma)}
\end{align}

\begin{theorem}\label{thm:binomial}(Chernoff's inequality)
If $z_1, z_2,\ldots,z_N$ are independent Bernoulli random variables with parameters $\mu_i$. Let $S_N=\sum_i z_i$ and $p=\E S_N=\sum_{i}\mu_i$, then for $t>p$,
\begin{align*}
P(S_N\geq t)\leq \exp(-p)\bigg(\frac{ep}{t}\bigg)^t
\end{align*}
\end{theorem}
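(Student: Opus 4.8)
The plan is to use the exponential moment method (the Chernoff bounding technique), which is the natural route for tail bounds on sums of independent random variables. First I would introduce a free parameter $\lambda > 0$ and apply Markov's inequality to the exponentiated variable: since the event $\{S_N \geq t\}$ coincides with $\{e^{\lambda S_N} \geq e^{\lambda t}\}$, we obtain
\[
P(S_N \geq t) \leq e^{-\lambda t}\,\E[e^{\lambda S_N}].
\]
The entire argument then reduces to controlling the moment generating function $\E[e^{\lambda S_N}]$ and then optimizing over the choice of $\lambda$.

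Next I would exploit independence to factor the MGF as $\E[e^{\lambda S_N}] = \prod_{i=1}^N \E[e^{\lambda z_i}]$. For each Bernoulli term a direct computation gives $\E[e^{\lambda z_i}] = 1 + \mu_i(e^\lambda - 1)$, which I would bound using the elementary inequality $1+x \leq e^x$ to get $\E[e^{\lambda z_i}] \leq \exp(\mu_i(e^\lambda - 1))$. Multiplying across $i$ and recalling $p = \sum_i \mu_i$ yields the clean bound $\E[e^{\lambda S_N}] \leq \exp(p(e^\lambda - 1))$, so that
\[
P(S_N \geq t) \leq \exp\!\big(p(e^\lambda - 1) - \lambda t\big).
\]

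The remaining step is to optimize the exponent over $\lambda > 0$. Differentiating $p(e^\lambda - 1) - \lambda t$ in $\lambda$ and setting the derivative to zero gives $\lambda^\star = \log(t/p)$, which is strictly positive precisely because of the hypothesis $t > p$; this is the only place the assumption is used, and it is what keeps the optimal $\lambda$ in the admissible range $\lambda > 0$. Substituting $\lambda^\star$ back into the exponent produces $t - p - t\log(t/p)$, and rewriting this as $-p + t\log(ep/t)$ recovers exactly the claimed bound $e^{-p}(ep/t)^t$. I expect no genuine obstacle here: the argument is entirely routine once the Chernoff template is in place, and the only point requiring care is verifying the sign of $\lambda^\star$, both to apply Markov's inequality in the correct direction and to ensure the optimizing $\lambda$ is feasible.
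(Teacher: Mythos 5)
Your argument is correct and complete: the exponentiation--Markov step, the factorization of the moment generating function via independence, the bound $1+x\leq e^{x}$, and the optimization at $\lambda^{\star}=\log(t/p)$ all check out, and substituting $\lambda^{\star}$ indeed yields $\exp(t-p-t\log(t/p))=e^{-p}(ep/t)^{t}$. The paper states this as a standard auxiliary theorem without supplying a proof, so there is nothing to compare against; your derivation is the canonical one and would serve as a valid proof of the stated inequality.
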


\begin{theorem}\label{thm:concentration_lip}(Theorem 2.26 in \cite{wainwright2019high})
Let $\z=(z_1, z_2, \ldots, z_n)$ be a vector of i.i.d standard Gaussian variables, and let $f:\R^n\rightarrow \R$ be $L$-Lipschitz w.r.t. the Euclidean norm. Then we have
\begin{align*}
P(|f(\z)-\E{f(\z)}|\geq t)\leq 2e^{-\frac{t^2}{2L^2}} \textrm{ for all } t\geq 0
\end{align*}
\end{theorem}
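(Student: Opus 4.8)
The plan is to establish the sharp sub-Gaussian bound on the moment generating function of $f(\z) - \E[f(\z)]$ and then convert it to a tail bound by the standard exponential Markov (Chernoff) argument. Concretely, I would show that for every $\lambda \in \R$,
\begin{align*}
\E\left[e^{\lambda(f(\z) - \E[f(\z)])}\right] \le e^{\lambda^2 L^2/2}.
\end{align*}
Applying Markov's inequality to $e^{\lambda(f(\z)-\E[f(\z)])}$ and optimizing over $\lambda = t/L^2$ then yields $P(f(\z) - \E[f(\z)] \ge t) \le e^{-t^2/(2L^2)}$; applying the same bound to the $L$-Lipschitz function $-f$ (for which $\E[-f(\z)] = -\E[f(\z)]$) and taking a union bound produces the stated two-sided inequality with the prefactor $2$.

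To prove the MGF bound with the sharp constant, I would first reduce to smooth $f$ by mollification: convolving $f$ with a Gaussian kernel of vanishing bandwidth gives smooth functions $f_\epsilon$ that are still $L$-Lipschitz, converge to $f$ pointwise, and whose MGFs converge to that of $f$ (finiteness throughout is guaranteed because a Lipschitz function of a Gaussian has a finite exponential moment of every order). For smooth $f$ I would invoke the Gaussian logarithmic Sobolev inequality, $\operatorname{Ent}(g^2) \le 2\,\E[\|\nabla g\|_2^2]$, and run the Herbst argument. Taking $g = e^{\lambda f/2}$ and writing $H(\lambda) = \E[e^{\lambda f(\z)}]$, the Lipschitz bound $\|\nabla f\|_2 \le L$ turns the log-Sobolev inequality into the differential inequality
\begin{align*}
\lambda H'(\lambda) - H(\lambda)\log H(\lambda) \le \tfrac{1}{2}\lambda^2 L^2 H(\lambda).
\end{align*}
Dividing by $\lambda^2 H(\lambda)$, the left-hand side is exactly $\frac{d}{d\lambda}\big(\lambda^{-1}\log H(\lambda)\big)$, so $\lambda^{-1}\log H(\lambda)$ has derivative at most $L^2/2$; integrating from $0$, together with the boundary value $\lim_{\lambda\to 0}\lambda^{-1}\log H(\lambda) = \E[f(\z)]$, gives $\log H(\lambda) \le \lambda \E[f(\z)] + \lambda^2 L^2/2$, which is precisely the claimed MGF bound.

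The main obstacle is obtaining the sharp constant $\tfrac{1}{2L^2}$ rather than a worse absolute constant. A more elementary Gaussian interpolation -- comparing $f(\z)$ to $f(\z')$ along the rotation path $\z\sin\theta + \z'\cos\theta$ with an independent copy $\z'$, using rotation invariance and the Gaussian MGF -- avoids the log-Sobolev machinery but only yields the constant $\pi^2/8$ in the exponent, which is strictly larger than $1/2$. Recovering the optimal constant is exactly what forces the use of the log-Sobolev inequality (or an equivalent semigroup/isoperimetric input), so I would treat the Gaussian log-Sobolev inequality as the key external ingredient, either cited or proved by tensorizing its two-point analogue. The remaining technical points -- justifying differentiation under the expectation in $H'(\lambda)$, the boundary limit as $\lambda\to 0$, and the passage $f_\epsilon \to f$ -- are routine given the finiteness of all exponential moments.
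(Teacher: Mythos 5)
Your proposal is correct, but note that the paper does not actually prove this statement at all: it is imported verbatim as a classical result, cited as Theorem 2.26 of \cite{wainwright2019high}, and used as a black box (in the proof of Lemma \ref{lemma:unif_grad_diff_bd}). So there is no in-paper argument to compare against; the relevant comparison is with the standard literature. Your route --- mollify to reduce to smooth $f$, apply the Gaussian logarithmic Sobolev inequality $\operatorname{Ent}(g^2)\le 2\,\E[\|\nabla g\|_2^2]$ with $g=e^{\lambda f/2}$, run the Herbst argument to get the differential inequality for $H(\lambda)=\E[e^{\lambda f}]$, integrate $\frac{d}{d\lambda}\bigl(\lambda^{-1}\log H(\lambda)\bigr)\le L^2/2$ from the boundary value $\E[f]$, and finish with Chernoff at $\lambda=t/L^2$ plus a union bound over $\pm f$ --- is the standard modern proof and does deliver the sharp sub-Gaussian parameter $L^2$, hence the stated tail $2e^{-t^2/(2L^2)}$. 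You are also right about the one genuine subtlety: the elementary rotation-interpolation argument (the one Wainwright actually presents for the differentiable case) only yields the MGF bound $e^{\lambda^2\pi^2L^2/8}$ and hence a constant $2/\pi^2$ rather than $1/2$ in the exponent, so the log-Sobolev (or equivalent isoperimetric/semigroup) input is genuinely needed for the constant as stated. For the purposes of this paper either constant would suffice, since the theorem is only used to control $\|\frac{1}{\sqrt{m}}\sum_{s\in S}a_s\w_{s,0}\|$ up to absolute constants, but your proof establishes the statement exactly as written.
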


In Lemma \ref{lemma:ntk_diffsgn_wxchange}, we let $S_v$ denote the set of neurons that change sign between weights $\w_{0}$ and $\w$ for the $\x$, $S_v'$ as the set of neurons that change sign between weights $\w_{0}$ and $\w$ for the $\x+\delta$. We show that the size of $S_v$ and $S_v'$ is small as long as the weights stay close to initialization. 
\begin{lemma}\label{lemma:ntk_diffsgn_wxchange}
Define the following
\begin{align*}
S_v\!&:=\!\big\{ s\big| \exists \W,\W \in \mathcal{B}_{2,\infty}\left(\W_0, V\right), \1[\langle \w_s,\x \rangle>0] \!\neq\! \1[\langle \w_{s,0},\x \rangle>0]\big\}
\\
S_v'\!&:=\!\big\{  s\big| \exists \delta,\|\delta\|\leq R\leq 0.5, \exists \W,\W \in \mathcal{B}_{2,\infty}\left(\W_0, V\right), \1[\langle \w_s,\x+\delta \rangle>0] \!\neq\! \1[\langle \w_{s,0},\x+\delta \rangle>0]\big\}
\end{align*}

Then with probability at least $1-\gamma$, the following hold:
\begin{align*}
|S_v|&\leq \bigg|\big\{s\big| |\langle \w_{s,0},\x \rangle|\leq V\big\}\bigg| = \sum_{s=1}^m \1\big[|\langle\w_{s,0}, \x\rangle|\leq V \big] 
\leq  Vm + \sqrt{\frac{m\log(1/\gamma)}{2}} 
\\
|S_v'|&\leq \bigg|\big\{s\big| |\langle \w_{s,0},\x+\delta \rangle|\!\leq\! V\big\}\bigg| \!=\! \sum_{s=1}^m \1\big[|\langle\w_{s,0}, \x+\delta\rangle|\!\leq\! V\big] \!\leq\!  \bigg(\! Vm + \sqrt{\frac{m\log(1/\gamma)}{2}}\bigg)(1+R)
\end{align*}

\end{lemma}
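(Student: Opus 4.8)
The plan is to split the argument into a deterministic set-inclusion step that removes the existential quantifier over $\W$, followed by a standard concentration bound on a sum of $m$ i.i.d.\ indicator variables. I would prove the bound on $|S_v|$ in full and then obtain $|S_v'|$ by the identical reasoning applied at the shifted point $\x+\delta$ (with $\delta$ held fixed, as it is whenever the lemma is invoked), the only new ingredient being the norm inflation $\|\x+\delta\|\le 1+R$.

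For the inclusion, fix a neuron $s$. If $\1[\langle\w_s,\x\rangle>0]\neq\1[\langle\w_{s,0},\x\rangle>0]$, then the pre-activation changes sign, so $|\langle\w_{s,0},\x\rangle|\le|\langle\w_s-\w_{s,0},\x\rangle|\le\|\w_s-\w_{s,0}\|\,\|\x\|\le V$ by Cauchy--Schwarz and $\|\x\|=1$. The point of this computation is that the resulting necessary condition $|\langle\w_{s,0},\x\rangle|\le V$ involves only the fixed random vector $\w_{s,0}$ and is independent of which admissible $\W$ realizes the flip; consequently the existential over the entire ball $\cB_{2,\infty}(\W_0,V)$ is absorbed at no cost, yielding $S_v\subseteq\{s:|\langle\w_{s,0},\x\rangle|\le V\}$ with no union bound or covering argument. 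Repeating the computation at $\x+\delta$ gives the analogous inclusion $S_v'\subseteq\{s:|\langle\w_{s,0},\x+\delta\rangle|\le V\|\x+\delta\|\}$ with the scale-invariant threshold $V\|\x+\delta\|$.

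For the cardinality bound, I would observe that since $\w_{s,0}\sim\cN(\0,\I_d)$ and $\|\x\|=1$, each projection $\langle\w_{s,0},\x\rangle$ is a standard Gaussian, so $Y_s:=\1[|\langle\w_{s,0},\x\rangle|\le V]$ are i.i.d.\ Bernoulli with mean $p=\P_{Z\sim\cN(0,1)}(|Z|\le V)\le\sqrt{2/\pi}\,V\le V$, where the small-ball estimate follows by bounding the Gaussian density by its value $1/\sqrt{2\pi}$ at the origin. Applying Hoeffding's inequality to the bounded i.i.d.\ sum $\sum_{s=1}^m Y_s$ with mean $mp\le Vm$ gives $\P\big(\sum_s Y_s\ge Vm+t\big)\le e^{-2t^2/m}$; setting $t=\sqrt{m\log(1/\gamma)/2}$ makes the right-hand side at most $\gamma$ and reproduces $|S_v|\le Vm+\sqrt{m\log(1/\gamma)/2}$ on the good event. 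For $|S_v'|$ the same three lines go through with $\x+\delta$ in place of $\x$: the scale invariance $\langle\w_{s,0},\x+\delta\rangle/\|\x+\delta\|\sim\cN(0,1)$ keeps the Bernoulli mean at most $V$, and the benign multiplicative $(1+R)$ in the stated bound is absorbed by the norm inflation $\|\x+\delta\|\le 1+R$ when the threshold is written in absolute rather than scale-invariant form.

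The main obstacle is conceptual rather than computational and lives entirely in the inclusion step: one must recognize that the flip-possibility predicate, although written with an existential over the uncountable admissible set of weight matrices, collapses per neuron to a margin condition on the single Gaussian vector $\w_{s,0}$. This collapse is what avoids an $\varepsilon$-net over $\cB_{2,\infty}(\W_0,V)$ and hence keeps the count at $O(\sqrt m)$ rather than paying a covering-number factor that would inflate the bound; once the inclusion is in place, the Gaussian small-ball estimate and Hoeffding concentration are routine. A secondary point worth verifying is that $\delta$ must be treated as fixed so that the indicators remain independent across neurons: taking a genuine union over all $\delta\in\cB_2(\0,R)$ would widen the near-equatorial band orthogonal to $\x$ by an angle of order $R$ and could enlarge the count to $\Theta(m)$, so the fixed-$\delta$ reading is the one consistent with the $O(\sqrt m)$ conclusion.
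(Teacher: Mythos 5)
Your proof follows the same route as the paper's: the Cauchy--Schwarz inclusion that collapses the existential over $\W$ to the margin condition $|\langle \w_{s,0},\x\rangle|\le V$ on the initialization alone, the Gaussian small-ball estimate $p\le \sqrt{2/\pi}\,V\le V$, and Hoeffding's inequality, with the $(1+R)$ factor for $S_v'$ arising from $\|\x+\delta\|\le 1+R$ exactly as in the paper. Your closing caveat that $\delta$ must be held fixed in the concentration step is also consistent with the paper's own proof, which likewise applies Hoeffding to the indicators at a fixed $\delta$ despite the existential quantifier appearing in the displayed definition of $S_v'$.
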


\begin{proof}[Proof of Lemma \ref{lemma:ntk_diffsgn_wxchange}]

Define $\v_s=\w_s-\w_{s,0}$. Note that $s\in S_v$ implies $\forall 1\leq s\leq m$,
\begin{align*}
\big|\langle \w_{s,0}, \x \rangle\big| \leq \sup_{\|\v_s\|\leq V}
\big|\langle \|\w_s-\w_{s,0}\|, \x \rangle \big|
\leq \sup_{\|\v_s\|\leq V} \|\v_s\|_2\|\x\|_2
\leq V
\end{align*}

Thus we have
\begin{align*}
\E\bigg[\frac{1}{m}\sum_{s=1}^m \1\big[|\langle\w_{s,0}, \x\rangle|\leq V\big]\bigg]
=P\big(|\langle \w_{s,0},\x \rangle|\leq V\big) \leq \frac{2V}{\|\x\|_2^2\sqrt{2\pi}}\leq V,
\end{align*}
where the expectation is with respect to the randomness in initialization, and the inequality holds since $\langle \w_{s,0},\x \rangle$ is a Gaussian r.v. with variance $1$.
By Hoeffding inequality, with probability at least $1-\gamma$,
\begin{align*}
&\frac{1}{m}\sum_{s=1}^m \1\big[|\langle \w_{s,0}, \x\rangle|\leq V\big]\\
&\leq \E\bigg[\frac{1}{m}\sum_{s=1}^m \1\big[|\langle\w_{s,0}, \x\rangle|\leq V\big]\bigg]
+\sqrt{\frac{\log(1/\gamma)}{2m}} \tag{Hoeffding inequality}\\
&\leq V+\sqrt{\frac{\log(1/\gamma)}{2m}}
\end{align*}

As a result, with probability at least $1-\gamma$, we arrive at the following upperbound on the size of $S_v$:
\begin{align*}
|S_v|\leq \bigg|\big\{s\big| |\langle \w_{s,0},\x \rangle|&\leq V\big\}\bigg| = \sum_{s=1}^m \1\big[|\langle\w_{s,0}, \x\rangle|\leq V\big] \leq  Vm + \sqrt{\frac{m\log(1/\gamma)}{2}}
\end{align*}
Same way we can bound the size of $S_v'$. 
$s\in S_v'$ implies $\forall 1\leq s\leq m$,
\begin{align*}
\big|\langle \w_{s,0}, \x+\delta \rangle\big| \leq \sup_{\|\v_s\|\leq V, \|\delta\|\leq R}
\big|\langle \v_s, \x+\delta \rangle \big|
\leq \sup_{\|\v_s\|\leq V, \|\delta\|\leq R} \|\v_s\|_2\|\x+\delta\|_2
\leq V(1+R)
\end{align*}

Thus we have
\begin{align*}
\E\bigg[\frac{1}{m}\sum_{s=1}^m \1\big[|\langle\w_{s,0}, \x+\delta\rangle|\leq V(1+R)\big]\bigg]
=P\big(|\langle \w_{s,0},\x+\delta \rangle|\leq V(1+R)\big) \leq  V(1+R),
\end{align*}
where the expectation is with respect to the randomness in initialization, and the inequality holds since $\langle \w_{s,0},\x \rangle$ is a Gaussian r.v. with variance $1$.
By Hoeffding inequality, with probability at least $1-\gamma$,
\begin{align*}
&\frac{1}{m}\sum_{s=1}^m \1\big[|\langle \w_{s,0}, \x+\delta\rangle|\leq V(1+R)\big]\\
&\leq \E\bigg[\frac{1}{m}\sum_{s=1}^m \1\big[|\langle\w_{s,0}, \x+\delta\rangle|\leq V(1+R)\big]\bigg]
+\sqrt{\frac{\log(1/\gamma)}{2m}}(1+R) \tag{Hoeffding inequality}\\
&\leq V(1+R)+\sqrt{\frac{\log(1/\gamma)}{2m}}(1+R)
\end{align*}

As a result, with probability at least $1-\gamma$, we arrive at the following upperbound on the size of $S_v$:
\begin{align*}
|S_v|\!\leq\! \bigg|\big\{s\big| |\langle \w_{s,0},\x+\delta \rangle|&\!\leq\! V\big\}\bigg| = \sum_{s=1}^m \1\big[|\langle\w_{s,0}, \x+\delta\rangle|\!\leq\! V\big]\! \leq \! \bigg(\! Vm \!+\! \sqrt{\frac{m\log(1/\gamma)}{2}}\bigg)(1+R)
\end{align*}
\end{proof}

Now we begin our proof. Essentially we want to lower bound $\|\nabla f(\x;\a,\W)\|$,  upper bound $|f(\x;\a,\W)|$ and upper bound $\|\nabla f(\x;\a,\W)-\nabla f(\x+\delta;\a,\W)\|$. Lemma \ref{lemma:f_bd} gives the upper bound of $|f(\x;\a,\W)|$ as $O(1)$.

\begin{lemma}\label{lemma:f_bd}
For any $\x$, with probability at least $1-\gamma$, the following holds for all $\W \in \mathcal{B}_{2,\infty}\left(\W_0,\frac{C_0}{\sqrt{m}}\right)$, 
\begin{align*}
|f(\x;\a,\W)|\leq \sqrt{2\log(2/\gamma)}+\frac{2\log(2/\gamma)}{\sqrt{m}} + C_0
\end{align*}
Particularly, there exists $C>0$ such that for $m\geq C\log(2/\gamma)$, we have
\begin{align*}
|f(\x;\a,\W)|\leq 2\sqrt{\log(2/\gamma)}+C_0
\end{align*}

\end{lemma}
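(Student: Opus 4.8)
The plan is to split $|f(\x;\a,\W)|$ by the triangle inequality into a term at initialization and a deviation term, exactly as in the informal sketch:
\begin{equation*}
|f(\x;\a,\W)| \leq |f(\x;\a,\W_0)| + |f(\x;\a,\W) - f(\x;\a,\W_0)|,
\end{equation*}
and to control the two pieces separately. The first piece carries no dependence on $\W$, so it is a pure concentration statement about the random initialization; the second piece is where the quantifier ``for all $\W \in \cB_{2,\infty}(\W_0, C_0/\sqrt{m})$'' lives, and the key observation is that it can be bounded \emph{uniformly} over the whole ball without any $\varepsilon$-net, simply by taking the worst case in each summand.

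For the deviation term I would use that ReLU is $1$-Lipschitz together with $\|\x\|=1$. For each neuron,
\begin{equation*}
|\sigma(\w_s^\top \x) - \sigma(\w_{s,0}^\top \x)| \leq |(\w_s - \w_{s,0})^\top \x| \leq \|\w_s - \w_{s,0}\|\,\|\x\| \leq \frac{C_0}{\sqrt{m}},
\end{equation*}
where the last step uses $\W \in \cB_{2,\infty}(\W_0, C_0/\sqrt{m})$. Since $|a_s|=1$, summing over $s$ and multiplying by $1/\sqrt{m}$ yields $|f(\x;\a,\W)-f(\x;\a,\W_0)| \leq \frac{1}{\sqrt{m}}\cdot m\cdot\frac{C_0}{\sqrt{m}} = C_0$. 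Crucially this holds simultaneously for every $\W$ in the ball, because the per-neuron estimate already used the worst admissible displacement; no union bound is needed. This is also the precise point at which the $O(1/\sqrt{m})$ radius is forced, since a larger radius would let this term grow with $m$.

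For the initialization term I would invoke Bernstein's inequality (Theorem~\ref{thm:berstein}). Set $z_s := a_s \sigma(\w_{s,0}^\top \x)$, so that $f(\x;\a,\W_0) = \frac{1}{\sqrt{m}}\sum_{s=1}^m z_s$; the $z_s$ are independent and, since $\E[a_s]=0$, have mean zero. Because $\|\x\|=1$ we have $\w_{s,0}^\top \x \sim \cN(0,1)$, and the rectified-Gaussian identity $\E[\sigma(Z)^q] = \tfrac12\E[|Z|^q]$ together with the moment bound \eqref{eq:exp_normal_q} gives $\E[z_s^2] \leq 1$ and $\sum_{s=1}^m \E[|z_s|^q] \leq \frac{q!}{2}\,m$ for all integers $q\geq 3$, so the hypotheses of Bernstein hold with $v = m$ and $c = 1$. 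Taking $t = \log(2/\gamma)$ and applying the bound to both $S$ and $-S$ yields, with probability at least $1-\gamma$, an estimate $|S| \leq \sqrt{2m\log(2/\gamma)} + c'\log(2/\gamma)$; dividing by $\sqrt{m}$ gives $|f(\x;\a,\W_0)| \leq \sqrt{2\log(2/\gamma)} + \frac{2\log(2/\gamma)}{\sqrt{m}}$, which matches the $O(1)$ estimate cited from \citet{bubeck2021single} up to the different initialization scale.

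Combining the two displays proves the first inequality. For the ``particularly'' clause, note that once $m \geq C\log(2/\gamma)$ for a suitable constant $C$, the middle term $\frac{2\log(2/\gamma)}{\sqrt{m}}$ is at most $\sqrt{\log(2/\gamma)}$, whence $|f(\x;\a,\W)| \leq 2\sqrt{\log(2/\gamma)} + C_0$. I expect the only genuinely delicate step to be the moment computation for the rectified Gaussian feeding into Bernstein (getting the constants $v,c$ right); everything else is routine. In particular, the uniformity over the ball — which becomes the main technical difficulty in the gradient lemmas, due to the non-smoothness of ReLU and the resulting need to count sign-changing neurons as in Lemma~\ref{lemma:ntk_diffsgn_wxchange} — comes essentially for free here, thanks to the worst-case Lipschitz bound.
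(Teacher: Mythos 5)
Your proof is correct and follows essentially the same route as the paper: the same triangle-inequality split into an initialization term and a deviation term, with the deviation term bounded uniformly over the ball by the per-neuron Lipschitz estimate $|\sigma(\w_s^\top\x)-\sigma(\w_{s,0}^\top\x)|\leq \|\w_s-\w_{s,0}\|\leq C_0/\sqrt{m}$. The only difference is that you derive the initialization bound $|f(\x;\a,\W_0)|\leq\sqrt{2\log(2/\gamma)}+\frac{2\log(2/\gamma)}{\sqrt{m}}$ from Bernstein's inequality directly, whereas the paper simply cites it from \citet{bubeck2021single}; your Bernstein computation (with $v=m$, $c=1$) checks out.
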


\begin{proof}[Proof of Lemma \ref{lemma:f_bd}]
From \cite{bubeck2021single} we know that $|f(\x;\a,\W_0)|\leq\sqrt{2\log(2/\gamma)}+\frac{2\log(2/\gamma)}{\sqrt{m}}$. Now we consider bound $|f(\x; \a, \W)- f(\x; \a, \W_0)|$,
\begin{align*}
|f(\x; \a, \W)- f(\x; \a, \W_0)|
&\leq |\frac{1}{\sqrt{m}}\sum_{s=1}^m a_s (\sigma(\w_s^\top\x)-\sigma(\w_{s,0}^\top\x))|\\
&\leq\frac{1}{\sqrt{m}}\sum_{s=1}^m|\sigma(\w_s^\top\x)-\sigma(\w_{s,0}^\top\x)| \tag{Triangle Inequality}\\
&\leq \sqrt{m}|\w_s^\top\x-\w_{s,0}^\top\x|\tag{$\sigma(\cdot)$ is 1-Lipschitz.}\\
&\leq\sqrt{m}\|\w_s-\w_{s,0}\|\|\x\| \tag{$\|\w_s-\w_{s,0}\|\leq\frac{C_0}{\sqrt{m}}$}\\
&\leq C_0
\end{align*}
Thus,
\begin{align*}
|f(\x; \a, \W)| = |f(\x; \a, \W_0)| + |f(\x; \a, \W)- f(\x; \a, \W_0)|\leq \sqrt{2\log(2/\gamma)}+\frac{2\log(2/\gamma)}{\sqrt{m}} + C_0
\end{align*}
\end{proof}

In Lemma \ref{lemma:diff_sign}, we want to calculate the upper bound of the probability that the neuron flips sign due to (1) perturbation $\delta$; 
(2) different perturbation $\delta, \delta'$ on the data $\x$ at weight $\w_s$.

\begin{lemma}\label{lemma:diff_sign}
For any $\delta$ such that $\|\delta\|\leq R\leq\frac12$, 
\begin{align}
&P\bigg(\sign(\w_{s,0}^\top\x)\neq\sign(\w_{s,0}^\top(\x+\delta))\bigg)\leq R\sqrt{2\log(d)}+\frac1d\label{eq:sign_diff_delta}\\
&P\bigg(\exists \delta', \delta'\in\mathcal{B}_2(\delta,\varepsilon), \textrm{ and } \exists \W, \W \in \mathcal{B}_{2,\infty}\left(\W_0, V\right), \sign(\w_s^\top(\x+\delta))\neq\sign(\w_s^\top(\x+\delta'))\bigg) \nonumber\\
&\quad\quad\quad\quad\leq 2\varepsilon\bigg(\sqrt{d}+2\sqrt{d\log(2/\varepsilon)}\bigg)+(1+R+\varepsilon)V \label{eq:sign_diff_eps}
\end{align}
\end{lemma}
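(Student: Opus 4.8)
The plan is to prove the two inequalities separately, in each case reducing a sign-change event to an anti-concentration statement for a one-dimensional Gaussian, which is then balanced against a Gaussian tail term.

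For \eqref{eq:sign_diff_delta} the starting point is the elementary margin observation: if $\sign(\w_{s,0}^\top\x)\neq\sign(\w_{s,0}^\top(\x+\delta))$, then the induced shift $\w_{s,0}^\top\delta$ must dominate the clean margin $\w_{s,0}^\top\x$, so the event is contained in $\{|\w_{s,0}^\top\x|\le|\w_{s,0}^\top\delta|\}$. I would then split this through a free threshold $t>0$,
\[
P\big(|\w_{s,0}^\top\x|\le|\w_{s,0}^\top\delta|\big)\le P\big(|\w_{s,0}^\top\x|\le t\big)+P\big(|\w_{s,0}^\top\delta|>t\big).
\]
Since $\|\x\|=1$ we have $\w_{s,0}^\top\x\sim\cN(0,1)$, and Gaussian anti-concentration (density bounded by $1/\sqrt{2\pi}$) gives $P(|\w_{s,0}^\top\x|\le t)\le 2t/\sqrt{2\pi}\le t$; meanwhile $\w_{s,0}^\top\delta\sim\cN(0,\|\delta\|^2)$ with $\|\delta\|\le R$, so the sub-Gaussian tail gives $P(|\w_{s,0}^\top\delta|>t)\le 2\exp(-t^2/(2R^2))$. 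Choosing $t=R\sqrt{2\log d}$ balances the two contributions and produces the claimed $R\sqrt{2\log d}+O(1/d)$.

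For \eqref{eq:sign_diff_eps} the key step is to collapse the two existential quantifiers (over $\delta'\in\cB_2(\delta,\varepsilon)$ and over $\W\in\cB_{2,\infty}(\W_0,V)$) into a single margin condition on the \emph{initialization}. Writing $\z=\x+\delta$ with $\|\z\|\le 1+R$, suppose some admissible pair $(\delta',\w_s)$ causes $\sign(\w_s^\top\z)\neq\sign(\w_s^\top(\x+\delta'))$. Applying the margin argument of part one, now with the fixed weight $\w_s$ and input shift $\delta'-\delta$, forces $|\w_s^\top\z|\le\|\w_s\|\varepsilon\le(\|\w_{s,0}\|+V)\varepsilon$; transferring back to $\w_{s,0}$ via $|\w_{s,0}^\top\z-\w_s^\top\z|\le V\|\z\|\le V(1+R)$ shows the whole event is contained in
\[
\big\{\,|\w_{s,0}^\top\z|\le(\|\w_{s,0}\|+V)\varepsilon+V(1+R)\,\big\}.
\]

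It remains to bound the probability of this single anti-concentration event, and this is where the main obstacle lies: the threshold itself contains the random norm $\|\w_{s,0}\|$, which is correlated with the projection $\w_{s,0}^\top\z$ that I am trying to show is small. I would decouple them by a union bound: on the high-probability event $\{\|\w_{s,0}\|\le\sqrt d+2\sqrt{d\log(2/\varepsilon)}\}$ (concentration of the norm of a Gaussian vector, e.g.\ via Theorem~\ref{thm:concentration_lip} applied to the $1$-Lipschitz map $\w\mapsto\|\w\|$, whose failure probability is folded into the final bound), the threshold becomes deterministic, at most $\varepsilon(\sqrt d+2\sqrt{d\log(2/\varepsilon)})+V(1+R+\varepsilon)$. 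Then, since $\w_{s,0}^\top\z/\|\z\|\sim\cN(0,1)$ and $\|\z\|\ge 1-R\ge 1/2$, Gaussian anti-concentration turns this deterministic threshold into a bound of the stated form $2\varepsilon(\sqrt d+2\sqrt{d\log(2/\varepsilon)})+(1+R+\varepsilon)V$. The only genuine subtlety is the norm–projection coupling above; everything else is a careful but routine accounting of the two perturbation budgets and the worst-case geometry of the balls $\cB_2(\delta,\varepsilon)$ and $\cB_{2,\infty}(\W_0,V)$.
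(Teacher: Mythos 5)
Your proposal is correct and follows essentially the same route as the paper: for \eqref{eq:sign_diff_eps} the paper likewise reduces the event to a margin condition split through a threshold $t=\varepsilon\bigl(\sqrt{d+4\sqrt{d\log(2/\varepsilon)}}+V\bigr)$, controls $\|\w_{s,0}\|$ via the $\chi^2$ concentration in \eqref{eq:chiineq}, and applies Gaussian anti-concentration to $\w_{s,0}^\top(\x+\delta)$ using $\|\x+\delta\|\geq 1-R\geq \tfrac12$. The only difference is cosmetic: for \eqref{eq:sign_diff_delta} the paper simply cites \citet{bubeck2021single}, whereas you reproduce the (standard) threshold-balancing argument behind that citation.
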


\begin{proof}[Proof of Lemma \ref{lemma:diff_sign}]
Equation (\ref{eq:sign_diff_delta}) directly follows from \cite{bubeck2021single}.
For equation (\ref{eq:sign_diff_eps}), we have
\begingroup
\allowdisplaybreaks
\begin{align*}
&P\bigg(\exists \delta', \delta'\in\mathcal{B}_2(\delta,\varepsilon), \textrm{ and } \exists \W, \W \in \mathcal{B}_{2,\infty}\left(\W_0, V\right), \sign(\w_s^\top(\x+\delta))\neq\sign(\w_s^\top(\x+\delta'))\bigg)\\
&\leq P\bigg(\exists \delta', \delta'\in\mathcal{B}_2(\delta,\varepsilon), \textrm{ and } \exists \W, \W \in \mathcal{B}_{2,\infty}\left(\W_0, V\right), |\w_s^\top(\delta'-\delta)|\geq t\bigg) \\
&\quad\quad+ P\bigg(\exists \W, \W \in \mathcal{B}_{2,\infty}\left(\W_0, V\right),|\w_s^\top(\x+\delta)|\leq t\bigg) \tag{Holds for any threshold $t$.}\\
&\leq P\bigg(\exists \W, \W \in \mathcal{B}_{2,\infty}\left(\W_0, V\right),\|\w_s\|\geq t/\varepsilon\bigg) + P\bigg(\exists \W, \W \in \mathcal{B}_{2,\infty}\left(\W_0, V\right), |\w_s^\top(\x+\delta)|\leq t\bigg) \\
&\leq P\bigg(\exists \W, \W \in \mathcal{B}_{2,\infty}\left(\W_0, V\right),\|\w_{s,0}\|\geq t/\varepsilon-\|\w_s-\w_{s,0}\|\bigg) \\
&\quad\quad+ P\bigg(\exists \W, \W \in \mathcal{B}_{2,\infty}\left(\W_0, V\right), |\w_{s,0}^\top(x+\delta)|\leq t+|(\w_s-\w_{s,0})^\top(\x+\delta)|\bigg)
\tag{Triangle Inequality, pick $t = \varepsilon\bigg(\sqrt{d+4\sqrt{d\log(2/\varepsilon)}}+V\bigg)$}\\
&\leq P\bigg(\|\w_{s,0}\|\!\geq\! \sqrt{d\!+\!4\sqrt{d\log(2/\varepsilon)}}\bigg) \!+\! P\bigg(|\w_{s,0}^\top(\x+\delta)|\!\leq\! \varepsilon\sqrt{d\!+\!4\sqrt{d\log(2/\varepsilon)}} \!+\! (1\!+\!R\!+\!\varepsilon)V\bigg) \tag{$\w_{s,0}^\top(\x+\delta)\sim\cN(0, \sigma^2)$ with $\sigma^2\geq \frac12$ since $\|\delta\|\leq \frac12$}\\
&\leq \varepsilon + \varepsilon\sqrt{d+4\sqrt{d\log(2/\varepsilon)}} + (1+R+\varepsilon)V \tag{Using equation (\ref{eq:chiineq}).}\\
&= 2\varepsilon(\sqrt{d}+2\sqrt{d\log(2/\varepsilon)})+(1+R+\varepsilon)V
\end{align*}
\endgroup

\end{proof}

Lemma \ref{lemma:grad_bd} gives lower bound on $\|\nabla f(\x;\a,\W)\|\geq \Omega(\sqrt{d})$. 

\begin{lemma}\label{lemma:grad_bd}
For any $\x$, with probability at least $1-\gamma$, the following holds for all $\W \in \mathcal{B}_{2,\infty}\left(\W_0,\frac{C_0}{\sqrt{m}}\right)$, 
\begin{align*}
\|\nabla f(\x;\a,\W)\|
&\geq \bigg(\frac12 - \big(\sqrt{\frac{2\log(4/\gamma)}{m}}+\frac{\log(4/\gamma)}{m}\big)\bigg)^{1/2}\bigg(d-5\sqrt{d\log(8/\gamma)}\bigg)^{1/2}\\
&\quad\quad-C_0 - \sqrt{\frac{1}{\sqrt{m}}(C_0+\sqrt{\frac{\log(4/\gamma)}{2}})}\bigg(d+4\sqrt{d\log(8m/\gamma)}\bigg)^{1/2}
\end{align*}

Particularly, there exists $C>0$ such that for $m\geq C\log(4/\gamma)$ and $d\geq C\frac{\log(8m/\gamma)}{m}$, we have
\begin{align*}
\|\nabla f(\x;\a,\W)\|\geq \frac14\sqrt{d}
\end{align*}
\end{lemma}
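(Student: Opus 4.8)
The plan is to follow the decomposition outlined in the proof sketch: project the gradient onto the orthogonal complement of $\x$ and reduce the claim to lower-bounding the projected gradient at initialization, then control the perturbation incurred by moving the weights off $\W_0$. Writing $\nabla_\x f(\x;\a,\W)=\frac{1}{\sqrt m}\sum_{s=1}^m a_s\w_s\sigma'(\w_s^\top\x)$ and letting $\P=\I_d-\x\x^\top$, I would first use $\|\nabla f(\x;\a,\W)\|\ge\|\P\nabla f(\x;\a,\W)\|$ together with the reverse triangle inequality to obtain
\begin{align*}
\|\nabla f(\x;\a,\W)\|\ge \|\P\nabla f(\x;\a,\W_0)\|-\|\P\nabla f(\x;\a,\W)-\P\nabla f(\x;\a,\W_0)\|.
\end{align*}
The first term is the ``signal'' scaling like $\sqrt d$, and the second is an error term that must be shown to be lower order \emph{uniformly} over the ball.

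For the initialization term, the crucial observation (from \citet{bubeck2021single}) is that $a_s\P\w_{s,0}\sim\cN(\0,\P)$ is independent of the activation $\sigma'(\w_{s,0}^\top\x)$, since the latter depends only on $\langle\w_{s,0},\x\rangle$, the component of $\w_{s,0}$ annihilated by $\P$. Conditioning on the activation pattern, $\sum_s a_s\sigma'(\w_{s,0}^\top\x)\P\w_{s,0}$ is centered Gaussian with covariance $N\P$, where $N=\sum_s\sigma'(\w_{s,0}^\top\x)$ is the number of active neurons, so $\|\P\nabla f(\x;\a,\W_0)\|=\sqrt{N/m}\,\|\z\|$ in distribution with $\z\sim\cN(\0,\I_{d-1})$. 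I would then lower-bound $N\ge m(\tfrac12-o(1))$ via a Bernoulli tail bound (Theorem \ref{thm:berstein}) and $\|\z\|^2\ge d-5\sqrt{d\log(8/\gamma)}$ via the $\chi^2$ concentration (\ref{eq:chiineq}), which together yield the first summand of the stated bound, of order $\sqrt{d/2}$.

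For the error term I would add and subtract $\w_{s,0}\sigma'(\w_s^\top\x)$ to split it into a weight-deviation piece $\frac{1}{\sqrt m}\sum_s a_s\P(\w_s-\w_{s,0})\sigma'(\w_s^\top\x)$ and a sign-flip piece $\frac{1}{\sqrt m}\sum_s a_s\P\w_{s,0}(\sigma'(\w_s^\top\x)-\sigma'(\w_{s,0}^\top\x))$. The first is bounded by $\frac{1}{\sqrt m}\sum_s\|\w_s-\w_{s,0}\|\le C_0$ using $\W\in\cB_{2,\infty}(\W_0,C_0/\sqrt m)$ and $|\sigma'|\le 1$, giving the $-C_0$ term. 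In the second piece, $\sigma'(\w_s^\top\x)-\sigma'(\w_{s,0}^\top\x)$ is nonzero only for neurons that flip sign, i.e.\ only for $s\in S_v$ as in Lemma \ref{lemma:ntk_diffsgn_wxchange}; since $S_v$ is determined by the events $\{|\langle\w_{s,0},\x\rangle|\le C_0/\sqrt m\}$, it is independent of the orthogonal components $\{\P\w_{s,0}\}$. I would therefore bound the piece by $\frac{1}{\sqrt m}\|\sum_{s\in S_v}a_s c_s\P\w_{s,0}\|$ with $|c_s|\le 1$; conditioning on $S_v$, this is at most $\frac{1}{\sqrt m}\sqrt{|S_v|}\,\|\z\|$ in the relevant scale, and plugging $|S_v|\le\sqrt m\,(C_0+\sqrt{\log(4/\gamma)/2})$ from Lemma \ref{lemma:ntk_diffsgn_wxchange} together with $\|\z\|^2\le d+4\sqrt{d\log(8m/\gamma)}$ from (\ref{eq:chiineq}) reproduces the last summand.

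Combining the three estimates with a union bound that splits $\gamma$ across the Bernoulli count, the two $\chi^2$ events, and the set-size bound yields the displayed inequality; when $m\ge C\log(4/\gamma)$ and $d\ge C\log(8m/\gamma)/m$ the two correction terms become lower order than the leading $\sqrt{d/2}$, leaving $\|\nabla f(\x;\a,\W)\|\ge\frac14\sqrt d$. I expect the main obstacle to be the uniform-over-$\W$ control of the sign-flip piece: a naive supremum over the continuum of $\W$ in the ball would require an $\varepsilon$-net and inflate the bound. The key that avoids this is precisely the independence noted above combined with the fact that the flipping set $S_v$ is a single fixed random set whose size is controlled uniformly by Lemma \ref{lemma:ntk_diffsgn_wxchange}, so the supremum over $\W$ collapses to a supremum over sign coefficients supported on $S_v$, whose effect is absorbed by the $\chi^2$/operator-norm bound without any extra union bound.
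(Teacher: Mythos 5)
Your proposal is correct and follows essentially the same route as the paper's proof: the projection onto the orthogonal complement of $\x$, the reverse triangle inequality, the distributional identity $\P\nabla f(\x;\a,\W_0)\overset{(d)}{=}\sqrt{\tfrac{1}{m}\sum_s\sigma'(\w_{s,0}^\top\x)^2}\,\z$ combined with Bernstein and $\chi^2$ concentration, and the split of the error term into a weight-deviation piece (bounded by $C_0$) and a sign-flip piece controlled via $|S_v|$ from Lemma~\ref{lemma:ntk_diffsgn_wxchange}. You also correctly identify the key device that makes the bound uniform over the ball, namely absorbing the supremum over $\W$ into the definition of the flipping set $S_v$ and exploiting the independence of $\P\w_{s,0}$ from the events determining $S_v$.
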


\begin{proof}[Proof of Lemma \ref{lemma:grad_bd}]
Follow the process of \cite{bubeck2021single}, let $\P=\I_d-\x\x^\top$ be the projection on the orthogonal complement of the span of $\x$. We have $\|\nabla f(\x;\a,\W)\|\geq\|\P\nabla f(\x;\a,\W)\|$. Thus we have,
\begin{equation}\label{eq:lbd_grad}
\begin{aligned}
\|\nabla f(\x;\a, \W)\| 
&\geq \|\P\nabla f(\x;\a, \W)\|\\
&= \|\P\nabla f(\x;\a, \W_0) + \P\nabla f(\x;\a, \W) - \P\nabla f(\x;\a, \W_0)\|\\
&\geq \|\P\nabla f(\x;\a, \W_0)\| - \|\P\nabla f(\x;\a, \W) - \P\nabla f(\x;\a, \W_0)\|
\end{aligned}
\end{equation}
Since $a_s \P \w_{s,0}$ is distributed as $\cN(0,\I_{d-1})$, denote $\z:=a_s \P \w_{s,0}$, we have
\begin{align*}
\P\nabla f(\x;\a,\W_0) = \frac{1}{\sqrt{m}}\sum_{s=1}^m a_s \P \w_{s,0} \sigma'(\w_{s,0}^\top\x) \overset{(d)}{=}  \bigg(\sqrt{\frac{1}{m}\sum_{s=1}^m\sigma'(\w_{s,0}^\top\x)^2}\bigg)\z \textrm{ where } \z\sim\cN(0,\I_{d-1})
\end{align*}
where $\overset{(d)}{=}$ means equal in distribution.

Using (\ref{eq:chiineq}), we have with probability at least $1-\gamma$
\begin{align*}
\|\z\|^2\geq d-1-4\sqrt{d\log(2/\gamma)}\geq d-5\sqrt{d\log(2/\gamma)} \tag{$d\geq 1, \gamma<2/e$}
\end{align*}

Apply Bernstein's inequality with $v=m, c=1$, we have with probability at least $1-\gamma$, 
\begin{align*}
\frac1m\sum_{s=1}^m\sigma'(\w_{s,0}^\top \x)^2
&\geq\E_{X\sim\cN(0,1)}[|\sigma'(X)|^2]-\bigg(\sqrt{\frac{2\log(1/\gamma)}{m}}+\frac{\log(1/\gamma)}{m}\bigg)\\
&=\frac12-\bigg(\sqrt{\frac{2\log(1/\gamma)}{m}}+\frac{\log(1/\gamma)}{m}\bigg)
\end{align*}

Thus we can get
\begin{align*}
\|\P\nabla f(\x;\a, \W_0)\| \geq \bigg(\frac12 - \big(\sqrt{\frac{2\log(2/\gamma)}{m}}+\frac{\log(2/\gamma)}{m}\big)\bigg)^{1/2}\bigg(d-5\sqrt{d\log(4/\gamma)}\bigg)^{1/2}
\end{align*}

Now we start upper bound $\|\P\nabla f(\x;\a, \W) - \P\nabla f(\x;\a, \W_0)\|$. Note that 
\begin{align}
&\|\P\nabla f(\x;\a, \W) - \P\nabla f(\x;\a, \W_0)\| \nonumber\\
&=\|\frac{1}{\sqrt{m}}\sum_{s=1}^m a_s \P (\w_s\sigma'(\w_s^\top\x)-\w_{s,0}\sigma'(\w_{s,0}^\top\x))\| \nonumber\\
&\leq\|\frac{1}{\sqrt{m}}\sum_{s=1}^ma_s \P (\w_s\sigma'(\w_s^\top\x)-\w_{s,0}\sigma'(\w_s^\top\x))\|+\|\frac{1}{\sqrt{m}}\sum_{s=1}^ma_s \P \w_{s,0}(\sigma'(\w_s^\top\x)-\sigma'(\w_{s,0}^\top\x))\| \nonumber\\
&\leq \|\frac{1}{\sqrt{m}}\sum_{s=1}^ma_s (\w_s\sigma'(\w_s^\top\x)-\w_{s,0}\sigma'(\w_s^\top\x))\| \nonumber\\
&\quad\quad\quad+\sup_{\substack{\forall s\in[m], \v_s\in\R^d,\\ \|\v_s\|\leq \frac{C_0}{\sqrt{m}}}} \|\frac{1}{\sqrt{m}}\sum_{s=1}^ma_s \P \w_{s,0}(\sigma'((\w_{s,0}+\v_s)^\top\x)\!-\!\sigma'(\w_{s,0}^\top\x))\| \nonumber\\
&\leq \sqrt{m}\|\w_s-\w_{s,0}\|+ \sup_{\substack{\forall s\in[m], \v_s\in\R^d,\\ \|\v_s\|\leq \frac{C_0}{\sqrt{m}}}} \|\frac{1}{\sqrt{m}}\sum_{s=1}^ma_s \P \w_{s,0}(\sigma'((\w_{s,0}+\v_s)^\top\x)-\sigma'(\w_{s,0}^\top\x))\| \nonumber\\
&\leq C_0+ \sup_{\substack{\forall s\in[m], \v_s\in\R^d,\\ \|\v_s\|\leq \frac{C_0}{\sqrt{m}}}} \|\frac{1}{\sqrt{m}}\sum_{s=1}^ma_s \P \w_{s,0}(\sigma'((\w_{s,0}+\v_s)^\top\x)-\sigma'(\w_{s,0}^\top\x))\| \label{eq:grad_diff_2term}
\end{align}

Now we start bounding the second term of equation (\ref{eq:grad_diff_2term}). 
We have that with probability at least $1-\gamma$,
\begin{align}
&\sup_{\substack{\forall s\in[m], \v_s\in\R^d,\\ \|\v_s\|\leq \frac{C_0}{\sqrt{m}}}}\frac{1}{\sqrt{m}}\sum_{s=1}^ma_s \P \w_{s,0}(\sigma'((\w_{s,0}+\v_s)^\top\x)-\sigma'(\w_{s,0}^\top\x)) \nonumber\\
&\overset{(d)}{=} \sup_{\substack{\forall s\in[m], \v_s\in\R^d,\\ \|\v_s\|\leq \frac{C_0}{\sqrt{m}}}}\bigg(\sqrt{\frac{1}{m}\sum_{s=1}^m(\sigma'((\w_{s,0}+\v_s)^\top\x)-\sigma'(\w_{s,0}^\top\x))^2}\bigg) \z \tag{$\z:=a_s \P \w_{s,0}\sim N(0, \I_{d-1})$} \nonumber\\
&= \bigg(\sqrt{\frac{1}{m}\sum_{s=1}^m\sup_{\substack{\forall s\in[m], \v_s\in\R^d,\\ \|\v_s\|\leq \frac{C_0}{\sqrt{m}}}}(\sigma'((\w_{s,0}+\v_s)^\top\x)-\sigma'(\w_{s,0}^\top\x))^2}\bigg) \z \nonumber\\
&=\sqrt{\frac1m|S_v|}\cdot \z\tag{By the definition of $S_v$ in Lemma \ref{lemma:ntk_diffsgn_wxchange} with $V=\frac{C_0}{\sqrt{m}}$} \nonumber\\
&\leq \sqrt{\frac{1}{\sqrt{m}}(C_0+\sqrt{\frac{\log(1/\gamma)}{2}})} \cdot \z \label{eq:diff_grad_vs}
\end{align}

Using equation (\ref{eq:chiineq}), we see that with probability at least $1-\gamma$, one has for all $1\leq s\leq m$,
\begin{align*}
\|\z\|=\|a_s\P\w_{s,0}\|\leq \sqrt{d+4\sqrt{d\log(2m/\gamma)}}
\end{align*}

Thus follow from equation (\ref{eq:grad_diff_2term}), we have
\begin{align*}
\|\P\nabla f(\x;\a, \W) - \P\nabla f(\x;\a, \W_0)\|
\leq C_0 + \sqrt{\frac{1}{\sqrt{m}}(C_0+\sqrt{\frac{\log(2/\gamma)}{2}})}\sqrt{d+4\sqrt{d\log(4m/\gamma)}}
\end{align*}
And as a result, with probability at least $1-\gamma$
\begin{align*}
\|\nabla f(\x;\a,\W)\|
&\geq \bigg(\frac12 - \big(\sqrt{\frac{2\log(4/\gamma)}{m}}+\frac{\log(4/\gamma)}{m}\big)\bigg)^{1/2}\bigg(d-5\sqrt{d\log(8/\gamma)}\bigg)^{1/2}\\
&\quad\quad-C_0 - \sqrt{\frac{1}{\sqrt{m}}(C_0+\sqrt{\frac{\log(4/\gamma)}{2}})}\sqrt{d+4\sqrt{d\log(8m/\gamma)}}\geq \Omega(\sqrt{d})
\end{align*}

\end{proof}

Now we start bounding $\|\nabla f(\x;\a,\W)-\nabla f(\x+\delta;\a,\W)\|$. In order to prove Lemma \ref{lemma:unif_grad_diff_bd}, we will leverage the fact that for any $h\in\R^d$, we have $\|h\|=\sup_{r\in\bS^{d-1}}r\cdot h$. We first start a lemma with fixed $r, \delta$ to prove Lemma \ref{lemma:grad_diff_bd}, then use the $\varepsilon$-net argument to prove Lemma \ref{lemma:unif_grad_diff_bd}.

\begin{lemma}\label{lemma:grad_diff_bd}
Fix $r\in\bS^{d-1}$, and $\delta\in\R^d$ such that $\|\delta\|\leq R\leq\frac{C_1}{\sqrt{d}}$, $C_1$ is a constant. For any $\x$, with probability at least $1-\gamma$, the following holds for all $\W \in \mathcal{B}_{2,\infty}\left(\W_0,\frac{C_0}{\sqrt{m}}\right)$, 
\begin{align*}
&\frac{1}{\sqrt{m}}\sum_{s=1}^m a_s(\w_s\cdot r)(\sigma'(\w_s^\top\x)-\sigma'(\w_s^\top(\x+\delta)))\\
&\leq 2\sqrt{\log(4/\gamma)}\bigg(\big(4R\sqrt{\log(d)}\big)^{1/4}+\sqrt{\frac{\log(4/\gamma)}{m}}\bigg) + C_0 \\
&\quad\quad + 3\sqrt{\frac{1}{\sqrt{m}}(C_0\!+\!\sqrt{\frac{\log(4/\gamma)}{2}})}\sqrt{d\!+\!4\sqrt{d\log(8m/\gamma)}} \!+\! 5(C_0\!+\!\sqrt{\log(4/\gamma)})\cdot \sqrt{\log(4m/\gamma)}
\end{align*}
\end{lemma}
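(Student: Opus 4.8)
The plan is to prove this fixed-$(r,\delta)$ bound by splitting each summand according to the weight at which it is evaluated, treating the part anchored at the initialization $\W_0$ exactly as in \citet{bubeck2021single}, and handling the deviation from $\W_0$ with the sign-counting estimate of Lemma~\ref{lemma:ntk_diffsgn_wxchange} together with the independence/distributional trick already used in Lemma~\ref{lemma:grad_bd}. Concretely, I would set
\[
X_{s,0}=a_s(\w_{s,0}^\top r)\big(\sigma'(\w_{s,0}^\top\x)-\sigma'(\w_{s,0}^\top(\x+\delta))\big),\quad X_s=a_s(\w_s^\top r)\big(\sigma'(\w_s^\top\x)-\sigma'(\w_s^\top(\x+\delta))\big),
\]
and write the target as $\frac1{\sqrt m}\sum_s X_{s,0}+\frac1{\sqrt m}\sum_s (X_s-X_{s,0})$. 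For the first sum I invoke \citet{bubeck2021single} (adjusting only for the different initialization scaling): a neuron contributes only when $\w_{s,0}$ flips sign between $\x$ and $\x+\delta$, an event of probability at most $R\sqrt{2\log d}+1/d$ by \eqref{eq:sign_diff_delta}, and a Bernstein/Gaussian computation over these rare contributions yields the first term $2\sqrt{\log(4/\gamma)}\big((4R\sqrt{\log d})^{1/4}+\sqrt{\log(4/\gamma)/m}\big)$.

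The substance is the deviation sum. Writing $\v_s=\w_s-\w_{s,0}$ and adding and subtracting $a_s(\w_{s,0}^\top r)\big(\sigma'(\w_s^\top\x)-\sigma'(\w_s^\top(\x+\delta))\big)$, I split $X_s-X_{s,0}$ into (a) a \emph{linear-factor} part $a_s(\v_s^\top r)\big(\sigma'(\w_s^\top\x)-\sigma'(\w_s^\top(\x+\delta))\big)$ and (b) an \emph{activation-difference} part $a_s(\w_{s,0}^\top r)\,d_s$, where $d_s=\big(\sigma'(\w_s^\top\x)-\sigma'(\w_{s,0}^\top\x)\big)-\big(\sigma'(\w_s^\top(\x+\delta))-\sigma'(\w_{s,0}^\top(\x+\delta))\big)$. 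Part (a) is immediate: $|\v_s^\top r|\le\|\v_s\|\le C_0/\sqrt m$ and the activation difference is at most $1$ in absolute value, so after summing and dividing by $\sqrt m$ it is bounded by $C_0$, which is exactly the second term.

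For part (b) the key observation is that $d_s\in\{-2,\dots,2\}$ is nonzero only for neurons that flip sign between $\w_{s,0}$ and $\w_s$ at $\x$ or at $\x+\delta$, i.e. only for $s\in S_v\cup S_v'$, whose cardinalities are controlled by Lemma~\ref{lemma:ntk_diffsgn_wxchange} at scale $V=C_0/\sqrt m$ (so $|S_v|,|S_v'|=O(\sqrt m\,(C_0+\sqrt{\log(1/\gamma)})))$. I then decompose $\w_{s,0}^\top r=(\w_{s,0}^\top\x)(\x^\top r)+(\P\w_{s,0})^\top r$ with $\P=\I_d-\x\x^\top$. Since $\P\w_{s,0}$ is independent of $\w_{s,0}^\top\x$, the $\P$-part of the sum is, exactly as in Lemma~\ref{lemma:grad_bd}, equal in distribution to $\sqrt{\tfrac1m\sum_s d_s^2}\,(\z^\top r)$ with $\z\sim\cN(\0,\I_{d-1})$; pulling the supremum over $\W$ inside the sum bounds $\sum_s\sup d_s^2$ by a constant multiple of $|S_v\cup S_v'|$, and bounding $\|\z\|\le\sqrt{d+4\sqrt{d\log(8m/\gamma)}}$ via \eqref{eq:chiineq} gives the third term $3\sqrt{\tfrac1{\sqrt m}(C_0+\sqrt{\log(4/\gamma)/2})}\sqrt{d+4\sqrt{d\log(8m/\gamma)}}$. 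The remaining pieces — the $(\w_{s,0}^\top\x)(\x^\top r)$ component, and the correction from the fact that $\P\w_{s,0}$ is \emph{not} independent of the $(\x+\delta)$-activation inside $d_s$ — I bound crudely by $\tfrac2{\sqrt m}\,|S_v\cup S_v'|\cdot\sup_{s}|\w_{s,0}^\top r|$, where a union bound over the $m$ neurons gives $\sup_s|\w_{s,0}^\top r|\lesssim\sqrt{\log(4m/\gamma)}$; combined with the size of $S_v\cup S_v'$ this produces the fourth term $5(C_0+\sqrt{\log(4/\gamma)})\sqrt{\log(4m/\gamma)}$.

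The main obstacle I anticipate is the uniformity over $\W\in\cB_{2,\infty}(\W_0,C_0/\sqrt m)$ interacting with the non-smoothness of $\sigma'$: the clean distributional identity of Lemma~\ref{lemma:grad_bd} relies on projecting out a \emph{single} direction $\x$, but $d_s$ sees the activations at both $\x$ and $\x+\delta$, so $\P\w_{s,0}$ loses independence from the $(\x+\delta)$-part. Reconciling this — either by projecting out the two-dimensional span of $\{\x,\delta\}$ and tracking the resulting $\|\delta\|\le R$-sized error, or by absorbing the non-independent contribution into the crude maximal-inequality term above — is where the careful bookkeeping lies, and it is precisely what forces the extra $\sqrt{\log(4m/\gamma)}$ factor and the hypothesis $R=O(1/\sqrt d)$.
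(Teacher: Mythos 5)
Your overall architecture matches the paper's proof almost step for step: the same split into $\frac{1}{\sqrt m}\sum_s X_{s,0}$ (handled by Bernstein via the sign-flip probability (\ref{eq:sign_diff_delta})) plus $\frac{1}{\sqrt m}\sum_s(X_s-X_{s,0})$, the same extraction of the $C_0$ term from $|\v_s^\top r|\le C_0/\sqrt m$, the same use of Lemma~\ref{lemma:ntk_diffsgn_wxchange} to count sign-changing neurons, and the same rank-one/maximal-inequality treatment of the components of $\w_{s,0}$ along the data direction. The gap sits exactly at the step you yourself flag as the crux. As written, your claim that the $\P$-part of the activation-difference sum is equal in distribution to $\sqrt{\tfrac1m\sum_s d_s^2}\,(\z^\top r)$ is false: $d_s$ contains $\sigma'(\w_{s,0}^\top(\x+\delta))$ and $\sigma'(\w_s^\top(\x+\delta))$, and $\P\w_{s,0}$ with $\P=\I_d-\x\x^\top$ is \emph{not} independent of $\w_{s,0}^\top(\x+\delta)$, since the latter contains the component $(\P\w_{s,0})^\top\delta$. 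Neither of your two proposed repairs is executed, and the second one (``absorb the non-independent contribution into the crude maximal term'') is not obviously sound: the failure of the distributional identity is not an additive error that can be peeled off and bounded separately, so it is unclear what quantity that crude bound would even be applied to.

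The paper's resolution is simpler than either of your options: it splits $d_s$ into its two halves \emph{before} projecting, bounding $\|\frac{1}{\sqrt m}\sum_s a_s\w_{s,0}(\sigma'(\w_s^\top\x)-\sigma'(\w_{s,0}^\top\x))\|$ and $\|\sup_{\delta}\frac{1}{\sqrt m}\sum_s a_s\w_{s,0}(\sigma'(\w_s^\top(\x+\delta))-\sigma'(\w_{s,0}^\top(\x+\delta)))\|$ separately, using $\P=\I_d-\x\x^\top$ for the first and $\P'=\I_d-(\x+\delta)(\x+\delta)^\top/\|\x+\delta\|^2$ for the second. Each projected piece then genuinely has the independence needed for the Gaussian identity, with the relevant sign-counts controlled by $|S_v|$ and $|S_v'|$ respectively (the latter absorbing a harmless $(1+R)$ factor from Lemma~\ref{lemma:ntk_diffsgn_wxchange}), and the rank-one remainders along $\x$ and $\x+\delta$ are bounded by $\frac{1}{\sqrt m}$ times the set size times $\max_s$ of a standard Gaussian, which is where the $\sqrt{\log(4m/\gamma)}$ factor comes from. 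Your first proposed repair (projecting out $\linspan\{\x,\delta\}$) would also work with comparable bookkeeping, but until one of these routes is actually carried out, the derivation of the third displayed term in the lemma is incomplete.
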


\begin{proof}[Proof of Lemma \ref{lemma:grad_diff_bd}]
Let $X_{s,0}=a_s(\w_{s,0}\cdot r)(\sigma'(\w_{s,0}^\top\x)-\sigma'(\w_{s,0}^\top(\x+\delta))), X_s=a_s(\w_s\cdot r)(\sigma'(\w_s^\top\x)-\sigma'(\w_s^\top(\x+\delta)))$. We have $\E[X_{s,0}]=0$, and  equation (\ref{eq:exp_normal_q}) gives us $\E[|\w_{s,0}\cdot r|^{2q}]\leq \frac{(2q)!}{2}$. Thus, we have
\begingroup
\allowdisplaybreaks
\begin{align*}
\E[|X_{s,0}|^q]
&\leq\E[|\w_{s,0}\cdot r|^q|\sigma'(\w_{s,0}^\top\x)-\sigma'(\w_{s,0}^\top(\x+\delta))|^q]\\
&\leq\sqrt{\E[|\w_{s,0}\cdot r|^{2q}]\cdot P(\sign(\w_{s,0}^\top\x)\neq\sign(\w_{s,0}^\top(\x+\delta)))} \tag{Using equation (\ref{eq:sign_diff_delta})}\\
&\leq\sqrt{\frac{(2q)!}{2}}\cdot\sqrt{R\sqrt{2\log(d)}+\frac1d}\\
&\leq\frac{q!}{2}2^q\cdot \sqrt{2R\sqrt{2\log(d)}}
\end{align*}

\begin{align*}
\E[|X_{s,0}|^2]
&\leq\sqrt{\E[(\w_{s,0}\cdot r)^4]\cdot P(\sign(\w_{s,0}^\top\x)\neq\sign(\w_{s,0}^\top(\x+\delta)))}\\
&\leq 2\sqrt{R\sqrt{2\log(d)}}\tag{$\E_{X\sim\cN(0,1)}[|X|^4]\leq 3!!\leq 4$}\\
&\leq 4\sqrt{R\sqrt{\log(d)}}
\end{align*}

Using Theorem \ref{thm:berstein} with $v=4m\sqrt{R\sqrt{\log(d)}}, c=2$, we have
\begin{align*}
\frac{1}{\sqrt{m}}\sum_{s=1}^m X_{s,0}
&\leq \sqrt{8\sqrt{R\sqrt{\log(d)}}\log(1/\gamma)}+\frac{2\log(1/\gamma)}{\sqrt{m}}\\
&=2\sqrt{\log(1/\gamma)}\bigg(\big(4R\sqrt{\log(d)}\big)^{1/4}+\sqrt{\frac{\log(1/\gamma)}{m}}\bigg)
\end{align*}
\endgroup
Now we start bounding $|\frac{1}{\sqrt{m}}\sum_{s=1}^m (X_s-X_{s,0})|$. In fact, here we no longer fix $r, \delta$ and directly bound $|\sup_{r\in\bS^{d-1}, \delta\in\R^d, \|\delta\|\leq \frac{C_1}{\sqrt{d}}}\frac{1}{\sqrt{m}}\sum_{s=1}^m (X_s-X_{s,0})|$.

\begingroup
\allowdisplaybreaks
\begin{align}
&\bigg|\sup_{\substack{r\in\bS^{d-1},\\\delta\in\R^d, \|\delta\|\leq \frac{C_1}{\sqrt{d}}}}\frac{1}{\sqrt{m}}\sum_{s=1}^m (X_s-X_{s,0})\bigg| \nonumber\\
&= \bigg|\!\!\!\!\!\sup_{\substack{r\in\bS^{d-1},\\\delta\in\R^d, \|\delta\|\leq \frac{C_1}{\sqrt{d}}}}\!\!\!\!\!\frac{1}{\sqrt{m}}\sum_{s=1}^m a_s\!\bigg(\!\w_s^\top r (\sigma'(\w_s^\top\x)\!-\!\sigma'(\w_s^\top(\x\!+\!\delta)))\!-\!\w_{s,0}^\top r(\sigma'(\w_{s,0}^\top\x)\!-\!\sigma'(\w_{s,0}^\top(\x\!+\!\delta)))\bigg)\bigg|\nonumber\\
&\leq\bigg|\!\!\!\!\sup_{\substack{r\in\bS^{d-1},\\\delta\in\R^d, \|\delta\|\leq \frac{C_1}{\sqrt{d}}}}\!\!\!\!\frac{1}{\sqrt{m}}\sum_{s=1}^m a_s\!\bigg(\!\w_s^\top r(\sigma'(\w_s^\top\x)\!-\!\sigma'(\w_s^\top(\x\!+\!\delta)))\!-\!\w_{s,0}^\top r(\sigma'(\w_s^\top\x)\!-\!\sigma'(\w_s^\top(\x\!+\!\delta)))\bigg)\bigg| \nonumber\\
&\quad+ \!\!\bigg|\!\!\!\!\!\!\sup_{\substack{r\in\bS^{d-1},\\\delta\in\R^d, \|\delta\|\leq \frac{C_1}{\sqrt{d}}}}\!\!\!\!\!\!\frac{1}{\sqrt{m}}\sum_{s=1}^m a_s\!\bigg(\!\w_{s,0}^\top r(\sigma'(\w_s^\top\x)\!-\!\sigma'(\w_s^\top(\x\!+\!\delta)))\!-\!\w_{s,0}^\top r(\sigma'(\w_{s,0}^\top\x)\!-\!\sigma'(\w_{s,0}^\top(\x\!+\!\delta)))\!\bigg)\!\bigg|\tag{Triangle Inequality}\nonumber\\
&\leq \frac{1}{\sqrt{m}}\sum_{s=1}^m\|\w_s\!-\!\w_{s,0}\||\sigma'(\w_s^\top\x)\!-\!\sigma'(\w_s^\top(\x\!+\!\delta))|\!+\!\bigg\|\frac{1}{\sqrt{m}}\sum_{s=1}^ma_s\w_{s,0}(\sigma'(\w_s^\top\x)\!-\!\sigma'(\w_{s,0}^\top\x))\bigg\|\nonumber\\
&\quad +\bigg\|\sup_{\delta\in\R^d, \|\delta\|\leq \frac{C_1}{\sqrt{d}}}\frac{1}{\sqrt{m}}\sum_{s=1}^ma_s\w_{s,0}(\sigma'(\w_s^\top(\x+\delta))-\sigma'(\w_{s,0}^\top(\x+\delta))) \bigg\|   \tag{Cauchy-Schwarz, triangle Inequality}\nonumber\\
& \leq C_0+\bigg\|\frac{1}{\sqrt{m}}\sum_{s=1}^ma_s\w_{s,0}(\sigma'(\w_s^\top\x)-\sigma'(\w_{s,0}^\top\x))\bigg\| \\
&\quad+\bigg\|\sup_{\delta\in\R^d, \|\delta\|\leq \frac{C_1}{\sqrt{d}}}\frac{1}{\sqrt{m}}\sum_{s=1}^ma_s\w_{s,0}(\sigma'(\w_s^\top(\x+\delta))-\sigma'(\w_{s,0}^\top(\x+\delta))) \bigg\| \label{eq:fixed_grad_diff_3terms}
\end{align}
\endgroup

Define $\P=\I_d-\x\x^\top$ as the projection on the orthogonal complement of the span of $\x$. For the second term of equation (\ref{eq:fixed_grad_diff_3terms}), we have
\begingroup
\allowdisplaybreaks
\begin{align*}
&\bigg\|\frac{1}{\sqrt{m}}\sum_{s=1}^ma_s\w_{s,0}(\sigma'(\w_s^\top\x)-\sigma'(\w_{s,0}^\top\x))\bigg\|\\
&\leq \bigg\|\frac{1}{\sqrt{m}}\sum_{s=1}^ma_s \P\w_{s,0}(\sigma'(\w_s^\top\x)-\sigma'(\w_{s,0}^\top\x))\bigg\|+\bigg\|\frac{1}{\sqrt{m}}\sum_{s=1}^ma_s\x\x^\top\w_{s,0}(\sigma'(\w_s^\top\x)-\sigma'(\w_{s,0}^\top\x))\bigg\|\\
&\leq \bigg\|\sup_{\substack{\forall s\in[m], \v_s\in\R^d,\\ \|\v_s\|\leq \frac{C_0}{\sqrt{m}}}}\frac{1}{\sqrt{m}}\sum_{s=1}^ma_s \P \w_{s,0}(\sigma'((\w_{s,0}+\v_s)^\top\x)-\sigma'(\w_{s,0}^\top\x))\bigg\|\\ &\quad\quad+\frac{1}{\sqrt{m}}\sum_{s=1}^m\|\x\x^\top\w_{s,0}\|\cdot |\sigma'(\w_s^\top\x)-\sigma'(\w_{s,0}^\top\x)|\\
&\leq \bigg\|\!\!\sup_{\substack{\forall s\in[m], \v_s\in\R^d,\\ \|\v_s\|\leq \frac{C_0}{\sqrt{m}}}}\!\!\frac{1}{\sqrt{m}}\sum_{s=1}^ma_s \P \w_{s,0}(\sigma'((\w_{s,0}\!+\!\v_s)^\top\x)\!-\!\sigma'(\w_{s,0}^\top\x))\bigg\|\! +\! \frac{1}{\sqrt{m}}|S_v|\max_{1\leq s\leq m}\|\x\x^\top\w_{s,0}\| \tag{By the definition of $S_v$ in Lemma \ref{lemma:ntk_diffsgn_wxchange}, Hölder's inequality}\\
&\leq \sqrt{\frac{1}{\sqrt{m}}(C_0+\sqrt{\frac{\log(4/\gamma)}{2}})}\sqrt{d+4\sqrt{d\log(8m/\gamma)}} + (C_0+\sqrt{\log(4/\gamma)})\cdot \sqrt{\log(4m/\gamma)} \tag{Equation (\ref{eq:diff_grad_vs}), $\x^\top\w_{s,0}\sim N(0,1)$}
\end{align*}
\endgroup
where the last line holds because with probability at least $1-\gamma$, one has for all $1\leq s\leq m$, $|\x^\top\w_{s,0}|\leq\sqrt{\log(m/\gamma)}$.

Define $\P'=\I_d-\frac{(\x+\delta)(\x+\delta)^\top}{\|\x+\delta\|^2}$ as the projection on the orthogonal complement of the span of $\x+\delta$. We bound the third term of equation (\ref{eq:fixed_grad_diff_3terms}) the same way as follows.
\begingroup
\allowdisplaybreaks
\begin{align*}
&\bigg\|\sup_{\delta\in\R^d, \|\delta\|\leq \frac{C_1}{\sqrt{d}}}\frac{1}{\sqrt{m}}\sum_{s=1}^ma_s\w_{s,0}(\sigma'(\w_s^\top(\x+\delta))-\sigma'(\w_{s,0}^\top(\x+\delta)))\bigg\|\\
&\leq \bigg\|\sup_{ \delta\in\R^d, \|\delta\|\leq \frac{C_1}{\sqrt{d}}}\frac{1}{\sqrt{m}}\sum_{s=1}^ma_sP'\w_{s,0}(\sigma'(\w_s^\top(\x+\delta))-\sigma'(\w_{s,0}^\top(\x+\delta)))\bigg\|\\
&\quad\quad+\bigg\|\sup_{\delta\in\R^d, \|\delta\|\leq \frac{C_1}{\sqrt{d}}}\frac{1}{\sqrt{m}}\sum_{s=1}^ma_s\frac{(\x+\delta)(\x+\delta)^\top}{\|\x+\delta\|^2}\w_{s,0}(\sigma'(\w_s^\top(\x+\delta))-\sigma'(\w_{s,0}^\top(\x+\delta)))\bigg\|\\
&\leq \bigg\|\sup_{\substack{\delta\in\R^d,\|\delta\|\leq \frac{C_1}{\sqrt{d}} \\ \forall s\in[m], \v_s\in\R^d, \|\v_s\|\leq \frac{C_0}{\sqrt{m}}}}\frac{1}{\sqrt{m}}\sum_{s=1}^ma_s P' \w_{s,0}(\sigma'((\w_{s,0}+\v_s)^\top(\x+\delta))-\sigma'(\w_{s,0}^\top(\x+\delta)))\bigg\| \\
&\quad\quad+\sup_{\delta\in\R^d,\|\delta\|\leq \frac{C_1}{\sqrt{d}}}\frac{1}{\sqrt{m}}\sum_{s=1}^m|\sigma'(\w_s^\top(\x+\delta))-\sigma'(\w_{s,0}^\top(\x+\delta))|\bigg\|\frac{(\x+\delta)(\x+\delta)^\top}{\|\x+\delta\|^2}\w_{s,0}\bigg\| \\
&\leq \sqrt{\sup_{\substack{\forall s\in[m], \v_s\in\R^d,\\ \|\v_s\|\leq \frac{C_0}{\sqrt{m}}}}\frac{1}{m}\sum_{s=1}^m(\sigma'((\w_{s,0}+\v_s)^\top(\x+\delta))-\sigma'(\w_{s,0}^\top(\x+\delta)))}\sup_{\delta\in\R^d, \|\delta\|\leq \frac{C_1}{\sqrt{d}}}\|\z(\delta)\| \tag{Define $\z(\delta):=a_s \P' \w_{s,0}\sim N(0,\I_{d-1})$}\\
&\quad\quad+ \sup_{\delta\in\R^d,\|\delta\|\leq \frac{C_1}{\sqrt{d}}}\frac{1}{\sqrt{m}}|S_v'|\bigg\|\frac{(\x+\delta)(\x+\delta)^\top}{\|\x+\delta\|^2}\w_{s,0}\bigg\| \tag{By definition of $S_v'$ in Lemma \ref{lemma:ntk_diffsgn_wxchange}}\\
&\leq \sqrt{\frac1m|S_v'|}\cdot \sup_{\delta\in\R^d, \|\delta\|\leq \frac{C_1}{\sqrt{d}}}\|\z(\delta)\| \\
&\quad\quad\quad+ \frac{1}{\sqrt{m}}|S_v'|\cdot 2\max_{1\leq s\leq m}\bigg|\x^\top\w_{s,0}+\delta^\top\w_{s,0}\bigg| \tag{$\|\z(\delta)\|\leq\|\w_{s,0}\|$}\\
&\leq \sqrt{\frac{1+R}{\sqrt{m}}(C_0+\sqrt{\frac{\log(4/\gamma)}{2}})}\sqrt{d+4\sqrt{d\log(8m/\gamma)}} \\
&\quad\quad\quad+ (C_0+\sqrt{\log(4/\gamma)})(1+R)\cdot 2\bigg(\sqrt{\log(4m/\gamma)}+\frac{\sqrt{d+4\sqrt{d\log(8m/\gamma)}}}{\sqrt{d}}\bigg) \tag{$R\leq\frac{C_1}{\sqrt{d}}, d>\log(m/\gamma)\log(1/\gamma)$}\\
&\leq\sqrt{\frac{2}{\sqrt{m}}(C_0\!+\!\sqrt{\frac{\log(4/\gamma)}{2}})}\sqrt{d\!+\!4\sqrt{d\log(8m/\gamma)}} \!+\! 4(C_0\!+\!\sqrt{\log(4/\gamma)})\bigg(\sqrt{\log(4m/\gamma)}\!+\!3\bigg) 
\end{align*}
\endgroup
As a result, we get
\begingroup
\allowdisplaybreaks
\begin{align*}
\frac{1}{\sqrt{m}}\sum_{s=1}^mX_s
&\leq \frac{1}{\sqrt{m}}\sum_{s=1}^mX_{s,0} + |\frac{1}{\sqrt{m}}\sum_{s=1}^m (X_s-X_{s,0})|\\
&\leq 2\sqrt{\log(4/\gamma)}\bigg(\big(4R\sqrt{\log(d)}\big)^{1/4}+\sqrt{\frac{\log(4/\gamma)}{m}}\bigg) + C_0 \\
&\quad\quad+ 3\sqrt{\frac{1}{\sqrt{m}}(C_0+\sqrt{\frac{\log(4/\gamma)}{2}})}\sqrt{d+4\sqrt{d\log(8m/\gamma)}}\\
&\quad\quad+ 5(C_0+\sqrt{\log(4/\gamma)})\bigg(\sqrt{\log(4m/\gamma)}+\!3\bigg) 
\end{align*}
\end{proof}
\endgroup

\begin{lemma}\label{lemma:unif_grad_diff_bd}
Let $\|\delta\|\leq R\leq \frac{C_1}{\sqrt{d}}, \bar C = \max\{1, C_0\}, m\geq d^{2.4}$. Then, for any $\x$, with probability at least $1-\gamma$, the following holds for all $\W \in \mathcal{B}_{2,\infty}\left(\W_0, \frac{C_0}{\sqrt{m}}\right)$,
\begin{align*}
\sup_{\substack{r\in\bS^{d-1},\\\delta\in\R^d:\|\delta\|\leq R}} \frac{1}{\sqrt{m}}\sum_{s=1}^m a_s(\w_s\cdot r)(\sigma'(\w_s^\top\x)-\sigma'(\w_s^\top(\x+\delta))) 
&\leq 9\bigg(C_1d^2\log^2(md)\sqrt{\frac{\log(d)}{d}}\bigg)^{1/4}\\
&\quad\quad+\frac{15d\log(md)}{\sqrt{m}}+327\bar C C_0d^{0.25}
\end{align*}
provided that $d\geq \log(4/\gamma)^2$. Particularly, for any $c>0$, there exists $C, C'$ such that if $\log^2(md)\sqrt{\frac{\log(d)}{d}}\leq C, \frac{d\log(md)}{\sqrt{m}}\leq C'\sqrt{d}$, then we have,
\begin{align*}
\sup_{\substack{r\in\bS^{d-1}, \\ \delta\in\R^d, \|\delta\|\leq R}}\|\nabla f(\x;\a,\W)-\nabla f(\x+\delta;\a,\W)\|\leq c\sqrt{d}
\end{align*}
The above can realize when $m\leq O(\exp(d^{0.24}))$

\end{lemma}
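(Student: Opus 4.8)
The plan is to lift the fixed-direction estimate of Lemma~\ref{lemma:grad_diff_bd} to a bound that holds simultaneously over all $r\in\bS^{d-1}$ and all $\delta$ with $\|\delta\|\le R$, via an $\varepsilon$-net argument, after isolating the only piece that genuinely needs discretization. Writing $\frac{1}{\sqrt m}\sum_s X_s=\frac{1}{\sqrt m}\sum_s X_{s,0}+\frac{1}{\sqrt m}\sum_s(X_s-X_{s,0})$, the first observation I would make is that the proof of Lemma~\ref{lemma:grad_diff_bd} already controls the weight-deviation term $\frac{1}{\sqrt m}\sum_s(X_s-X_{s,0})$ \emph{uniformly} in $r$, $\delta$ and $\W\in\cB_{2,\infty}(\W_0,C_0/\sqrt m)$: after Cauchy--Schwarz the factor $\w_s\cdot r$ is replaced by $\|\w_s\|$, and all dependence on $\delta$ and on the weights is absorbed into the sign-flip counts $S_v,S_v'$ of Lemma~\ref{lemma:ntk_diffsgn_wxchange}, whose cardinality bounds hold over the entire ball at once. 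So this term carries over verbatim, with a single failure probability $\gamma$, and contributes the $O(\bar C C_0 d^{0.25})$-type terms without any netting.

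The only direction-dependent piece is the initialization term $\frac{1}{\sqrt m}\sum_s X_{s,0}$, bounded by Bernstein's inequality (Theorem~\ref{thm:berstein}) for fixed $r,\delta$. Here I would take an $\varepsilon$-net $\cN_r$ of $\bS^{d-1}$ and an $\varepsilon$-net $\cN_\delta$ of $\{\|\delta\|\le R\}$, of sizes $|\cN_r|\le(3/\varepsilon)^d$ and $|\cN_\delta|\le(3R/\varepsilon)^d$, and apply the Bernstein bound at each of the $N=|\cN_r||\cN_\delta|$ net points at failure level $\gamma/N$. The union bound replaces $\log(1/\gamma)$ in Lemma~\ref{lemma:grad_diff_bd} by $\log(N/\gamma)=O(d\log(1/\varepsilon))$, and taking $\varepsilon$ polynomially small (say $\varepsilon\asymp 1/(md)$) makes this $O(d\log(md))$. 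Substituted into the leading Bernstein term $2\sqrt{\log(1/\gamma)}\,(4R\sqrt{\log d})^{1/4}$ with $R\le C_1/\sqrt d$, this yields exactly the $\bigl(C_1 d^2\log^2(md)\sqrt{\log(d)/d}\bigr)^{1/4}$ term, while the remainder $\tfrac{2\log(1/\gamma)}{\sqrt m}$ becomes the $\tfrac{d\log(md)}{\sqrt m}$ term.

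It remains to pay the discretization error from replacing an arbitrary $(r,\delta)$ by its nearest net point $(r',\delta')$ inside $\frac{1}{\sqrt m}\sum_s X_{s,0}$, which I would split into an $r$-part and a $\delta$-part. The $r$-part is crude: $|\w_{s,0}\cdot(r-r')|\le\varepsilon\|\w_{s,0}\|$ and $|\sigma'-\sigma'|\le 1$ give at most $\varepsilon\sqrt m\,\max_s\|\w_{s,0}\|=O(\varepsilon\sqrt{md})$, negligible for $\varepsilon\asymp 1/(md)$. The $\delta$-part is delicate because $\sigma'$ is a step function: the difference is supported on neurons whose sign flips between $\x+\delta$ and $\x+\delta'$, and I would count them with Lemma~\ref{lemma:diff_sign}, equation~(\ref{eq:sign_diff_eps}), taken \emph{with $V=0$} since $X_{s,0}$ involves only the fixed initialization $\w_{s,0}$. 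This removes the dangerous $(1+R+\varepsilon)V$ contribution, leaving $2\varepsilon(\sqrt d+2\sqrt{d\log(2/\varepsilon)})$ per neuron; after a Hoeffding concentration of the count and multiplication by $\tfrac1{\sqrt m}\max_s|\w_{s,0}\cdot r'|$, the $\delta$-part is again $o(1)$. Collecting the three surviving contributions and using $\|h\|=\sup_{r\in\bS^{d-1}} r\cdot h$ to identify the supremum with $\sup_\delta\|\nabla f(\x)-\nabla f(\x+\delta)\|$, the conditions fall out: $m\ge d^{2.4}$ forces $\tfrac{d\log(md)}{\sqrt m}\le C'\sqrt d$, and $m\le O(\exp(d^{0.24}))$ keeps $\log^2(md)\sqrt{\log(d)/d}$ bounded so the leading term stays below $c\sqrt d$.

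I expect the main obstacle to be precisely the $\delta$-discretization. Because $\sigma'$ is non-smooth, one cannot use a derivative bound and must argue by counting sign flips; the crucial and easily-mishandled point is that for the initialization term this count is taken at the \emph{fixed} weights $\w_{s,0}$, i.e. with $V=0$ in~(\ref{eq:sign_diff_eps}), while the weight-induced flips are dealt with separately and uniformly through $S_v,S_v'$. Conflating these two sources of sign changes would reinstate a $(1+R)V\cdot m$ term that, after the $\tfrac1{\sqrt m}$ and $\max_s\|\w_{s,0}\|\asymp\sqrt d$ factors, reintroduces an $\Omega(\sqrt d)$ error and destroys the $o(\sqrt d)$ conclusion.
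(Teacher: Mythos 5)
Your overall architecture is sound and is in fact a mild streamlining of the paper's argument: the paper nets the \emph{entire} quantity $\Phi(r,\delta)=\frac{1}{\sqrt m}\sum_s X_s$ over $\Omega=\{(r,\delta)\}$ and therefore has to control a discretization error that still involves the perturbed weights (which is why its sign-flip probability carries the $(1+R+\varepsilon)V$ term and why it ends up counting $O(\sqrt m)$ flipped neurons), whereas you correctly observe that the $\frac{1}{\sqrt m}\sum_s(X_s-X_{s,0})$ part of Lemma~\ref{lemma:grad_diff_bd} is already proved uniformly in $r$, $\delta$ and $\W$, so only the initialization term needs discretization. Your accounting of how the union bound turns $\log(1/\gamma)$ into $O(d\log(md))$ and produces the first two terms of the stated bound is also correct. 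The problem is the step you yourself flag as the main obstacle: the $\delta$-part of the discretization error. There, Hoeffding is not strong enough. The flip indicators have mean at most $p_0=2\varepsilon(\sqrt d+2\sqrt{d\log(2/\varepsilon)})$, so $p_0m=O(\sqrt{\log(md)/d})$ is tiny, but after the union bound over the net of size $(10/\varepsilon)^{2d}$ Hoeffding's additive deviation is $\sqrt{m\log(N/\gamma)/2}\approx\sqrt{md\log(md)}$, which completely swamps the mean. Feeding that count into your proposed bound $\frac{|S|}{\sqrt m}\max_s|\w_{s,0}\cdot r'|$ with $\max_s|\w_{s,0}\cdot r'|=\Theta(\sqrt{d\log(md)})$ gives a discretization error of order $d\log(md)$ --- not the claimed $o(1)$, and larger than the entire $\sqrt d$ budget.

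The repair is exactly the tool the paper uses: a multiplicative binomial tail (Theorem~\ref{thm:binomial}) rather than Hoeffding, which exploits the smallness of $p_0m$ to give $|S|=O(d\log(md))$ (or, in the paper's version with $V=C_0/\sqrt m$ included, $|S|\le 44\bar C\sqrt m$) simultaneously over the net; with $m\ge d^{2.4}$ the crude count-times-max bound then yields $O(d^{0.3}\,\mathrm{polylog})=o(\sqrt d)$ under your $V=0$ variant. Two further remarks. First, your closing warning mischaracterizes the alternative: the paper \emph{does} keep the $(1+R+\varepsilon)V$ contribution and still obtains $o(\sqrt d)$, because it does not use count-times-max but the Gaussian concentration of Theorem~\ref{thm:concentration_lip} over all subsets $S$ with $|S|\le 44\bar C\sqrt m$, giving $\|\frac{1}{\sqrt m}\sum_{s\in S}a_s\w_{s,0}\|\le\sqrt{|S|/m}\,(\sqrt d+t)=O(\sqrt{\log m}+\sqrt d\,m^{-1/4})$; so dropping $V$ is a simplification, not a necessity. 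Second, the paper's choice $\varepsilon=m^{-4/7}d^{-4/7}$ is tuned precisely so that the Chernoff exponent $-p m=-\Theta(\sqrt m)$ dominates the $2d\log(10/\varepsilon)$ union-bound cost (using $\sqrt m\gtrsim d\log(md)$, i.e.\ $m\ge d^{2.4}$); if you keep your $\varepsilon\asymp 1/(md)$ you must instead balance $k\log(k/(ep_0m))$ against $d\log(md)$ directly, which works but needs to be written out.
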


\begin{proof}
Define $\Phi(r, \delta)=\frac{1}{\sqrt{m}}\sum_{s=1}^m a_s(\w_s\cdot r)(\sigma'(\w_s^\top\x)-\sigma'(\w_s^\top(\x+\delta)))$, and $N_\varepsilon$ an $\varepsilon$-net for $\Omega=\{(r,\delta),\|r\|=1, \|\delta\|\leq R\}$. In Lemma \ref{lemma:grad_diff_bd} we bound $\Phi(r, \delta)$ for a fixed $r$ and $\delta$, here we bounded it uniformly over $\Omega$. We know that $|N_\varepsilon|\leq(10/\varepsilon)^{2d}$. Using Lemma \ref{lemma:grad_diff_bd}, we obtain with probability at least $1-\gamma$,
\begin{equation}\label{eq:combo}
\begin{aligned}
\sup_{(r,\delta)\in \Omega}\Phi(r,\delta)
&\leq \sup_{(r,\delta)\in N_\varepsilon}\Phi(r,\delta) + \sup_{(r,\delta), (r',\delta')\in\Omega: \|r-r'\|+\|\delta-\delta'\|\leq\varepsilon}|\Phi(r,\delta)-\Phi(r',\delta')|\\
&\leq 2\sqrt{2d\log(10/\varepsilon)\!+\!\log(4/\gamma)}\bigg(\big(4R\sqrt{\log(d)}\big)^{1/4}\!+\!\sqrt{\frac{2d\log(10/\varepsilon)\!+\!\log(4/\gamma)}{m}}\bigg) \\ 
&\quad\quad +C_0+ 3\sqrt{\frac{1}{\sqrt{m}}(C_0+\sqrt{\frac{\log(4/\gamma)}{2}})}\sqrt{d+4\sqrt{d\log(8m/\gamma)}}\\ 
&\quad\quad+ 5(C_0\!+\!\sqrt{\log(4/\gamma)})\!\cdot\! \bigg(\sqrt{\log(4m/\gamma)}\!+\!3\bigg) 
+\!\!\!\!\!\!\sup_{\substack{(r,\delta), (r',\delta')\in\Omega:\\ \|r-r'\|+\|\delta-\delta'\|\leq\varepsilon}}\!\!\!\!\!\!|\Phi(r,\delta)\!-\!\Phi(r',\delta')|
\end{aligned}
\end{equation}
For $r,r'$, one has
\begin{align*}
|\Phi(r, \delta)-\Phi(r', \delta)|
\leq \frac{\|r-r'\|}{\sqrt{m}}\sum_{s=1}^m\|\w_s\|
\leq \frac{\|r-r'\|}{\sqrt{m}}\sum_{s=1}^m(\|\w_{s,0}\|+\frac{C_0}{\sqrt{m}})
\end{align*}

Using (\ref{eq:chiineq}), we know with probability at least $1-\gamma$, one has for all $s\in[m]$,
\begin{align*}
\|\w_{s,0}\|^2\leq d+4\sqrt{d\log(m/\gamma)},
\end{align*}
so that in this event we have,
\begin{align*}
|\Phi(r, \delta)-\Phi(r', \delta)|\leq \|r-r'\|\bigg(\sqrt{md+4m\sqrt{d\log(m/\gamma)}}+C_0\bigg)
\end{align*}

On the other hand, for $\delta, \delta'$, we write
\begingroup
\allowdisplaybreaks
\begin{align*}
&|\Phi(r, \delta)-\Phi(r, \delta')|\\
&\quad\leq \frac{1}{\sqrt{m}}\bigg|\sum_{s=1}^m\1\{\sign(\w_s^\top(\x+\delta))>\sign(\w_s^\top(\x+\delta'))\}a_s\w_s\cdot r\bigg| \\
&\quad+ \frac{1}{\sqrt{m}}\bigg|\sum_{s=1}^m\1\{\sign(\w_s^\top(\x+\delta))<\sign(\w_s^\top(\x+\delta'))\}a_s\w_s\cdot r\bigg|\\
&\leq \frac{1}{\sqrt{m}}\bigg\|\!\!\!\sup_{\substack{\forall s\in[m],\\ \v_s\in\R^d, \|\v_s\|\leq V}}\!\!\!\sum_{s=1}^m\1\{\sign((\w_{s,0}+\v_s)^\top(\x+\delta))>\sign((\w_{s,0}+\v_s)^\top(\x+\delta'))\}a_s\w_s\bigg\| \\
&\quad+ \frac{1}{\sqrt{m}}\bigg\|\!\!\!\sup_{\substack{\forall s\in[m],\\ \v_s\in\R^d, \|\v_s\|\leq V}}\!\!\!\sum_{s=1}^m\1\{\sign((\w_{s,0}+\v_s)^\top(\x+\delta))<\sign((\w_{s,0}+\v_s)^\top(\x+\delta'))\}a_s\w_s\bigg\|
\end{align*}
\endgroup

Letting $X_s(\delta)=\1\{\exists \delta':\|\delta-\delta'\|\leq\varepsilon$ and $\exists \v_s\in\R^d, \|\v_s\|\leq V, \sign((\w_{s,0}+\v_s)^\top(\x+\delta))\neq\sign((\w_{s,0}+\v_s)^\top(\x+\delta'))\}$, we now control with exponentially high probability $\sum_{s=1}^m X_s(\delta)$. By (\ref{eq:sign_diff_eps}) in Lemma \ref{lemma:diff_sign}, we know that $X_s(\delta)$ is a Bernoulli of parameter at most 
$p=2\varepsilon(\sqrt{d}+2\sqrt{d\log(2/\varepsilon)})+(1+R+\varepsilon)V$. 
Using Theorem \ref{thm:binomial} and apply a union bound, we have
\begin{align*}
P\bigg(\exists(r,\delta)\in N_{\varepsilon}:\sum_{s=1}^m X_s(\delta)\geq k \bigg) \leq \bigg(\frac{10}{\varepsilon}\bigg)^{2d}\exp(-pm)\bigg(\frac{epm}{k}\bigg)^k:=g(p)
\end{align*}

We would like $g(p)\leq \gamma$. Since $\sqrt{\varepsilon}(1+2\sqrt{\log(2/\varepsilon)})\leq 4\varepsilon^{3/8}$ holds for $\varepsilon\in(0,1)$, we keep upper bound $p\leq 8\sqrt{d}\varepsilon^{7/8}+3V$. Note that $p\leq\frac{k}{m}$ to make sure $g(p)$ increases as $p$ increases. Choose $\varepsilon=m^{-4/7}d^{-4/7}$,  $V=\frac{C_0}{\sqrt{m}}$, $\bar C = \max\{C_0,1\}$, $k=44\bar C\sqrt{m}$, with $m\geq 58, m\geq d^{2.4}$, we have
\begin{align*}
\log g(8\sqrt{d}\varepsilon^{7/8}\!+\!\frac{3C_0}{\sqrt{m}})
&=-pm + 2d\log\bigg(\frac{10}{\varepsilon}\bigg)+44\bar C\sqrt{m}\log\bigg(\frac{epm}{44\bar C\sqrt{m}}\bigg) \tag{Here $p=8\sqrt{d}\varepsilon^{7/8}+\frac{3C_0}{\sqrt{m}}\leq\frac{11\bar C}{\sqrt{m}}$}\\
&= -(8\sqrt{d}\varepsilon^{7/8} \!+\! \frac{3C_0}{\sqrt{m}})m \!+\! 2d\log(10m^{4/7}d^{4/7}) \!+\! 44\bar C\sqrt{m}\log\!\bigg(\!\frac{11e\bar C\sqrt{m}}{44\bar C\sqrt{m}}\!\bigg)\\
&\leq -8\sqrt{m}+2.3d\log(md) \tag{$m\geq 58$}\\
&\leq -8\sqrt{m}+7\sqrt{m}\tag{$m\geq d^{2.4}\implies \sqrt{m}\geq 0.33 d\log(md)$}\\
&\leq -\sqrt{m}\\
&\leq\log(\gamma)
\end{align*}
The last line holds for $m\geq d^{2.4}$ and $d\geq\log(4m/\gamma)^2\geq\log(4/\gamma)^2$.

By the concentration of Lipschitz functions of Gaussians (Theorem \ref{thm:concentration_lip}) and a union bound, we have
\begin{align*}
P\bigg(\exists S\in[m], |S|\leq 44\bar C\sqrt{m}, \bigg\|\frac{1}{\sqrt{m}}\sum_{s\in S} a_s\w_{s,0}\bigg\|\geq\sqrt{\frac{|S|}{m}}(\sqrt{d}+t)\bigg)\leq m^{44\bar C\sqrt{m}}e^{-\frac{t^2}{2}}
\end{align*}

By setting $t=\sqrt{88\bar C\sqrt{m}\log(m)+2\log(\frac{8}{\gamma})}$, we get that with probability at least $1-\gamma/8$,
\begin{align*}
\forall S\subset[m], |S|\leq 44\bar C\sqrt{m}:\!
\bigg\|\frac{1}{\sqrt{m}}\sum_{s\in S}a_s\w_{s,0}\bigg\|\!\leq\! \sqrt{\frac{44\bar C}{\sqrt{m}}} \sqrt{88\bar C\sqrt{m}\log(m)\!+\!2\log(\frac{8}{\gamma})}\!+\!\sqrt{\frac{44\bar C d}{\sqrt{m}}}
\end{align*}

\begin{align*}
\forall S\subset[m], |S|\leq 44\bar C\sqrt{m}: &\bigg\|\sup_{\v_s\in\R^d, \|\v_s\|\leq \frac{C_0}{\sqrt{m}}}\frac{1}{\sqrt{m}}\sum_{i\in S}a_s\w_s\bigg\|\\
&\leq\bigg\|\frac{1}{\sqrt{m}}\sum_{i\in S}a_s\w_{s,0}\bigg\| + \bigg\|\sup_{\v_s\in\R^d, \|\v_s\|\leq \frac{C_0}{\sqrt{m}}}\frac{1}{\sqrt{m}}\sum_{i\in S}a_s\v_s\bigg\|\\
&\leq \sqrt{\frac{44\bar C}{\sqrt{m}}} \sqrt{88\bar C\sqrt{m}\log(m)+2\log(\frac{8}{\gamma})}+\sqrt{\frac{44\bar C d}{\sqrt{m}}} + \frac{44\bar C C_0}{\sqrt{m}}\\
&\leq 113\bar C C_0\sqrt{\log(m)+\frac{2}{\sqrt{m}}\log(\frac{8}{\gamma})}
\end{align*}

Note that for all $(r,\delta)\in N$, $\|\delta-\delta'\|\leq \varepsilon$,
\begin{align*}
\sup_{\v_s\in\R^d,\|\v_s\|\leq \frac{C_0}{\sqrt{m}}}\1\{\sign((\w_{s,0}+\v_s)^\top(\x+\delta))<\sign((\w_{s,0}+\v_s)^\top(\x+\delta'))\}\leq X_s(\delta)\\
\sup_{\v_s\in\R^d,\|\v_s\|\leq \frac{C_0}{\sqrt{m}}}\1\{\sign((\w_{s,0}+\v_s)^\top(\x+\delta))>\sign((\w_{s,0}+\v_s)^\top(\x+\delta'))\}\leq X_s(\delta)
\end{align*}

With probability at least $1-\gamma$, we have for all $\delta, \delta', r, r'$ with $\|\delta-\delta'\|\leq m^{-4/7}d^{-4/7}$ and $\|r-r'\|\leq m^{-4/7}d^{-4/7}$,
\begin{align*}
|\Phi(r, \delta) - \Phi(r',\delta)|&\leq m^{-4/7}d^{-4/7}\bigg(\sqrt{md+4m\sqrt{d\log(m/\gamma)}}+C_0\bigg)\\
|\Phi(r, \delta) - \Phi(r,\delta')|
&\leq 226\bar C C_0\sqrt{\log(m)+\frac{2}{\sqrt{m}}\log(\frac{8}{\gamma})}
\end{align*}

Combining this with (\ref{eq:combo}) we obtain with probability at least $1-\gamma$,

\begin{align*}
&\sup_{(r,\delta)\in \Omega}\Phi(r,\delta)\\
&\leq 2\sqrt{2d\log(10m^{4/7}d^{4/7})\!+\!\log(4/\gamma)}\bigg(\big(4R\sqrt{\log(d)}\big)^{1/4}\!+\!\sqrt{\frac{2d\log(10m^{4/7}d^{4/7})\!+\!\log(4/\gamma)}{m}}\bigg)\\
&\quad +C_0+ 3\sqrt{\frac{1}{\sqrt{m}}(C_0+\sqrt{\frac{\log(4/\gamma)}{2}})}\sqrt{d+4\sqrt{d\log(8m/\gamma)}}  \\ 
&\quad+ 5(C_0\!+\!\sqrt{\log(4/\gamma)})\cdot \bigg(\sqrt{\log(4m/\gamma)}\!+\!3\!\bigg)\!+\!m^{-4/7}d^{-4/7}\bigg(\sqrt{md\!+\!4m\sqrt{d\log(m/\gamma)}}\!+\!C_0\!\bigg) \\
&\quad+ 226\bar C C_0\sqrt{\log(m)+\frac{2}{\sqrt{m}}\log(\frac{8}{\gamma})}\\ 
&\leq 6\sqrt{d\log(md)}\!\bigg(\!(4R\sqrt{\log(d)})^{1/4}\!+\!\sqrt{\frac{6d\log(md)}{m}}\!\bigg) \!+\! 50d^{0.25} \!+\! 227\bar C C_0\sqrt{\log(m)\!+\!\frac{2}{\sqrt{m}}\log(\frac{8}{\gamma})}\\
&\leq 9\bigg(C_1d^2\log^2(md)\sqrt{\frac{\log(d)}{d}}\bigg)^{1/4}+\frac{15d\log(md)}{\sqrt{m}}+327\bar C C_0 d^{0.25}\tag{$R\leq\frac{C_1}{\sqrt{d}}$}\\
&=o(\sqrt{d}) \tag{holds when $\log^2(md)\sqrt{\frac{\log(d)}{d}}=o(1), \frac{d\log(md)}{\sqrt{m}}=o(\sqrt{d})\implies m\leq O(\exp(d^{0.24}))$}
\end{align*}
 
\end{proof}

\begin{proof}[Proof of Corollary \ref{coro:roberrlbd}]
From  \cref{thm:main}, we have that for any point $\x \in \bS^{d-1}$, with probability at least $1-\gamma$, there exists an adversarial example for the neural network, with parameters $(\W, \a)$, at $x$ for perturbation size $R$. 
Thus, with probability at least $1-\gamma$,
the robust error of $\W$ at $\x$ with perturbation $R$ is one:
$ \ell_R(\W;\x,y)=1$.
This gives us that $\mathbb{E}_{\W}[\sup_{\W \in \cB_{2,\infty}\left(\W_0,\frac{C_0}{\sqrt{d}}\right)} \ell_R(\W;\x,y)] \geq 1- \gamma$.
From Markov's inequality,
\begin{align*}
\mathbb{P}\left[\sup_{\W \in \cB_{2,\infty}\left(\W_0,\frac{C_0}{\sqrt{d}}\right)}  L_R(\W)<0.9\right] 
&= \mathbb{P}\left[1- \sup_{\W \in \cB_{2,\infty}\left(\W_0,\frac{C_0}{\sqrt{d}}\right)} L_R(\W)>0.1\right] \\
&\leq 10\mathbb{E}_{\W}\left[1-\sup_{\W \in \cB_{2,\infty}\left(\W_0,\frac{C_0}{\sqrt{d}}\right)}  L_R(\W)\right] \\
&= 10\mathbb{E}_x\mathbb{E}_{\W} \left[1-\sup_{\W \in \cB_{2,\infty}\left(\W_0,\frac{C_0}{\sqrt{d}}\right)}  \ell_R(\W;\x,y)\right]\\
&\leq 10\sup_{x}\mathbb{E}_{\W}\left[1- \sup_{\W \in \cB_{2,\infty}\left(\W_0,\frac{C_0}{\sqrt{d}}\right)} \ell_R(\W;\x,y)\right]\\
&\leq 10\gamma
\end{align*}
 
where in the second equality we swap the expectations using Fubini's theorem.
\end{proof} 

\end{document}